\newif\ifanonymoussubmission

\documentclass{article} 
\usepackage{conference,times}

\usepackage{amsmath,amsfonts,bm}

\def\eqref#1{equation~\ref{#1}}

\def\1{\bm{1}}

\DeclareMathAlphabet{\mathsfit}{\encodingdefault}{\sfdefault}{m}{sl}
\SetMathAlphabet{\mathsfit}{bold}{\encodingdefault}{\sfdefault}{bx}{n}

\DeclareMathOperator{\Tr}{Tr}

\usepackage{amsmath}
\usepackage{amssymb}
\usepackage{amsthm}
\usepackage{graphicx}
\usepackage{booktabs}
\usepackage{diagbox}
\usepackage{multirow}
\usepackage{makecell}
\usepackage{wrapfig}
\usepackage{mathtools}
\usepackage{nicefrac}
\usepackage{enumitem}
\usepackage{tabularx}
\usepackage{array}
\usepackage{tcolorbox}
\usepackage{placeins}
\usepackage{longtable}
\usepackage{xcolor}
\usepackage{todonotes}
\usepackage{algorithm,algorithmic}
\usepackage[hypertexnames=false]{hyperref}
\usepackage[capitalize,nameinlink,noabbrev]{cleveref}
\usepackage{autonum}

\definecolor{conferencegreen}{rgb}{0.0, 0.5, 0.5}

\definecolor{heldoutcolor}{rgb}{0.6, 0.15, 0.65}
\newcommand{\heldout}[1]{{\color{heldoutcolor}#1}}

\newtcolorbox{checkpointcheck}{
    colback=black!2,
    colframe=black!18,
    boxrule=0.35pt,
    arc=0pt,
    boxsep=0pt,
    left=6pt,
    right=6pt,
    top=3pt,
    bottom=3pt,
    before skip=4pt,
    after skip=4pt,
    before upper={\setlength{\parskip}{0pt}}
}

\hypersetup{
    colorlinks=true,
    citecolor=conferencegreen,    
    linkcolor=conferencegreen,     
    urlcolor=conferencegreen       
}

\newcolumntype{C}{>{\centering\arraybackslash}X}

\newtheorem{theorem}{Theorem}
\newtheorem{proposition}[theorem]{Proposition}
\newtheorem{remark}[theorem]{Remark}

\crefname{proposition}{Proposition}{Propositions}
\crefname{remark}{Remark}{Remarks}

\title{How Bregman Divergences Shape Shampoo}

\author{%
Bing Liu\textsuperscript{1},
Wenjie Zhou\textsuperscript{2},
Chengcheng Zhao\textsuperscript{\textdagger \ 1},
Hongtao Zhang\textsuperscript{2},
\textbf{Boao Kong\textsuperscript{3}},\\
\textbf{Felix Dangel\textsuperscript{*\ 4}, and Wu Lin\textsuperscript{*\ 5}}\\[5pt]
\textsuperscript{1}Zhejiang University;
\textsuperscript{2}University of the Chinese Academy of Sciences;
\textsuperscript{3}Peking University\\
\textsuperscript{4}Concordia University \& Mila;
\textsuperscript{5}University of Central Florida
}

\AddToHook{cmd/maketitle/after}{%
  \ifanonymoussubmission\else
    \begingroup
    \footnotetext[1]{Joint last authors.}%
    \footnotetext[2]{Corresponding author:
      \href{mailto:chengchengzhao@zju.edu.cn}{\texttt{chengchengzhao@zju.edu.cn}}.}%
    \endgroup
  \fi
}

\ifanonymoussubmission
  \hypersetup{pdfauthor={}}
\else
  \iclrfinalcopy
  \hypersetup{pdftitle={How Bregman Divergences Shape Shampoo},
    pdfauthor={Bing Liu, Wenjie Zhou, Chengcheng Zhao, Hongtao Zhang, Boao Kong, Felix Dangel, Wu Lin}}
  \AddToHook{cmd/maketitle/after}{\lhead{Preprint}}
\fi

\makeatletter
\newif\ifauthorcomment@inline
\AddToHook{cmd/@makefntext/before}{\authorcomment@inlinetrue}

\AddToHook{env/checkpointcheck/begin}{\authorcomment@inlinetrue}
\NewDocumentCommand{\authorcomment}{mmm}{%
  \ifinner\authorcomment@inlinetrue\fi
  \@ifundefined{@captype}{}{\authorcomment@inlinetrue}%
  \ifauthorcomment@inline
    {\color{#1} [\textbf{#2:} #3]}%
  \else
    \ifhmode\unskip\fi
    \todo[color=#1!15,bordercolor=#1!50!black,linecolor=#1!50!black,size=\scriptsize]{\textbf{#2:} #3}%
  \fi}
\makeatother

\begin{document}
\edef\restorefloatspacing{%
  \noexpand\setlength{\noexpand\textfloatsep}{\the\textfloatsep}%
  \noexpand\setlength{\noexpand\floatsep}{\the\floatsep}%
  \noexpand\setlength{\noexpand\intextsep}{\the\intextsep}%
}
\raggedbottom
\setlength{\textfloatsep}{8pt plus 2pt minus 2pt}
\setlength{\floatsep}{6pt plus 1pt minus 1pt}
\setlength{\intextsep}{8pt plus 2pt minus 2pt}
\AddToHook{env/figure/begin}[main-float-spacing]{\setlength{\parskip}{0pt}\setlength{\abovecaptionskip}{1pt}\setlength{\belowcaptionskip}{0pt}}
\AddToHook{env/wrapfigure/begin}[main-wrap-spacing]{\setlength{\parskip}{0pt}\setlength{\intextsep}{2pt}\setlength{\abovecaptionskip}{2pt}\setlength{\belowcaptionskip}{0pt}}
\AddToHook{env/table/begin}[main-float-spacing]{\setlength{\parskip}{0pt}\setlength{\abovecaptionskip}{1pt}\setlength{\belowcaptionskip}{1pt}}
\addtocontents{toc}{\protect\setcounter{tocdepth}{-1}}

\maketitle

\begin{abstract}

Understanding the principles behind Shampoo has recently guided the development of more effective neural network optimizers.
These methods learn a preconditioner by optimizing the Frobenius or Kullback-Leibler (KL) divergence against the gradient second moment.
In this work, we investigate how the choice of divergence shapes preconditioning, which remains unclear and blocks further improvements.
To do so, we develop a unified Bregman divergence framework that connects all popular divergences, allowing us to study them jointly.
Through empirical spectral analysis of gradient second moments, we examine how divergence choice shapes Kronecker approximation and interacts with finite-sample error in preconditioning. We find that some divergences can better compensate for finite-sample underestimation of the empirical second moment, helping explain the differing behavior of their corresponding Shampoo variants.
We further validate this explanation through GPT-2 pretraining experiments.
By connecting divergence choice to practical training behavior, we believe our framework provides principled guidance for understanding the foundations of, and further improving, Shampoo.

\end{abstract}

\section{Introduction}
\label{sec:introduction}

Shampoo's empirical success~\citep{dahl2023benchmarking,kasimbeg2025accelerating} {raises a fundamental question: what makes its preconditioner effective? This question} has motivated a growing body of work~\citep{morwaninew2025,vyas2025soap,an2025asgo,eschenhagen2026purifying,eschenhagen2026clarifying,lin2026understanding} with the goal of developing more effective neural network optimizers.
A promising approach is to {view} Shampoo's update scheme as a Kronecker-factored approximation of the full-matrix gradient second moment via Bregman divergence minimization~\citep{lin2026understanding}.
{This approach recovers} existing variants and produces new ones, including F-Shampoo with Frobenius divergence, VN-Shampoo with von Neumann divergence, and KL-Shampoo with Kullback-Leibler divergence.
Recent studies have reported strong performance of KL-Shampoo in LLM pre-training~\citep{eschenhagen2026clarifying,zhang2026onlineklshampoo}.
However, the variants {derived from} different divergences perform very differently in practice (\cref{fig:intro_training_motivation}, right).
{The framework of} \citet{lin2026understanding} does not explain this performance gap.
Given that improvements {to} Shampoo have been driven by understanding its principles, we argue that explaining this gap is essential for guiding the design of better variants, since many other Bregman divergences remain to be explored and evaluated.
For example, preconditioned stochastic gradient descent~\citep[PSGD,][]{li2017preconditioned,li2019preconditioner} is a related method that has shown promising results~\citep{frans2026stable} but whose success remains poorly understood.
Surprisingly, we show that PSGD's strong performance can also be understood through its connection to a new Bregman divergence.
Thus, our central question is:
\begin{center}
\emph{\textcolor{blue}{How and why} do different divergence objectives lead to \textcolor{blue}{differences in} training performance?}
\end{center}


To answer this question, we consider effects that are often conflated in practice.
(1) The Kronecker factorization does not exactly capture the second moment. 
(2) Finite samples introduce errors both in the second moment and in the estimated Kronecker factors.
(3) They also interact with each other when used for preconditioning in practice.
Divergence choice determines how these effects shape the resulting preconditioner.
Thus, we decompose the question into the following sub-questions:
\begin{itemize}[label=(Q1),leftmargin=*,labelsep=0.5em]
    \item[(Q1)] \emph{How does divergence choice shape approximation errors?}

    \item[(Q2)] \emph{How does finite sampling affect Kronecker approximation?}

    \item[(Q3)] \emph{How do divergence choices and finite sampling jointly affect preconditioning?}
\end{itemize}

By answering these questions, we make the following contributions:
\begin{itemize}[label=(C1),labelsep=0.5em,leftmargin=1em,itemindent=*]
    \item[(C1)] \textbf{A unified analysis of Kronecker approximation.}
    We unify the three divergences from \citet{lin2026understanding} into a \emph{trace-power Bregman family} and introduce a fourth, the square-root (SQ) divergence, which recovers the gradient-whitening objective of PSGD and yields SQ-Shampoo.
    We show how the family exponent determines which parts of the spectrum are prioritized, and whether over- or underestimation is penalized more (\cref{tab:spectral_preferences}).

    \item[(C2)] \textbf{Finite-sample effects on Kronecker approximation.}
    Finite sampling enters in two places.
    On GPT-2, we show that empirical gradient second moments severely underestimate tail eigenvalues even when gradients outnumber dimensions (\cref{fig:intro_training_motivation}, left). 
    As for the Kronecker factor estimation, the same exponent $q_D$ also controls the finite-sample error, with KL achieving the lowest error (\cref{tab:spectral_preferences}, last row).

    \item[(C3)] \textbf{Explaining differences in inverse-root preconditioning.}
    Last, we study how divergence-dependent approximation (C1) interacts with the two finite-sample effects in (C2) during preconditioning.
    The divergence properties help explain why SQ and KL better compensate for underestimated empirical eigenvalues (\cref{fig:intro_training_motivation}, middle) and more reliably recover their large-sample gains from short gradient histories (\cref{tab:real_state_stability}).
    Consistently, SQ and KL also achieve lower GPT-2 validation losses than F and VN, with KL's advantage growing at larger widths.
\end{itemize}

Together, these results help explain how divergences affect preconditioning and training performance, and provide practical criteria for designing better Kronecker-factored optimizers.

\begin{table}[t]
\centering
\setlength{\parskip}{0pt}
\setlength{\abovecaptionskip}{3pt}
\setlength{\belowcaptionskip}{3pt}
\caption{Overview of the divergence properties studied in this work.
The exponent $q_{\mathrm D}$ indexes the trace-power family; spectral weighting and error asymmetry determine how approximation error is allocated, while few-sample error measures the stability of Kronecker factor estimation.}
\label{tab:spectral_preferences}
\small
\setlength{\tabcolsep}{3pt}
\begin{tabularx}{\linewidth}{@{}lCCCC@{}}
\toprule
\smash{\diagbox[trim=l,width=\widthof{Weight on large eigenvalues}+\tabcolsep,height=\dimexpr\normalbaselineskip+\aboverulesep+\belowrulesep\relax]{\textbf{Property}}{\textbf{Divergence}}} & \textbf{Frobenius} & \textbf{von Neumann} & \textbf{Square root} & \textbf{KL} \\
\midrule
Trace-power exponent $q_{\mathrm D}$ & $2$ & $1$ (limit) & $\nicefrac{1}{2}$ & $0$ (limit) \\
Weight on large eigenvalues & Strong & Moderate & Mild & None \\
Penalized more & Overestimation & Overestimation & Symmetric & Underestimation \\
Error with few samples & Highest & Higher & Lower & Lowest \\
\bottomrule
\end{tabularx}
\end{table}

\begin{figure}[t]
\centering
\includegraphics[width=\linewidth]{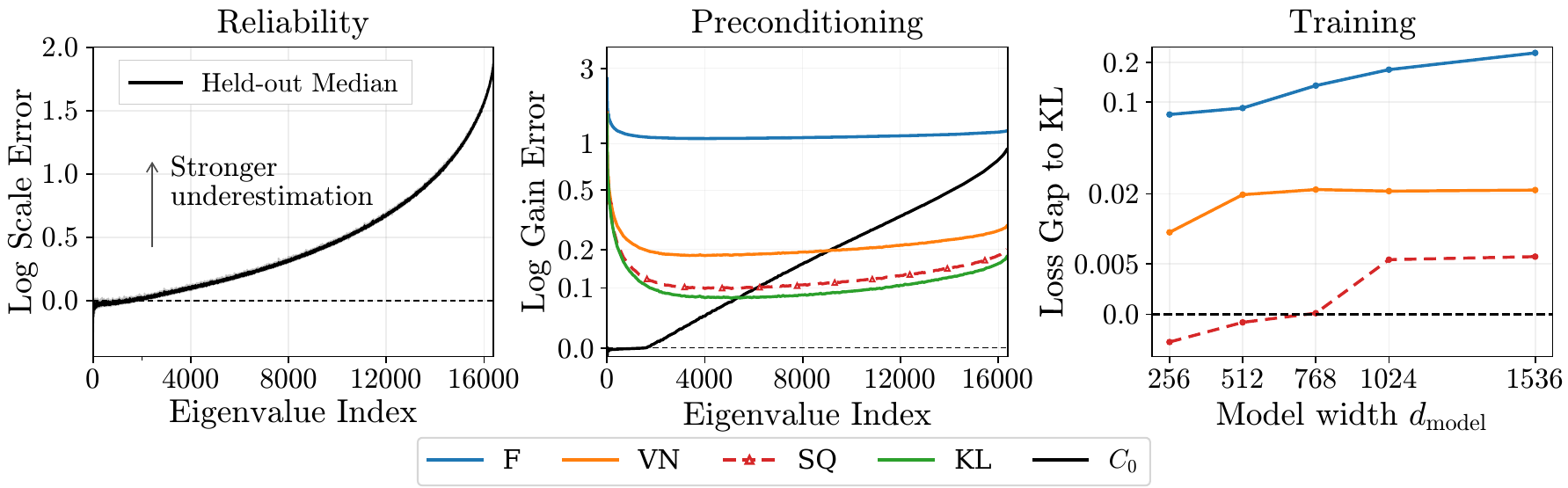}\par
\caption{\textbf{Divergences shape preconditioning and training.}
Left and middle: A GPT-2 checkpoint (\texttt{output\_proj}) at step $2{,}500$, ordered by decreasing eigenvalue.
Left: Empirical second-moment ($C_0$) eigenvalues become less reliable toward the tail, increasingly underestimating held-out scales (\cref{sec:real_second_moment_reliability}).
Middle (smoothed curves, asinh y-axis for better resolution near zero): KL stays closest to the held-out target (dashed zero line), followed by SQ, compensating $C_0$'s scale errors (\cref{sec:from_approx_to_precondition}).
Right: GPT-2 validation loss relative to KL with asinh y-axis; lower is better (\cref{sec:gpt2_training}).}
\label{fig:intro_training_motivation}
\end{figure}

\section{Background}
\label{sec:preliminaries}
\textbf{Notation.}
\label{sec:notation}
Let $G\in\mathbb R^{m\times n}$ be the stochastic gradient of a matrix parameter $\Theta$.
We use row-wise vectorization $g=\operatorname{vec}(G)$ and write $\smash{C:=\mathbb E[gg^\top]\in\mathbb S_{++}^{mn}}$ for its gradient second moment, where $\smash{\mathbb S_{++}^d}$ denotes symmetric positive-definite (SPD) matrices. For simplicity, we omit momentum, as in prior work. 

\paragraph{Kronecker Approximation and Preconditioning}
For factors $L\in\mathbb S_{++}^m, R\in\mathbb S_{++}^n$, we consider methods that approximate $C$ by a Kronecker product $S:=L\otimes R$, with inverse-root preconditioning:
\begin{align}
S^{-\nicefrac{1}{2}}g=(L^{-\nicefrac{1}{2}}\otimes R^{-\nicefrac{1}{2}})g=\operatorname{vec}(L^{-\nicefrac{1}{2}}GR^{-\nicefrac{1}{2}}).
\end{align}

\textbf{Shampoo.}
Shampoo~\citep{gupta2018shampoo,anil2020scalable} maintains averages $L_t,R_t$ of $G_tG_t^\top$ and $G_t^\top G_t$.
Its preconditioning step is $\smash{\Theta_{t+1}=\Theta_t-\eta_tL_t^{-p}G_tR_t^{-p}}$.
Motivated by gradient second-moment approximation, modern Shampoo variants ~\citep{shi2023distributed,morwaninew2025,eschenhagen2026purifying} use  $p=\nicefrac{1}{2}$, while original Shampoo \citep{gupta2018shampoo} uses $p=\nicefrac{1}{4}$.

\textbf{PSGD.}
Preconditioned stochastic gradient descent~\citep[PSGD,][]{li2017preconditioned, li2019preconditioner} learns a preconditioner $P$ that acts on the stochastic gradient as $Pg$.
Its Fisher-type gradient-whitening objective is
\begin{align}
\min_{P\in\mathbb S_{++}^{mn}}\ \Tr(PC)+\Tr(P^{-1}),
\label{eq:psgd_whitening_objective}
\end{align}
with unique dense solution $P=C^{-\nicefrac{1}{2}}$.
PSGD-Kron restricts the preconditioner to a Kronecker product $P=P_L\otimes P_R$ and learns the factors through relative-gradient updates~\citep{li2019preconditioner}.

Although PSGD and Shampoo were developed independently,  we will show in the next section that PSGD-Kron is closely related to a Shampoo variant under a new Bregman divergence. This allows us to understand them and explain their strong performance in a unified way.

\textbf{Bregman matrix divergences.}
For a strictly convex, differentiable $f:(0,\infty)\to\mathbb R$, we apply $f$ spectrally to SPD matrices. 
Given the eigendecomposition $M=\sum_{i=1}^d \mu_i u_i u_i^\top$,
$f(M)=\sum_{i=1}^d f(\mu_i)u_i u_i^\top$ and $\Phi(M)=\Tr(f(M))$.
Since $\Phi(M)=\sum_i f(\mu_i)$, $\Phi$ is also strictly convex on SPD matrices, whenever $f$ is strictly convex.
For $X,Y\in\mathbb S_{++}^d$, the Bregman matrix divergence~\citep{dhillon2008matrix} is
\begin{align}
\mathcal B_\Phi(X,Y) = \Phi(X)-\Phi(Y)-\langle\nabla\Phi(Y),X-Y\rangle_{\mathrm{F}}.
\label{eq:bregman_div}
\end{align}
where $\langle A,B\rangle_{\mathrm{F}}=\Tr(A^\top B)$ is the Frobenius inner product.
It is nonnegative and vanishes exactly when $X=Y$.
Choosing $f(x)=x^2/2$, $x\log x-x$, or $-\log x$ gives Frobenius (F), von Neumann (VN), or Kullback-Leibler (KL) divergence, respectively.
The KL divergence is also known as the LogDet divergence in the literature. \cref{app:matrix_divergence_details} gives matrix formulas for these divergences.

\paragraph{Deriving Shampoo methods via Bregman divergence minimization.}
\citet{lin2026understanding} derive F-, VN-, and KL-Shampoo variants from the stationarity condition of the following Bregman objective:
\begin{align}
\min_{L\in\mathbb S_{++}^{m},\;R\in\mathbb S_{++}^{n}}\mathcal B_\Phi(C,L\otimes R),
\label{eq:bregman_kron_approx}
\end{align}
with different generators $\Phi$. 
However, these variants perform differently in practice (\cref{fig:intro_training_motivation}, right) when training neural networks.
The relationship between the choice of $\Phi$ and empirical training performance remains unknown, as the authors do not explain it.
Understanding this connection helps guide the design of better Kronecker-factored optimizers. \cref{app:related_work} discusses related work in detail.

\section{Kronecker Approximation Error Allocation}
\label{sec:approximation_error}
We first address Q1: \emph{How does divergence choice shape approximation errors?}
We use a trace-power Bregman family to unify Shampoo variants and connect the Kronecker-constrained PSGD objective in \cref{eq:psgd_whitening_objective} to the Bregman formulation in \cref{eq:bregman_kron_approx}.
Then, within this family, we compare how the divergences allocate spectral approximation error.

\newpage
\subsection{A Trace-Power Family of Kronecker Preconditioners}
\label{sec:trace_power}
\leavevmode\par\vspace{-\baselineskip}
\begin{wrapfigure}{r}{.35\linewidth}
\centering
\vspace{-2.5ex}
\setlength{\abovecaptionskip}{0pt}
\includegraphics[width=\linewidth]{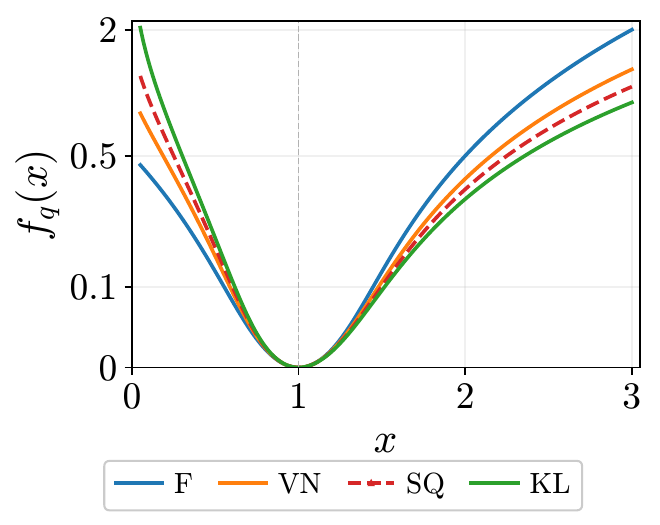}\par
\caption{The family of scalar generators $f_q(x)$ inducing trace-power Bregman divergences we study through one unifying lens. Shown with an asinh vertical scale.}
\label{fig:trace_power_generators}
\end{wrapfigure}

\noindent\textbf{Unified analysis of Bregman objectives.}
We formulate the objectives above through a \emph{trace-power Bregman family}, obtained by extending scalar divergence generators~\citep{cichocki2010families} to SPD matrices via spectral functions~\citep{amari2014information}.
Our unification further clarifies the choice of the generators and lets us connect PSGD-Kron and study its related variant within the same framework.

\par
\begin{proposition}[Unification of Shampoo and PSGD-Kron via the Trace-Power Bregman Objective]
\label[proposition]{prop:trace_power_unification}
The objectives of F-, VN-, and KL-Shampoo and the gradient-whitening objective of PSGD-Kron belong to the same trace-power Bregman family, generated by $\Phi_q(M)=\Tr(f_q(M))$.
Here, $f_q$ is applied spectrally to the eigenvalues of $M$ as defined in \cref{sec:preliminaries}.
The four scalar generators are shown in \cref{fig:trace_power_generators}:
\begin{align}
f_q(x):=\frac{x^q-qx+q-1}{q(q-1)}\ (q\neq0,1),\ f_1(x):=x\log x-x+1,\ f_0(x):=-\log x+x-1,
\label{eq:trace_power_generator}
\end{align}
Let $q_{\mathrm D}$ denote the exponent for divergence $\mathrm D$.
Frobenius has $q_{\mathrm F}=2$, while von Neumann and KL divergences arise as $q\to1$ and $q\to0$, respectively, for which we denote the exponents by $q_{\mathrm{VN}}=1$ and $q_{\mathrm{KL}}=0$.
The square-root (SQ) member, $q_{\mathrm{SQ}}=\nicefrac{1}{2}$, recovers PSGD-Kron's gradient-whitening objective~\eqref{eq:psgd_whitening_objective} under $P=S^{-\nicefrac{1}{2}}$, up to a positive scale and an additive constant.

Setting $S=L\otimes R$ gives the coupled stationarity equations:
\begin{align}
L_q=\frac{\mathbb E[GR^{q-1}G^\top]}{\Tr(R^q)},\ R_q=\frac{\mathbb E[G^\top L^{q-1}G]}{\Tr(L^q)}.
\label{eq:trace_power_bilateral_solutions}
\end{align}
Every joint stationary point satisfies these coupled equations.
See proof in \cref{app:trace_power_proof}.
\end{proposition}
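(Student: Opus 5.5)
The proof has three parts: identify the members of the family, match the $q=\nicefrac12$ member to PSGD-Kron, and derive the stationarity equations.

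\textbf{Identifying the members.} I would first check, at the scalar level, that the generators in \cref{eq:trace_power_generator} reproduce the named divergences. Since adding an affine function $ax+b$ to $f$ leaves $\mathcal B_\Phi$ in \cref{eq:bregman_div} unchanged, only the nonlinear part matters. For $q=2$ we get $f_2(x)=(x-1)^2/2$, which gives Frobenius. For the two limits, I would expand $x^q=e^{q\log x}$. As $q\to1$, L'Hôpital in $q$ gives $x\log x-x+1$. As $q\to0$, writing $x^q=1+q\log x+O(q^2)$ gives numerator $q\log x-qx+q+O(q^2)$ and denominator $-q+O(q^2)$, so the limit is $-\log x+x-1$. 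These reproduce VN and KL up to affine terms. The matrix Bregman divergence then follows from $\nabla\Phi_q(Y)=f_q'(Y)$, giving the closed form
\begin{align}
\mathcal B_{\Phi_q}(X,Y)=\frac{\Tr(X^q)-q\Tr(XY^{q-1})+(q-1)\Tr(Y^q)}{q(q-1)},
\end{align}
with its limiting forms for $q\in\{0,1\}$.

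\textbf{PSGD-Kron.} For $q=\nicefrac12$, $X=C$, $Y=S$, the formula gives $\mathcal B=-4\bigl(\Tr(C^{1/2})-\tfrac12\Tr(CS^{-1/2})-\tfrac12\Tr(S^{1/2})\bigr)$. Up to the constant $-4\Tr(C^{1/2})$ this equals $2\bigl(\Tr(CS^{-1/2})+\Tr(S^{1/2})\bigr)$. Substituting $P=S^{-1/2}$, so $S^{1/2}=P^{-1}$, yields $2\bigl(\Tr(PC)+\Tr(P^{-1})\bigr)$, which is \cref{eq:psgd_whitening_objective} with positive scale $2$. Since $S\mapsto S^{-1/2}$ maps Kronecker products to Kronecker products bijectively, via $(L\otimes R)^{-1/2}=L^{-1/2}\otimes R^{-1/2}$, the two Kronecker-constrained problems coincide.

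\textbf{Stationarity.} Drop the $S$-independent term $\Tr(C^q)$ and use $\Tr((L\otimes R)^q)=\Tr(L^q)\Tr(R^q)$ together with $(L\otimes R)^{q-1}=L^{q-1}\otimes R^{q-1}$. With row-wise vectorization, $\Tr(C(L^{q-1}\otimes R^{q-1}))=\mathbb E[\Tr(G^\top L^{q-1}GR^{q-1})]$. The objective thus becomes, up to the factor $1/(q(q-1))$,
\begin{align}
J(L,R)=-q\,\mathbb E\bigl[\Tr(G^\top L^{q-1}GR^{q-1})\bigr]+(q-1)\Tr(L^q)\Tr(R^q).
\end{align}

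\textbf{The main obstacle} is differentiating $L\mapsto\Tr(AL^{q-1})$ with $A=\mathbb E[GR^{q-1}G^\top]$, since $A$ need not commute with $L$. The Daleckii–Krein derivative is not simply $(q-1)L^{q-2}A$. I would handle this by working in the eigenbasis of $L$. The gradient of $\Tr(AL^{q-1})$ is the divided-difference matrix $[(\lambda_i^{q-1}-\lambda_j^{q-1})/(\lambda_i-\lambda_j)]\circ\tilde A$, and the gradient of $\Tr(L^q)$ is $qL^{q-1}$. Setting the $L$-gradient of $J$ to zero, the $(i,j)$ entry reads
\begin{align}
-q\,\Gamma_{ij}\tilde A_{ij}+q(q-1)\Tr(R^q)\,\lambda_i^{q-2}\delta_{ij}=0,
\end{align}
where $\Gamma$ has strictly nonzero entries for $q\neq1$, by monotonicity of $x^{q-1}$. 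Hence $\tilde A$ is diagonal, so $A$ commutes with $L$, and the diagonal entries give $A_{ii}=\lambda_i\Tr(R^q)$. This is exactly $L=A/\Tr(R^q)$. The $R$-equation follows symmetrically, and the limits $q\to0,1$ are handled by the same argument applied to the limiting closed forms, where the $q=1$ case uses $\log$ in place of $x^{q-1}$. Any joint stationary point must therefore satisfy \cref{eq:trace_power_bilateral_solutions}.
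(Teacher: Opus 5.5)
Your proof is correct, but your stationarity step takes a different route from the paper. The paper never differentiates $L\mapsto\Tr(A_RL^{q-1})$ in $L$-coordinates. For fixed $R$ it reparametrizes $X=L^{q-1}$ (and uses $X=\log L$ for VN and $X=L^{-1}$ for KL). This makes the cross term linear, $-\frac{1}{q-1}\Tr(A_RX)$, while $\Tr(L^q)=\Tr(X^{q/(q-1)})$ becomes a spectral function of $X$ alone. The gradient $\frac{1}{q-1}\bigl(\Tr(R^q)X^{1/(q-1)}-A_R\bigr)$ then vanishes if and only if $L=A_R/\Tr(R^q)$. Strict convexity in $X$ also shows that this point is the unique minimizer of the conditional subproblem.

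You instead stay in $L$-coordinates and use the Daleckii--Krein divided-difference formula. Nonvanishing of $\Gamma_{ij}$ then forces $\tilde A$ to be diagonal, so $A_R$ commutes with $L$. The two arguments are closely linked. The paper's change of variables is legitimate because $L\mapsto L^{q-1}$ is a diffeomorphism of $\mathbb S_{++}$, and its differential is invertible precisely because your $\Gamma_{ij}\neq0$. Your argument therefore makes explicit a step the paper leaves implicit. The cost is more computation and no uniqueness or global-minimizer conclusion, which the statement does not require. Your approach also handles $q=0,1$ uniformly: the divided differences of $\log$ are positive and those of $x^{-1}$ equal $-1/(\lambda_i\lambda_j)$. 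The paper instead uses a tailored substitution for each limit. The PSGD identification is essentially the paper's.

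Two small points. First, the diagonal term coming from $(q-1)\Tr(L^q)\Tr(R^q)$ is $q(q-1)\Tr(R^q)\lambda_i^{q-1}\delta_{ij}$, not $\lambda_i^{q-2}$. With $\Gamma_{ii}=(q-1)\lambda_i^{q-2}$, the correct exponent gives $\tilde A_{ii}=\lambda_i\Tr(R^q)$ as you state; your displayed exponent would instead give $\tilde A_{ii}=\Tr(R^q)$. Second, to conclude that $\mathcal B_{\Phi_q}$ itself converges to VN and KL, you also need $f_q'\to f_1'$ and $f_q'\to f_0'$. This is immediate from $f_q'(x)=(x^{q-1}-1)/(q-1)$, and the paper notes it.
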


\textbf{Square-root (SQ) Shampoo.}
To better understand how the choice of Bregman divergence shapes optimization, we consider a new Shampoo variant---\emph{square-root (SQ)-Shampoo}---that is induced by the Bregman family and recovers PSGD's objective function by setting $q=\nicefrac{1}{2}$ in \cref{eq:trace_power_bilateral_solutions}.
We consider this variant because the existing PSGD method does not employ a Shampoo-like scheme.

We consider four divergences in this work.
For a fair comparison, we adopt the estimation technique suggested by \citet{lin2026understanding} in our study of divergence-induced update schemes, which reduces cost and resembles Shampoo's update scheme.
Implementation details are given in \cref{app:shampoo_variant_updates}. We then examine how these divergences allocate approximation error under limited Kronecker capacity (\cref{sec:divergence_geometry}).

\subsection{Spectral Error Allocation under Limited Kronecker Capacity}
\label{sec:divergence_geometry}
As our theory (\cref{app:unavoidable_kron_error}) and empirical results
(\cref{fig:real_kronecker_approximation}, left) show, a single Kronecker product generally cannot match $C$ exactly.
The divergence therefore determines how the remaining approximation errors are penalized.
These penalty properties are not specific to gradient second moments, so we first derive them for arbitrary SPD matrices to isolate the effect of the divergence itself.

\begin{proposition}[Spectral Error Decomposition and Scaling]
\label[proposition]{prop:spectral_scale_preference}
Let $A=U\Lambda U^\top,B=V\Omega V^\top\in\smash{\mathbb S_{++}^d}$ have eigenpairs $(\lambda_i,u_i)$ and $(\omega_i,v_i)$ ordered by decreasing eigenvalue, and set $P_{ij}=\langle u_i,v_j\rangle^2$.
Writing $d_f$ for the scalar Bregman divergence generated by $f$, we have
\begin{align}
\mathcal B_\Phi(A,B) = \underbrace{ \textstyle \sum_{i=1}^d d_f(\lambda_i,\omega_i)}_{\text{eigenvalue mismatch}} + \underbrace{\Delta_f(\Lambda,\Omega;P)}_{\text{eigenspace misalignment}}, \quad \Delta_f(\Lambda,\Omega;P)\geq 0,
\label{eq:spectral_error_decomposition}
\end{align}
For the trace-power scalar generator $f_q$, write $d_q:=d_{f_q}$ and $\Delta_q:=\Delta_{f_q}$.
Both mismatch costs behave similarly when eigenvalues are re-scaled: for any $r>0$, they satisfy
\begin{align}
d_q(\lambda,r\lambda)=\lambda^q d_q(1,r),\quad \Delta_q(r\Lambda,r\Omega;P)=r^q\Delta_q(\Lambda,\Omega;P).
\label{eq:relative_spectral_penalties}
\end{align}
Thus $q$ controls how strongly both mismatch costs depend on eigenvalue scale.
\end{proposition}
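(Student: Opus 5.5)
We expand $\mathcal B_\Phi(A,B)$ in the two eigenbases. The decomposition in \cref{eq:spectral_error_decomposition} then follows by adding and subtracting the aligned scalar terms, and the scaling laws follow from homogeneity of $f_q$ up to affine terms. Since $\Phi(M)=\sum_i f(\mu_i)$ and $\nabla\Phi(B)=f'(B)=V f'(\Omega)V^\top$, we have
\begin{align}
\mathcal B_\Phi(A,B)=\sum_i f(\lambda_i)-\sum_j f(\omega_j)-\sum_{i,j}P_{ij}\,f'(\omega_j)(\lambda_i-\omega_j).
\end{align}
This uses $\Tr(f'(B)A)=\sum_{i,j}\lambda_i f'(\omega_j)P_{ij}$, together with the fact that $P$ is doubly stochastic (rows and columns sum to one, by orthonormality of both bases). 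Double stochasticity also gives $\sum_{i,j}P_{ij}\,d_f(\lambda_i,\omega_j)=\mathcal B_\Phi(A,B)$, because the $f(\lambda_i)$ and $f(\omega_j)$ terms collapse under the marginal sums. So the divergence is a transport-weighted sum of scalar divergences. We then define
\begin{align}
\Delta_f(\Lambda,\Omega;P):=\sum_{i,j}P_{ij}\,d_f(\lambda_i,\omega_j)-\sum_i d_f(\lambda_i,\omega_i),
\end{align}
which yields the decomposition by construction.

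Nonnegativity of $\Delta_f$ is the main step. The function $P\mapsto\sum_{ij}P_{ij}d_f(\lambda_i,\omega_j)$ is linear on the Birkhoff polytope, so its minimum is attained at a permutation matrix. It therefore suffices to show that the identity permutation minimizes $\sum_i d_f(\lambda_i,\omega_{\sigma(i)})$ over all $\sigma$ when both sequences are sorted decreasingly. The $f(\lambda_i)$ and $f(\omega_j)$ parts do not depend on $\sigma$, so only the cross term $-\sum_i f'(\omega_{\sigma(i)})\lambda_i$ matters; the term $f'(\omega)\omega$ depends only on $\omega$ and sums to a constant. Because $f$ is strictly convex, $f'$ is increasing. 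By the rearrangement inequality, $\sum_i\lambda_i f'(\omega_{\sigma(i)})$ is maximized when both sequences are similarly ordered, that is, at $\sigma=\mathrm{id}$. Hence $\Delta_f\ge0$. An alternative route is the von Neumann-type trace inequality $\Tr(Af'(B))\le\sum_i\lambda_i f'(\omega_i)$.

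For the scaling laws, we use that $f_q(rx)=r^q f_q(x)+(\text{affine in }x)$ for every $q$. For $q\neq0,1$ this is immediate from the formula for $f_q$. For $q=1$ we have $rx\log(rx)=r\,x\log x+(r\log r)x$, and for $q=0$ we have $-\log(rx)=-\log x-\log r$; in both cases $r^q$ equals $r$ or $1$ respectively. Bregman divergences are invariant under adding affine functions to the generator. Therefore $d_q(r a,r b)=r^q d_q(a,b)$, and setting $a=1$, $b=r'$ with scale $\lambda$ gives $d_q(\lambda,r\lambda)=\lambda^q d_q(1,r)$. Since $\Delta_q$ is a linear combination of $d_q$ values whose coefficients $P_{ij}$ do not change under $(\Lambda,\Omega)\mapsto(r\Lambda,r\Omega)$, the same homogeneity gives $\Delta_q(r\Lambda,r\Omega;P)=r^q\Delta_q(\Lambda,\Omega;P)$.
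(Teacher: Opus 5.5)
Your proof is correct. For the decomposition and for nonnegativity it follows essentially the paper's argument: expand $\Tr(f'(B)A)$ in the two eigenbases, use that $P$ is doubly stochastic, and combine Birkhoff--von Neumann with the rearrangement inequality. The paper bounds each permutation in a Birkhoff decomposition of $P$, while you minimize a linear function over the polytope; these amount to the same step. Your $\Delta_f=\sum_{i,j}P_{ij}d_f(\lambda_i,\omega_j)-\sum_i d_f(\lambda_i,\omega_i)$ looks different from the paper's $\Delta_f=-\sum_{i,j}\lambda_i f'(\omega_j)(P_{ij}-\delta_{ij})$, but the two coincide for every doubly stochastic $P$.

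The genuine difference is in the scaling part. The paper works case by case. It writes out $d_{\mathrm F},d_{\mathrm{VN}},d_{\mathrm{SQ}},d_{\mathrm{KL}}$ explicitly and checks the homogeneity of $\Delta$ for each of the four exponents, using the row-sum identity $\sum_j(P_{ij}-\delta_{ij})=0$ to remove the $\log r$ term for VN. You instead use $f_q(rx)=r^qf_q(x)+(\text{affine in }x)$ together with the affine invariance of Bregman divergences. This yields $d_q(ra,rb)=r^qd_q(a,b)$ for all $q$ at once. Your transport form of $\Delta_q$ is a fixed-coefficient combination of $d_q$ values, so it inherits the same homogeneity, with the VN cancellation absorbed into affine invariance.

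Your route is shorter and covers the whole trace-power family as the statement is phrased, whereas the paper only verifies $q\in\{2,1,1/2,0\}$. In exchange, the paper's route produces the closed-form scalar penalties that it reuses for the over- versus under-estimation comparison. Your sentence ``setting $a=1$, $b=r'$ with scale $\lambda$'' is garbled, but the intended step $d_q(\lambda\cdot 1,\lambda\cdot r)=\lambda^q d_q(1,r)$ is clear.
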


\textbf{Eigenspace mismatch.}
Keeping the eigenvectors fixed (and hence $P$ unchanged), multiplying all eigenvalues of both matrices by $r>1$ multiplies the eigenspace-mismatch cost by $r^q$. Thus larger $q$ places greater relative weight on the same eigenspace misalignment at larger eigenvalue scales.

\noindent\textbf{Eigenvalue (spectral) mismatch.}\hspace{1em}%
Writing $d_{\mathrm F}=d_2$, $d_{\mathrm{VN}}=d_1$, $d_{\mathrm{SQ}}=d_{\nicefrac{1}{2}}$, and $d_{\mathrm{KL}}=d_0$, the scalar eigenvalue penalties are
\begin{equation}
\label{eq:proof_relative_spectral_penalties}
\begin{aligned}
d_{\mathrm F}(\lambda,r\lambda)
&= \tfrac{1}{2}\lambda^2(r-1)^2,
&\quad d_{\mathrm{VN}}(\lambda,r\lambda)
&= \lambda(r-1-\log r), \\
d_{\mathrm{SQ}}(\lambda,r\lambda)
&= 2\lambda^{\nicefrac{1}{2}}\left(\sqrt r+\nicefrac{1}{\sqrt r}-2\right),
& d_{\mathrm{KL}}(\lambda,r\lambda)
&= \log r+\nicefrac{1}{r}-1.
\end{aligned}
\end{equation}

\par
\begin{wrapfigure}{r}{.53\linewidth}
\vspace{-2.5pt}
\centering
\includegraphics[width=\linewidth]{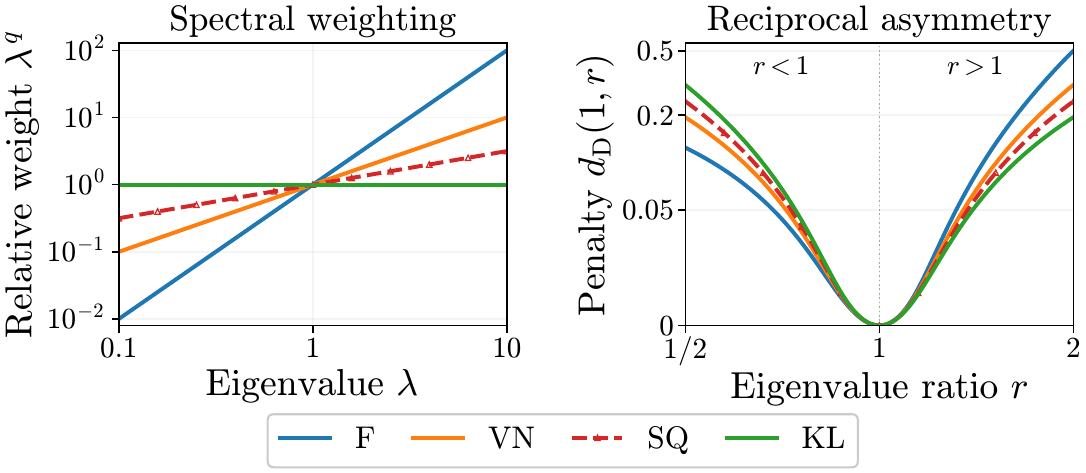}\par
\caption{Left: Normalized penalties $\smash{\frac{d_{\mathrm D}(\lambda,r\lambda)}{d_{\mathrm D}(1,r)}}=\lambda^{q_{\mathrm D}}$ for $r\ne1$. Smaller $q$ places less emphasis on large eigenvalues; KL is scale-independent. Right: $d_{\mathrm D}(1,r)$ versus $r$, with a log horizontal scale and an asinh vertical scale. F and VN favor underestimation over reciprocal overestimation, KL shows the opposite preference, and SQ is symmetric.}
\label{fig:spectral_penalties}
\end{wrapfigure}

For fixed relative mismatch $r$, the weights $\lambda^2,\lambda,\lambda^{\nicefrac{1}{2}},1$ show that larger $q$ puts more emphasis on large eigenvalues (\cref{fig:spectral_penalties}, left).
We call their associated eigenvectors \emph{leading directions}.

\noindent\textbf{Over- versus under-estimation and asymmetry.}\hspace{1em}%
For $r>1$, the fitted values $r\lambda$ and $\lambda/r$ over- and under-estimate $\lambda$ by the same multiplicative factor.
Their penalties satisfy (\cref{fig:spectral_penalties}, right)
\begin{align}
\label{eq:over_under_ordering}
d_{\mathrm F}(\lambda,r\lambda)>d_{\mathrm F}(\lambda,\lambda/r), \\
d_{\mathrm{VN}}(\lambda,r\lambda)>d_{\mathrm{VN}}(\lambda,\lambda/r),\\
d_{\mathrm{SQ}}(\lambda,r\lambda)=d_{\mathrm{SQ}}(\lambda,\lambda/r), \\
d_{\mathrm{KL}}(\lambda,r\lambda)<d_{\mathrm{KL}}(\lambda,\lambda/r).
\end{align}
\WFclear
\noindent\cref{tab:spectral_preferences} summarizes these properties together with the finite-sample results from \cref{sec:finite_sample_stability}.
\cref{sec:kronecker_spectral_penalties} gives the Kronecker decomposition and shows that the same penalty rules still hold.

\begin{checkpointcheck}
\textbf{Takeaway.}
Smaller $q_{\mathrm D}$ gives more weight to the tail spectrum and penalizes underestimation more strongly, encouraging larger fitted eigenvalues.
These larger eigenvalues can help avoid excessive inverse-root amplification, motivating smaller $q_{\mathrm D}$ in preconditioner design.
\end{checkpointcheck}

\section{Finite-Sample Effects on Kronecker Approximation}
\label{sec:finite_sample_effects}
We now turn to Q2: \emph{How does finite sampling affect Kronecker approximation?}
Finite sampling enters in two places: the empirical second-moment target and the estimated Kronecker factors in~\cref{eq:trace_power_bilateral_solutions}.
We examine the reliability of the target before Kronecker fitting, and then study divergence-dependent factor estimation from limited samples.

\subsection{Reliability of the Empirical Second-Moment Target}
\label{sec:real_second_moment_reliability}
We test the reliability of the empirical gradient second moment by computing its eigenvectors on one data split and comparing second-moment values along the same directions on other independent data splits (\cref{fig:intro_training_motivation}, left).

\noindent\textbf{Shared experimental design (full details in \cref{app:real_model_protocol}).}
\cref{sec:real_second_moment_reliability,sec:from_approx_to_precondition} use the same setup.
First, we train a model (GPT-2~\citep{modded_nanogpt_2024} on FineWeb, $12$ layers, hidden size $128$) with AdamW and collect a large number of mini-batch gradients (batch size $64$, sequence length $512$), at selected checkpoints ($100$, $2{,}500$, and $4{,}500$) during training.
We then split those gradients into four disjoint splits (each of size $N=20{,}480$).
Each split gives rise to an empirical gradient second moment $C_s = \nicefrac{1}{N} \smash{\sum_{n=1}^N g_{s,n} g_{s,n}^\top}$ with $g_{s,n}$ the $n$th mini-batch gradient in split $s$.
In what follows, we focus on the query, value, and output projection weights of the model's 5th block, for which $N > d = 128^2$.
Thus, rank deficiency is not forced by the sample count.
We repeated the analyses on checkpoints generated by training with Shampoo and find similar results (\cref{app:distributed_shampoo_checkpoints}).
This large-sample setting provides \emph{an offline reference} for subsequent preconditioning comparisons.


\textbf{Are empirical second-moments reliable?}
Following high-dimensional covariance estimation work~\citep{ledoit2012nonlinear}, we estimate eigenvectors on a source split $s$ and test their directional scales on a held-out split $s'\ne s$.
Let $\{(\lambda_{s,i},u_{s,i})\}_{i=1}^d$
denote the eigenpairs of the empirical moment $C_s$ on split $s$.
To assess the representativeness of the source eigenvalues, we keep the source directions $u_{s,i}$ fixed and compare their \emph{directional scales} across splits.
Along each direction, the source scale is $\lambda_{s,i}$ and the held-out scale is $u_{s,i}^\top C_{s'}u_{s,i}$.
These scales can agree even if $u_{s,i}$ is not an eigenvector of $C_{s'}$.
\pagebreak[1]
We measure \emph{held-out scale error} by comparing held-out and source scales:
\begin{align}
\xi_{s\to \heldout{s'},i}:=\log_{10}\frac{u_{s,i}^\top \heldout{C_{s'}} u_{s,i}}{u_{s,i}^\top C_su_{s,i}}=\log_{10}\frac{u_{s,i}^\top \heldout{C_{s'}} u_{s,i}}{\lambda_{s,i}}.
\label{eq:heldout_second_moment_ratio}
\end{align}
\textbf{Interpretation.} Zero indicates cross-split agreement; positive values indicate underestimation and negative values indicate overestimation. 
Larger absolute values indicate greater mismatch. 
\textbf{Thus, values close to zero across the spectrum indicate reliable empirical second moments.}
\begin{remark}
We focus on eigenvalue (scale) reliability, without eigenvector stability.
These directions need not remain eigenvectors across splits.
What matters is whether the scales along these directions reproduce on independent data, supporting the scaling applied by the preconditioner is stable.
\end{remark}

\noindent\textbf{Empirical eigenvalues become less reliable toward the tail.}
In \cref{fig:intro_training_motivation} (left), using the AdamW step-$2{,}500$ output projection, held-out scale error is near zero for large eigenvalues and rises toward smaller eigenvalues, showing increasingly severe underestimation.
Held-out scale error remains near zero in leading directions and strongly positive in the tail across checkpoints and projections from both AdamW and Distributed Shampoo (\cref{tab:real_reliability_regions_full,fig:real_reliability_all}). 
We also compare scales along directions selected from a third split $s''$, independently of both splits above.
The resulting cross-fit errors stay near zero across the spectrum, supporting sampling noise as the source of the scale differences (see \cref{app:finite_sample_tail_selection}).

\noindent\textbf{Held-out scale as the correction.}
Covariance estimation and mini-batch curvature debiasing consider a correction based on separate data: keep the estimated eigenvectors, but replace their sample eigenvalues with scales measured along those directions on an independent batch \citep{ledoit2012nonlinear,tatzel2025debiasing}.
Here, the replacement for $\lambda_{s,i}$ is precisely the held-out scale $u_{s,i}^{\top}\heldout{C_{s'}}u_{s,i}$ in~\cref{eq:heldout_second_moment_ratio}.
This motivates using it as the reference for preconditioner gains in \cref{sec:real_inverse_root_action} 

\begin{checkpointcheck}
\textbf{Takeaway.}
Even with many gradients, empirical second moments can underestimate the tail of the spectrum.
Using one data split to estimate the eigenvectors and another to measure their scales reveals this finite-sample bias.
Under inverse-root preconditioning, such underestimation can lead to overly large gains, and is even more pronounced in practice with short gradient histories.
\end{checkpointcheck}

\subsection{Finite-Sample Stability of Kronecker Factors}
\label{sec:finite_sample_stability}
The previous subsection shows finite-sample bias in the second moment.
In practice, Kronecker factors are also estimated from shorter gradient histories, introducing another source of sampling error.
We use a controlled model to isolate how divergence choice affects this factor estimation.
\paragraph{Model and isolated empirical factor estimation.} To separate sampling error from approximation error in Kronecker approximation, we consider an exact Kronecker Gaussian model $C=L_\star\otimes R_\star$:
\begin{align}
G=L_\star^{\nicefrac{1}{2}} Z R_\star^{\nicefrac{1}{2}}, \quad Z_{ij}\overset{\mathrm{i.i.d.}}{\sim}\mathcal{N}(0,1).
\label{eq:kronecker_gaussian_model}
\end{align}
Fixing $R=R_\star$ and replacing the expectation in \cref{eq:trace_power_bilateral_solutions} with the sample expectation over $N$ independent gradients $\smash{\{G_b=L_\star^{\nicefrac{1}{2}}Z_bR_\star^{\nicefrac{1}{2}}\}_{b=1}^N}$ gives the empirical estimate
\begin{align}
\widehat L_{\mathrm D} = \frac{1}{N\Tr(R_\star^{q_{\mathrm D}})} \sum_{b=1}^{N} G_b R_\star^{q_{\mathrm D}-1}G_b^\top = L_\star^{\nicefrac{1}{2}}\left[\frac{1}{N\Tr(R_\star^{q_{\mathrm D}})}\sum_{b=1}^{N}Z_bR_\star^{q_{\mathrm D}}Z_b^\top\right]L_\star^{\nicefrac{1}{2}}.
\label{eq:finite_sample_factor_estimator}
\end{align}
Analogously, we define the right-side counterpart $\widehat{R}_\text{D}$ of $\widehat{L}_\text{D}$ to be the empirically estimated right factor on $N$ samples when fixing the left factor to be $L_\star$.

\begin{proposition}[Finite-Sample Factor Stability]
\label[proposition]{prop:finite_sample_factor_stability}
Define the effective rank $r_{\mathrm{eff}}(A):= \nicefrac{\Tr(A)^2}{\Tr(A^2)}$ and the relative factors $\smash{\widetilde L_{\mathrm D}:=L_\star^{-\nicefrac{1}{2}}\widehat L_{\mathrm D}L_\star^{-\nicefrac{1}{2}}}$ and $\smash{\widetilde R_{\mathrm D}:=R_\star^{-\nicefrac{1}{2}}\widehat R_{\mathrm D}R_\star^{-\nicefrac{1}{2}}}$, which in the case of perfect estimation simplify to the identity matrix.
Under~\cref{eq:kronecker_gaussian_model}, the estimator in~\cref{eq:finite_sample_factor_estimator} and its right-factor counterpart, with the opposite factor fixed at its population value, satisfy
\begin{align}
\operatorname{Err}_{\mathrm D}^{L}:=\frac{1}{m}\mathbb E\|\widetilde L_{\mathrm D}-I\|_F^2=\frac{m+1}{N\,r_{\mathrm{eff}}(R_\star^{q_{\mathrm D}})},\quad \operatorname{Err}_{\mathrm D}^{R}:=\frac{1}{n}\mathbb E\|\widetilde R_{\mathrm D}-I\|_F^2=\frac{n+1}{N\,r_{\mathrm{eff}}(L_\star^{q_{\mathrm D}})}.
\label{eq:finite_sample_factor_error}
\end{align}
Since $r_{\mathrm{eff}}(A^q)$ is non-increasing in $q\geq0$, as shown in~\cref{fig:effective_rank}, left,
these errors satisfy
\begin{align}
\operatorname{Err}_{\mathrm{KL}}^{s}
\leq \operatorname{Err}_{\mathrm{SQ}}^{s}
\leq \operatorname{Err}_{\mathrm{VN}}^{s}
\leq \operatorname{Err}_{\mathrm F}^{s},
\quad s\in\{L,R\}.
\label{eq:finite_sample_error_ordering}
\end{align}
with equality when the opposite factor has a flat spectrum. See proof in \cref{app:proof_finite_sample}.
\end{proposition}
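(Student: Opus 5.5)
Write $B:=R_\star^{q_{\mathrm D}}$ and $t:=\Tr(B)$, so that by \cref{eq:finite_sample_factor_estimator}
\begin{align}
\widetilde L_{\mathrm D}=\frac{1}{Nt}\sum_{b=1}^N Z_bBZ_b^\top .
\end{align}
This reduces the whole problem to Gaussian moment computations that no longer involve $L_\star$. The plan is to compute $\mathbb E\|\widetilde L_{\mathrm D}-I\|_F^2$ exactly as bias plus variance.

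\textbf{Step 1 (mean).} Entrywise, $\mathbb E[(ZBZ^\top)_{ij}]=\sum_{k,l}B_{kl}\mathbb E[Z_{ik}Z_{jl}]=\delta_{ij}\Tr(B)$. Hence $\mathbb E\widetilde L_{\mathrm D}=I$ and the estimator is unbiased. Consequently $\mathbb E\|\widetilde L-I\|_F^2=\sum_{i,j}\operatorname{Var}(\widetilde L_{ij})$. By independence across $b$, this equals $\frac{1}{Nt^2}\sum_{i,j}\operatorname{Var}\bigl((ZBZ^\top)_{ij}\bigr)$ for a single $Z$.

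\textbf{Step 2 (single-sample variance).} Write $(ZBZ^\top)_{ij}=z_i^\top Bz_j$, where $z_i$ is the $i$th row of $Z$; the rows are i.i.d.\ $\mathcal N(0,I_n)$.
\begin{itemize}
\item For $i\ne j$, $z_i$ and $z_j$ are independent, so $\mathbb E(z_i^\top Bz_j)^2=\mathbb E[z_j^\top B z_iz_i^\top Bz_j]=\Tr(B^2)$ and the mean is $0$. The variance is therefore $\Tr(B^2)$.
\item For $i=j$, the standard Gaussian quadratic-form identity gives $\operatorname{Var}(z^\top Bz)=2\Tr(B^2)$, using symmetry of $B$.
\end{itemize}
Summing over the $m(m-1)$ off-diagonal and $m$ diagonal entries gives $m(m+1)\Tr(B^2)$.

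Dividing by $mNt^2$ yields
\begin{align}
\operatorname{Err}^L_{\mathrm D}=\frac{(m+1)\Tr(B^2)}{N\Tr(B)^2}=\frac{m+1}{N\,r_{\mathrm{eff}}(R_\star^{q_{\mathrm D}})}.
\end{align}
The right-factor formula follows by the identical argument applied to $\widetilde R_{\mathrm D}=\frac{1}{N\Tr(L_\star^{q})}\sum_b Z_b^\top L_\star^{q}Z_b$, with columns in place of rows.

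\textbf{Step 3 (ordering).} Let $\rho_k>0$ be the eigenvalues of $R_\star$ and define $h(q)=\log r_{\mathrm{eff}}(R_\star^q)=2\log\sum_k\rho_k^q-\log\sum_k\rho_k^{2q}$. Set $\psi(s)=\log\sum_k\rho_k^s$, which is convex as a log-sum-exp in $s$. Then $h'(q)=2\psi'(q)-2\psi'(2q)$. Since $\psi'$ is nondecreasing and $2q\ge q$ for $q\ge0$, we get $h'(q)\le0$. Thus $r_{\mathrm{eff}}$ is non-increasing on $q\ge0$, and plugging in $0\le\tfrac12\le1\le2$ gives \cref{eq:finite_sample_error_ordering}.

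For the equality case: $\psi'(s)$ is the mean of $\log\rho_k$ under weights $\propto\rho_k^s$, and $\psi''(s)$ is the corresponding variance. This variance is zero exactly when all $\rho_k$ are equal. So a flat spectrum gives $r_{\mathrm{eff}}\equiv n$ and all four errors coincide; conversely, a non-flat spectrum gives strict decrease for $q>0$.

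\textbf{Main obstacle.} The step needing the most care is the monotonicity argument, specifically at the KL endpoint. There $q=0$ is a limit: the paper's stationarity equation with $q=0$ gives $R^{-1}$ weighting, and $R_\star^0=I$, so $r_{\mathrm{eff}}=n$. I would check that this is consistent with $h$ being continuous at $0$; it is, since $h(0)=2\log n-\log n=\log n$. The remaining care is the Gaussian fourth-moment bookkeeping in Step 2.
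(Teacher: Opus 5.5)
Your proposal is correct and follows essentially the same route as the paper's proof. Both arguments rest on three steps: unbiasedness of $\widetilde L_{\mathrm D}$; a per-entry Gaussian computation giving $m(m+1)\Tr(A^2)/\Tr(A)^2$ for a single sample, with cross-sample terms vanishing; and monotonicity of $r_{\mathrm{eff}}(R_\star^q)$ from convexity of $q\mapsto\log\sum_k\rho_k^q$. Your extra remark that a non-flat spectrum makes the ordering strict is a small sharpening that the paper does not state.
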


\noindent\textbf{Why Frobenius fitting has larger estimation error.}\hspace{1em}%
For a long-tailed $R_\star$, F ($q=2$) puts most of its weight on a few large eigenvalues, reducing the effective number of directions that average out sampling noise.
Equivalently, $R_\star^2$ has a smaller effective rank, leading to larger estimation error.
KL ($q=0$) weights all directions equally, giving the largest effective rank and the smallest error under the assumptions of \cref{prop:finite_sample_factor_stability}.

\noindent\textbf{Scope, limitations, and empirical evidence.}\hspace{1em}%
Here, our aim is to compare \emph{finite-sample factor stability} across F, VN, SQ, and KL using the common exponent $q$ introduced in~\cref{sec:trace_power}.
Joint Kronecker Gaussian estimation has been extensively studied in statistics.
In particular, \citet{mccormack2025information} characterize the asymptotic efficiency loss of VN relative to KL, while \citet{franks2026near} give high-probability error bounds for KL.
However, these results do not directly provide a finite-sample comparison across all four divergences in our setting.

\vspace{1.5ex plus 0.5ex minus .2ex}
\begin{wrapfigure}{r}{.55\linewidth}
\centering
\vspace{-15pt}
\includegraphics[width=\linewidth]{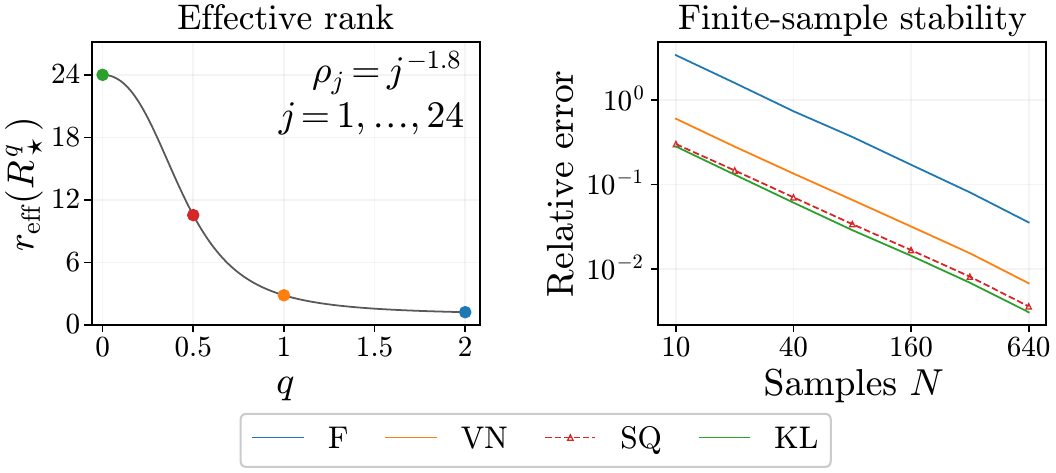}\par
\caption{
Left: Effective rank $r_{\mathrm{eff}}(R_\star^q)$ for eigenvalues $\rho_j=j^{-1.8}$.
Right: Mean squared relative Frobenius error of the jointly fitted $S=L\otimes R$ (\cref{eq:synthetic_stability_metric}) on non-Kronecker gradients over $20$ repeats, relative to each method's $4096$-sample fit.
Both results favor smaller $q_{\mathrm D}$, consistent with the theoretical prediction in~\cref{prop:finite_sample_factor_stability}.
}
\label{fig:effective_rank}
\end{wrapfigure}
Our analysis estimates one factor with the other fixed under an exact Kronecker Gaussian model.
This removes the coupling between factors and isolates the effect of spectral weighting, yielding a unified exact error formula for all four divergences.
To test whether the ordering persists under joint fitting beyond these assumptions, we fit both factors on synthetic long-tailed non-Kronecker Gaussian gradients.
The mean errors follow the predicted ordering at every tested sample size (\cref{fig:effective_rank}, right).
\cref{sec:real_finite_sample_stability} further examines the recovery of large-sample inverse-root gains on real gradients, without Gaussian or Kronecker assumptions.
\par\WFclear

\begin{checkpointcheck}
\textbf{Takeaway.}
With limited gradient samples, smaller $q_{\mathrm D}$ gives more accurate Kronecker factor estimates under our model, with KL achieving the lowest expected relative error.
The same trend also holds empirically beyond these assumptions.
This favors smaller $q_{\mathrm D}$ for preconditioners, where more stable factor estimates lead to reliable preconditioning from limited gradient histories.
\end{checkpointcheck}

\section{From Kronecker Approximations to Preconditioning}
\label{sec:from_approx_to_precondition}
Q1 characterizes divergence-dependent approximation error allocation, while Q2 shows how finite sampling affects both the empirical target and Kronecker factor estimation.
We now address Q3: \emph{how do divergence choice and finite sampling jointly affect preconditioning?}
We first examine whether large-sample Kronecker fits on real gradients follow the spectral predictions from Q1.
Then, we ask how these fits interact with the finite-sample bias identified in Q2 when converted into inverse-root gains, and whether these gains remain reliable with short gradient histories.

\subsection{Large-Sample Kronecker Fits}
\label{sec:real_kronecker_approximation}
We fit each Kronecker state $S_D^{\mathrm{ref}}$  to the empirical second moment $C_0$, using all gradients in split $0$.
These large-sample fits show how different divergences allocate approximation error and whether the observed results, shown in~\cref{fig:real_kronecker_approximation}, match the predictions in~\cref{sec:divergence_geometry}.

\begin{wrapfigure}{R}{.63\linewidth}
\vspace{-8pt}
\centering
\includegraphics[width=\linewidth]{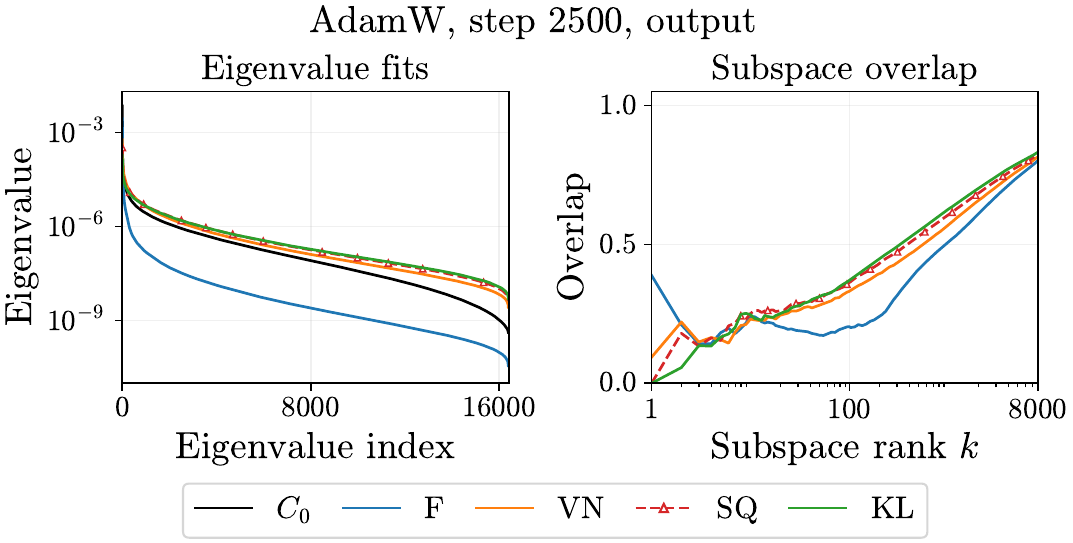}\par
\caption{Large-sample Kronecker fits. Left: empirical and fitted eigenvalues. Right: overlap of the leading subspaces. The fitted spectra and eigenspaces follow the divergence-dependent preferences predicted in~\cref{sec:divergence_geometry}.}
\label{fig:real_kronecker_approximation}
\end{wrapfigure}

\textbf{Eigenvalues.}
We compare empirical and fitted eigenvalues in decreasing order.
F underestimates much of the empirical spectrum, while VN follows it; SQ and KL have larger fitted tail eigenvalues. The same pattern holds across the other checkpoints and projections; see \cref{fig:real_divergence_spectrum}.

\textbf{Eigenspaces.}
Let $U_k$ and $V_{D,k}$ contain the leading $k$ eigenvectors of $C_0$ and $S_D^{\mathrm{ref}}$.
Following~\citet{song2025does}, we measure their subspace overlap by
$O_D(k):=\|U_k^\top V_{D,k}\|_{\mathrm{F}}^2/k \in [0,1]$, where $1$ means identical subspaces and $0$ means orthogonal subspaces.
F best matches the leading eigenvector, while KL achieves higher overlap for most larger subspaces. Since inverse-root gains depend on both eigenspace overlap and fitted eigenvalues, we examine their joint effect in~\cref{sec:real_inverse_root_action}.

\par\WFclear
\vspace{0.6\baselineskip}
\begin{checkpointcheck}
\textbf{Takeaway.}
The Kronecker fits reflect the spectral preferences predicted in \cref{sec:divergence_geometry}.
\end{checkpointcheck}

\subsection{Preconditioning Gains of Large-Sample Fits}
\label{sec:real_inverse_root_action}
The previous subsection shows that divergence choice leads to different eigenvalues and eigenspaces in the Kronecker fits.
To answer Q3, we now ask how these differences translate into preconditioning through $S_D^{-\nicefrac{1}{2}}$.
We compare the resulting preconditioning gains with held-out inverse-root scales to test whether each fit compensates for the finite-sample scale errors identified in~\cref{sec:real_second_moment_reliability}.

Following the held-out correction in \cref{sec:real_second_moment_reliability}, we use scales measured on independent data along the fixed source eigenvectors as the reference for preconditioning.
Let $\{(\lambda_{0,i},u_{0,i})\}$ be the eigenpairs of the source second moment $C_0$.
We average the three held-out directional scales,
$h_i:=\nicefrac{1}{3}\smash{\sum_{\heldout{s'}=1}^{3}u_{0,i}^{\top}\heldout{C_{s'}}u_{0,i}}$,
and use $\smash{h_i^{-\nicefrac{1}{2}}}$ as the reference gain along $u_{0,i}$.
For a fitted state $S_D$, we define the \textbf{held-out gain error} as
\begin{align}
e_{D,i}:=\log_{10}\frac{\|S_D^{-\nicefrac{1}{2}}u_{0,i}\|_2}{h_i^{-\nicefrac{1}{2}}}=\frac12\log_{10}\left(h_i u_{0,i}^\top S_D^{-1}u_{0,i}\right).
\label{eq:heldout_log_gain}
\end{align}
\textbf{Interpretation.}
Zero means matching the scaling factor $\nicefrac{1}{\sqrt{h_i}}$ estimated from held-out data; positive and negative values mean larger and smaller scales, respectively. \textbf{Our goal is to keep this error close to zero across the whole spectrum.}

\vspace{-5pt}
\begin{figure}[htbp]
\centering
\setlength{\abovecaptionskip}{2pt}
\includegraphics[width=\linewidth]{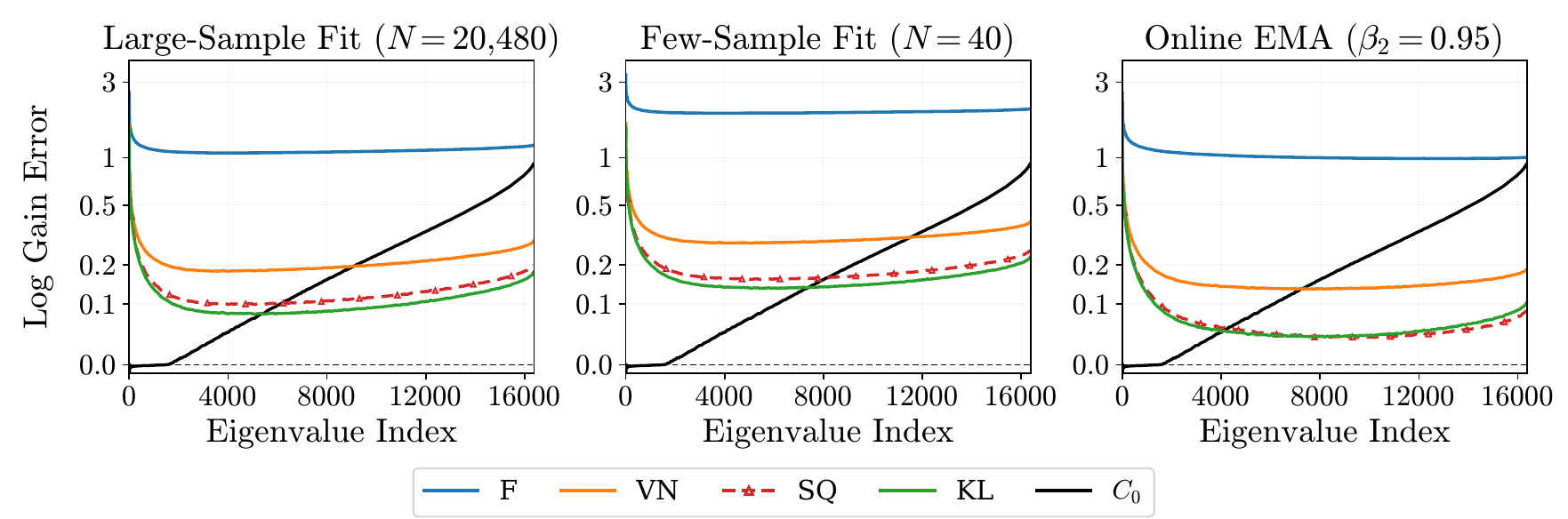}\par
\caption{Smoothed log held-out gain error for the output projection at AdamW step $2{,}500$. Closer to zero is better; with asinh vertical scale.
Left: large-sample fit; middle: $40$-sample fits ($4$ repeats); right: EMA ($16$ repeats).
Black solid: large-sample $\smash{C_0^{-\nicefrac{1}{2}}}$ in all panels; dashed: the held-out target.}
\label{fig:real_inverse_root_gain}
\end{figure}
\vspace{-5pt}

\textbf{Divergence choice shapes inverse-root gains.}
KL and SQ better compensate for empirical second-moment scaling error and stay closer to the held-out target across the whole spectrum, with KL achieving the lowest error (\cref{fig:real_inverse_root_gain}, left, and \cref{tab:real_heldout_gain}). Results across the remaining checkpoints and projections show the same pattern (\cref{app:real_gain_all}).

\begin{table}[htbp]
\centering
\caption{Absolute held-out gain error (\cref{eq:heldout_log_gain}, $\downarrow$). KL has the lowest mean error, followed by SQ.
Each experiment contributes the median $|e_{D,i}|$ over all directions and repeats; entries show mean $\pm$ sample SD across nine checkpoint-projection pairs per training optimizer.}
\label{tab:real_heldout_gain}
\small
\setlength{\tabcolsep}{6pt}
\begin{tabular}{llccc}
\toprule
\textbf{Optimizer} & \textbf{Method} & \textbf{Large-sample} & \textbf{40-sample} & \textbf{EMA} \\
\midrule
\multirow{4}{*}{AdamW}
& F & $0.9759\,\pm\,0.4623$ & $2.4945\,\pm\,0.8265$ & $0.7645\,\pm\,0.3453$ \\
& VN & $0.3328\,\pm\,0.1769$ & $0.5549\,\pm\,0.2986$ & $0.2897\,\pm\,0.1473$ \\
& SQ & $0.1467\,\pm\,0.0573$ & $0.2175\,\pm\,0.0893$ & $0.1294\,\pm\,0.0412$ \\
& KL & $\mathbf{0.1024}\,\pm\,0.0182$ & $\mathbf{0.1561}\,\pm\,0.0297$ & $\mathbf{0.1071}\,\pm\,0.0203$ \\
\midrule
\multirow{4}{*}{\makecell[l]{Distributed\\Shampoo}}
& F & $0.8801\,\pm\,0.3234$ & $2.6922\,\pm\,0.5674$ & $0.7008\,\pm\,0.2267$ \\
& VN & $0.3353\,\pm\,0.1109$ & $0.6207\,\pm\,0.1966$ & $0.2806\,\pm\,0.0935$ \\
& SQ & $0.1273\,\pm\,0.0285$ & $0.1876\,\pm\,0.0407$ & $0.1257\,\pm\,0.0300$ \\
& KL & $\mathbf{0.1003}\,\pm\,0.0177$ & $\mathbf{0.1482}\,\pm\,0.0249$ & $\mathbf{0.1072}\,\pm\,0.0204$ \\
\bottomrule
\end{tabular}
\end{table}
\vspace{-5pt}

\textbf{Link to divergence properties.}
The behavior above follows the spectral preferences in Section~\ref{sec:divergence_geometry}.
F and VN emphasize the leading spectrum and tend to underestimate the remaining eigenvalues, whereas KL weights errors uniformly across the spectrum and tends to overestimate; SQ lies in between.
This yields larger eigenvalues for SQ and KL, which can reduce excessive inverse-root amplification during preconditioning and brings the gains closer to the held-out targets (more details in \cref{sec:real_divergence_calibration}).

\begin{checkpointcheck}
\textbf{Takeaway.}
KL, closely followed by SQ, best compensates for the finite-sample underestimation in the empirical second moment, keeping inverse-root gains closer to the desired scaling across the spectrum.
\end{checkpointcheck}

\FloatBarrier
\subsection{Few-Sample Fits and Online EMA}
\label{sec:real_finite_sample_stability}
\label{sec:real_online_ema}
\setlength{\parskip}{0pt}
The large-sample behavior is useful in practice only if it can be recovered from short gradient histories.
We therefore evaluate both few-sample fitting and online EMA using two metrics: recovery error, which measures how closely each method reproduces its own large-sample gains, and the held-out gain error from \cref{sec:real_inverse_root_action}, which measures agreement with the held-out target.

\paragraph{Configuration.}
All four methods follow the EMA scheme of \citet{lin2026understanding}, detailed in \cref{app:online_bregman_shampoo}, with $\beta_2=0.95$ and eigendecomposition every five steps.
Equal-weight fits use $N=40$ gradients, close to the nominal EMA effective sample size $(1+\beta_2)/(1-\beta_2)=39$.\footnote{For EMA weights $w_k=(1-\beta_2)\beta_2^k$, the effective sample size is $N_{\mathrm{eff}}=1/\sum_k w_k^2=\nicefrac{(1+\beta_2)}{(1-\beta_2)}$.}
Each EMA repeat uses $512$ gradients to reduce initialization effects.
\cref{tab:real_gradient_settings} summarizes these settings.

\noindent\textbf{Recovery error.}
For this metric, the reference is each method's large-sample fit $\smash{S_D^{\mathrm{ref}}}$.
Let $\widehat S_D$ be the few-sample or EMA fit, and let $\{\omega_{D,j},v_{D,j}\}_{j=1}^d$ be the eigenpairs of $S_D^{\mathrm{ref}}$. We compare \textbf{inverse-root amplification} along the reference directions $v_{D,j}$:
\begin{align}
\gamma_{D,j}^{\mathrm{rec}}:=\log_{10} \frac{\|\widehat S_D^{-\nicefrac{1}{2}}v_{D,j}\|_2}{\|(S_D^{\mathrm{ref}})^{-\nicefrac{1}{2}}v_{D,j}\|_2}=\frac{1}{2}\log_{10}(\omega_{D,j}v_{D,j}^\top\widehat S_D^{-1}v_{D,j}).
\label{eq:same_divergence_stability_gain}
\end{align}
\textbf{Interpretation.} The gain mismatch $\gamma_{D,j}^{\mathrm{rec}}$ is zero when the large-sample scaling is recovered; positive and negative values indicate stronger and weaker amplification, respectively. \textbf{We also want to keep this error close to zero across the whole spectrum.}

\begin{table}[!htbp]
\centering
\caption{Recovery error (\cref{eq:same_divergence_stability_gain}, $\downarrow$). KL has the lowest mean error, followed by SQ.
Each experiment contributes the median $|\gamma_D^{\mathrm{rec}}|$ over all reference directions and repeats; entries show mean $\pm$ sample SD across nine checkpoint-projection pairs per training optimizer.}
\label{tab:real_state_stability}
\small
\setlength{\tabcolsep}{5pt}
\begin{tabular}{lcccc}
\toprule
& \multicolumn{2}{c}{\textbf{AdamW}} & \multicolumn{2}{c}{\textbf{Distributed Shampoo}} \\
\cmidrule(lr){2-3}\cmidrule(lr){4-5}
\textbf{Divergence} & \textbf{Few-sample} & \textbf{EMA} & \textbf{Few-sample} & \textbf{EMA} \\
\midrule
F & $1.668\,\pm\,0.601$ & $0.173\,\pm\,0.070$ & $1.967\,\pm\,0.430$ & $0.154\,\pm\,0.017$ \\
VN & $0.259\,\pm\,0.141$ & $0.082\,\pm\,0.032$ & $0.339\,\pm\,0.121$ & $0.087\,\pm\,0.017$ \\
SQ & $0.089\,\pm\,0.042$ & $0.063\,\pm\,0.020$ & $0.077\,\pm\,0.021$ & $0.048\,\pm\,0.008$ \\
KL & $\mathbf{0.066}\,\pm\,0.020$ & $\mathbf{0.044}\,\pm\,0.011$ & $\mathbf{0.060}\,\pm\,0.015$ & $\mathbf{0.040}\,\pm\,0.009$ \\

\bottomrule
\end{tabular}
\end{table}

\noindent\textbf{Finite-sample gain results.}
KL has the lowest recovery error, followed by SQ, VN, and F, across both $40$-sample fits and EMA and both checkpoint sources (\cref{tab:real_state_stability}).
The same ordering holds for aggregate held-out gain error (\cref{tab:real_heldout_gain,fig:real_inverse_root_gain}, middle and right).
Additional results in \cref{app:real_state_details} show the same trend.

\noindent\textbf{Connecting Q1 and Q2.}
Q1 shows that SQ and KL put more emphasis on the entire spectrum and produce larger fitted tail eigenvalues.
Q2 reveals two finite-sample effects: the empirical second moment underestimates the tail, while lower-$q_{\mathrm D}$ factor estimates are more stable with limited gradients.
Together, these effects explain why SQ and KL better compensate for the empirical second moment bias and recover their large-sample gains more reliably.

\begin{checkpointcheck}
\textbf{Takeaway.}
Divergence choice and finite sampling jointly shape preconditioning.
KL, closely followed by SQ, best compensates for tail underestimation and yields more reliable gains from short gradient histories.
\end{checkpointcheck}

\section{GPT-2 Training}
\label{sec:gpt2_training}
We next test whether the gain differences observed above are reflected in GPT-2 training.

\textbf{Prediction.}
More generally, we expect preconditioners whose inverse-root gains stay close to the held-out targets across the spectrum and remain reliable with short gradient histories to perform better in training.
This suggests the held-out gain and recovery errors in~\cref{sec:from_approx_to_precondition} as practical metrics for evaluating optimizer designs.
For the methods studied here, we therefore expect KL and SQ to perform similarly and better than F and VN, with a slight advantage for KL.

We compare F-, VN-, SQ-, and KL-Shampoo by their final validation losses after training $12$-layer GPT-2 models on FineWeb at several widths.
Across widths $256$--$1{,}536$, all methods use $5{,}100$ updates and spectral EMA ($\beta_2=0.95$). \cref{app:training_protocol} gives the full settings and training curves.

\begin{table}[htbp]
\centering
\begin{minipage}[c]{0.56\linewidth}
\centering
\small
\setlength{\tabcolsep}{4pt}
\begin{tabular}{@{}lccccc@{}}
\toprule
\textbf{Method / width} & \textbf{256} & \textbf{512} & \textbf{768} & \textbf{1024} & \textbf{1536} \\
\midrule
F & 3.8281 & 3.5082 & 3.3993 & 3.3528 & 3.3111 \\
VN & 3.7573 & 3.4377 & 3.2876 & 3.1979 & 3.0972 \\
SQ & \textbf{3.7450} & \textbf{3.4173} & 3.2661 & 3.1824 & 3.0817 \\
KL & 3.7475 & 3.4180 & \textbf{3.2660} & \textbf{3.1769} & \textbf{3.0758} \\
\bottomrule
\end{tabular}
\end{minipage}\hfill
\begin{minipage}[c]{0.41\linewidth}
\caption{Final GPT-2 validation loss on FineWeb after $5,100$ updates ($\downarrow$); one run per configuration.
SQ and KL outperform F and VN at every width. Bold marks the lowest loss in each column.}
\label{tab:gpt2_width_sweep}
\end{minipage}
\end{table}

SQ and KL achieve lower validation losses than F and VN at every width, consistent with this prediction (\cref{fig:intro_training_motivation}, right, and \cref{tab:gpt2_width_sweep}).
Their losses are similar at smaller widths, while KL achieves lower losses at the two largest widths.

\section{Conclusion}
\label{sec:conclusion}

We used a trace-power Bregman family to unify divergence-based Shampoo variants and connect SQ-Shampoo to PSGD-Kron, showing how divergence choice shapes Kronecker approximation error.
We then identified two finite-sample effects: empirical second moments can underestimate tail eigenvalues, while divergence choice also affects the stability of Kronecker factor estimation.
Together, these effects explain why SQ and KL better compensate for the underestimation of empirical second moments and recover their large-sample gains more reliably from short gradient histories.
Consistently, SQ and KL achieve lower GPT-2 validation losses than F and VN.
Our held-out gain and recovery errors provide practical metrics for evaluating new preconditioners.

\bibliography{conference}
\bibliographystyle{conference}

\appendix
\crefalias{section}{appendix}\crefalias{subsection}{subappendix}\crefalias{subsubsection}{subsubappendix}
\makeatletter
\setlength{\@fptop}{0pt}
\makeatother
\clearpage
\restorefloatspacing
\flushbottom
\RemoveFromHook{env/figure/begin}[main-float-spacing]
\RemoveFromHook{env/wrapfigure/begin}[main-wrap-spacing]
\RemoveFromHook{env/table/begin}[main-float-spacing]

\addtocontents{toc}{\protect\setcounter{tocdepth}{2}}
\begingroup
\renewcommand{\contentsname}{Appendix Contents}
\pdfbookmark[0]{Appendix Contents}{appendix.contents}
\tableofcontents
\endgroup
\clearpage

\section{Related Work}
\label{app:related_work}

\paragraph{Natural gradients and structured curvature.}
Natural gradient descent uses the Fisher metric to account for parameter-space geometry~\citep{amari1998natural}.
Layered Fisher approximations~\citep{heskes2000natural} and K-FAC~\citep{martens2015optimizing} make this approach tractable by exploiting network structure.
Subsequent work extends Kronecker factorizations to convolutional layers~\citep{grosse2016kronecker}, recurrent networks~\citep{martens2018kronecker}, and general linear layers with weight sharing~\citep{eschenhagen2023kronecker}.
Tensor Normal Training uses a tensor-normal model of sampled gradients to approximate the Fisher~\citep{ren2021tensor}, TKFAC imposes trace constraints~\citep{gao2021trace}, and SINGD incorporates inverse-free updates to avoid explicit matrix inversion~\citep{lin2024structured}.

\paragraph{Shampoo and Kronecker approximation.}
AdaGrad preconditions gradients using their accumulated second moments~\citep{duchi2011adaptive}, while Adam uses an exponential moving average of squared gradients for coordinate-wise scaling~\citep{kingma2014adam}.
Shampoo exploits tensor structure to construct a Kronecker-factored bound on full-matrix AdaGrad~\citep{gupta2018shampoo}.
Distributed computation, infrequent inverse-root updates, and learning-rate grafting make Shampoo practical at scale~\citep{anil2020scalable,agarwal2020disentangling,shi2023distributed}.
SOAP applies Adam in Shampoo's eigenbasis to improve training performance and stability~\citep{vyas2025soap}.
\citet{morwaninew2025} show that the Kronecker product of Shampoo's statistics corresponds to one power-iteration step from identity initialization toward the Frobenius-optimal Kronecker approximation.
In their experiments, its cosine similarity to the target closely tracks that of the optimal approximation.
They also examine how batch gradients and label sampling affect the estimation.

\paragraph{Matrix divergences and Kronecker estimation.}
Bregman matrix divergences provide alternatives to Frobenius approximation~\citep{dhillon2008matrix}, with applications to positive-semidefinite kernel learning~\citep{kulis2009low}.
For Kronecker approximation, the LogDet (KL divergence between two zero-mean multivariate Gaussian distributions) objective corresponds to matrix-normal maximum likelihood, which estimates row and column covariance factors alternately~\citep{dutilleul1999mle}.
\citet{franks2026near} establish nearly optimal finite-sample guarantees for joint maximum likelihood estimation without condition-number assumptions.
\citet{mccormack2025information} study partial-trace estimation, corresponding to VN fitting, and relate its statistical error and asymptotic efficiency loss to the factor spectra.
\citet{lin2026understanding} formulate gradient second-moment estimation through matrix divergences, yielding F-, VN-, and KL-Shampoo.
We study spectral error allocation and finite-sample stability across F, VN, SQ, and KL in a trace-power family.
Our fixed-factor analysis is complemented by joint fits on synthetic non-Kronecker and real gradients.

\paragraph{Reliability of empirical second moments.}
Empirical gradient second moments can differ substantially from curvature matrices like the Fisher and Hessian~\citep{kunstner2019limitations,thomas2020interplay}. Furthermore, with finite samples, empirical eigenvalues systematically misestimate population scales along their eigenvectors. Prior work addresses this bias through nonlinear covariance shrinkage or cross-validation~\citep{ledoit2012nonlinear,bartz2016cross} in high-dimensional covariance estimation.
A similar selection bias occurs in deep-learning Hessian and GGN estimates: directions with extreme curvature on one mini-batch have less extreme scales on others~\citep{tatzel2025debiasing}.
We examine these estimation effects in mini-batch gradient second moments and their consequences for divergence-dependent Kronecker preconditioning.

\clearpage

\section{Proofs}
\label{app:matrix_divergence_details}
Throughout the theoretical analysis, following \citet{eschenhagen2026purifying,lin2026understanding}, we assume the gradient second moment $C$ to be positive definite.
\paragraph{Matrix notation and divergence formulas.}
For $M=U\operatorname{diag}(\mu_1,\ldots,\mu_d)U^\top\in\mathbb S_{++}^d$, matrix functions act on its eigenvalues:
\begin{align}
f(M)=U\operatorname{diag}(f(\mu_1),\ldots,f(\mu_d))U^\top,\quad \Phi(M)=\sum_{i=1}^d f(\mu_i),\quad \nabla \Phi(M)=f'(M).
\label{eq:trace_spectral_function}
\end{align}
We use $\|A\|_F^2=\langle A,A\rangle_{\mathrm{F}}$ and write $\nabla^2\Phi(S)[H]$ for the Hessian on a perturbation $H$.
Substituting the generators $f_{\mathrm F}(x)=x^2/2$, $f_{\mathrm{VN}}(x)=x\log x-x$, and $f_{\mathrm{KL}}(x)=-\log x$ into \cref{eq:bregman_div} gives
\begin{align}
\mathcal B_F(X,Y)&=\frac{1}{2}\|X-Y\|_F^2,\quad \mathcal B_{\mathrm{VN}}(X,Y)=\Tr(X\log X-X\log Y-X+Y),\\
\mathcal B_{\mathrm{KL}}(X,Y)&=\Tr(Y^{-1}X)-\log\det(Y^{-1}X)-d.
\end{align}
In particular, $\mathcal B_{\mathrm{KL}}(X,Y)=2D_{\mathrm{KL}}(\mathcal N(0,X)\|\mathcal N(0,Y))$.
For row-wise vectorization, $\operatorname{vec}(AGB)=(A\otimes B^\top)\operatorname{vec}(G)$.

\subsection{Proof of \texorpdfstring{\cref{prop:trace_power_unification}}{Proposition \getrefnumber{prop:trace_power_unification}}}
\label{app:trace_power_proof}
For $q=2$, $f_2(x)=(x-1)^2/2$ and $f_2'(x)=x-1$.
Substitution into \cref{eq:bregman_div} gives
\begin{align}
\mathcal B_{\Phi_2}(X,Y)=\frac12\Tr(X^2-Y^2)-\Tr(Y(X-Y))=\frac12\|X-Y\|_F^2=\mathcal B_F(X,Y).
\end{align}
L'H\^opital's rule gives $f_q(x)\to x\log x-x+1$ as $q\to1$ and $f_q(x)\to-\log x+x-1$ as $q\to0$.
The derivative $f_q'(x)=(x^{q-1}-1)/(q-1)$ converges to $\log x$ and $1-x^{-1}$, respectively.
For fixed $X,Y\in\mathbb S_{++}^d$, we can obtain
\begin{align}
\lim_{q\to1}\mathcal B_{\Phi_q}(X,Y)=\mathcal B_{\Phi_1}(X,Y)&=\Tr(X\log X-X\log Y-X+Y)=\mathcal B_{\mathrm{VN}}(X,Y),\\
\lim_{q\to0}\mathcal B_{\Phi_q}(X,Y)=\mathcal B_{\Phi_0}(X,Y)&=\Tr(Y^{-1}X)-\log\det(Y^{-1}X)-d=\mathcal B_{\mathrm{KL}}(X,Y).
\end{align}

For $q=\frac{1}{2}$, which we call square root (SQ), $f_{\nicefrac{1}{2}}(x)=-4\sqrt{x}+2x+2$ and $f_{\nicefrac{1}{2}}'(x)=2-2x^{-\nicefrac{1}{2}}$.
Substituting into \cref{eq:bregman_div} yields
\begin{align}
\frac12\mathcal B_{\Phi_{\nicefrac{1}{2}}}(C,S)=\Tr(S^{\nicefrac{1}{2}})+\Tr(CS^{-\nicefrac{1}{2}})-2\Tr(C^{\nicefrac{1}{2}}).
\label{eq:psgd_sqrt_divergence}
\end{align}
Under $P=S^{-\nicefrac{1}{2}}$, this becomes
\begin{align}
\frac12\mathcal B_{\Phi_{\nicefrac{1}{2}}}(C,S)=\Tr(PC)+\Tr(P^{-1})-2\Tr(C^{\nicefrac{1}{2}}).
\end{align}
The first two terms are exactly the PSGD objective in \cref{eq:psgd_whitening_objective}; the last term is independent of $P$.
Thus SQ and PSGD have the same objective up to a positive scale and an additive constant.
For $S=L\otimes R$, the change of variables gives $P=L^{-\nicefrac{1}{2}}\otimes R^{-\nicefrac{1}{2}}$, so the correspondence also holds under the Kronecker constraint.

We now derive the factor equation in~\cref{eq:trace_power_bilateral_solutions}.
For $q\neq0,1$,
\begin{align}
\nabla \Phi_q(S)=\frac{S^{q-1}-I}{q-1}.
\end{align}
By substituting this into the Bregman divergence and keeping only the terms that depend on $S$, we can derive
\begin{align}
\mathcal B_{\Phi_q}(C,S)=\frac1q\Tr(S^q)-\frac1{q-1}\Tr(CS^{q-1})+\mathrm{const}.
\label{eq:trace_power_bregman_expansion}
\end{align}
For $S=L\otimes R$,
\begin{align}
\Tr(S^q)=\Tr(L^q)\Tr(R^q),
\end{align}
and the definition of $C=\mathbb{E}[gg^\top]$ gives
\begin{align}
\Tr(CS^{q-1})=\mathbb E\left[\Tr\left(L^{q-1}GR^{q-1}G^\top\right)\right].
\label{eq:trace_power_kron_cross_term}
\end{align}
Fix $R$, and write
\begin{align}
A_R:=\mathbb E[GR^{q-1}G^\top],\quad a_R:=\Tr(R^q).
\end{align}
Combining all the equations above, we can write the objective $\mathcal B_{\Phi_q}$ that depends on $L$ as
\begin{align}
J_q(L;R)=\frac{a_R}{q}\Tr(L^q)-\frac1{q-1}\Tr(A_RL^{q-1}).
\label{eq:trace_power_left_conditional}
\end{align}

For $q\neq0,1$, let $X=L^{q-1}$.
Then $L^q=X^{\nicefrac{q}{q-1}}$, and
\begin{align}
\nabla_X J_q=\frac{1}{q-1}\left(a_RX^{\nicefrac{1}{q-1}}-A_R\right)=\frac{1}{q-1}(a_RL-A_R).
\end{align}
Thus the stationary equation gives $L=A_R/a_R=\mathbb E[GR^{q-1}G^\top]/\Tr(R^q)$. The objective is strictly convex in $X$, so this stationary point is the unique minimizer.

For the VN defined by $q\to1$, $\Phi_1(S)=\Tr(S\log S-S+I)$.
The limit established above gives
\begin{align}
\mathcal B_{\Phi_1}(C,S)=\Tr(S)-\Tr(C\log S)+\mathrm{const}.
\end{align}
Using
\begin{align}
\Tr(L\otimes R)=\Tr(L)\Tr(R),\qquad
\log(L\otimes R)=\log L\otimes I_n+I_m\otimes\log R,
\end{align}
and the row-wise vectorization identity
\begin{align}
\operatorname{vec}(G)^\top(\log L\otimes I_n)\operatorname{vec}(G)
=\Tr(GG^\top\log L),
\end{align}
the terms depending on $L$ reduce to
\begin{align}
J_1(L;R)=\Tr(R)\Tr(L)-\Tr(A_R\log L),
\qquad A_R:=\mathbb E[GG^\top].
\end{align}
Here $R$ is fixed, so the term involving $I_m\otimes\log R$ is constant in $L$.
Set $X=\log L$. Then
\begin{align}
J_1(e^X;R)=\Tr(R)\Tr(e^X)-\Tr(A_RX),
\end{align}
whose gradient with respect to $X$ is
\begin{align}
\nabla_X J_1=\Tr(R)e^X-A_R.
\end{align}
Setting this to zero gives
$L=e^X=A_R/\Tr(R)$.
Since $\Tr(e^X)$ is strictly convex on symmetric matrices, this is the unique minimizer.

For the KL defined by $q\to0$, $\Phi_0(S)=-\log\det S+\Tr(S)-mn$.
Its affine terms cancel in the divergence, giving
\begin{align}
\mathcal B_{\Phi_0}(C,S)=\Tr(CS^{-1})+\log\det S+\mathrm{const}.
\end{align}
Using $S^{-1}=L^{-1}\otimes R^{-1}$ and $\log\det(L\otimes R)=n\log\det L+m\log\det R$, the terms that depend on $L$ are
\begin{align}
J_0(L;R)=\Tr(A_RL^{-1})+n\log\det L,\quad A_R:=\mathbb E[GR^{-1}G^\top].
\end{align}
Setting $X=L^{-1}$, we can derive:
\begin{align}
J_0(X^{-1};R)=\Tr(A_RX)-n\log\det X,
\qquad
\nabla_X J_0=A_R-nX^{-1}.
\end{align}
Setting the gradient to zero gives $X^{-1}=A_R/n$, hence
\begin{align}
L=\frac{A_R}{n}
=\frac{\mathbb E[GR^{-1}G^\top]}{\Tr(R^0)}.
\end{align}
Since $-\log\det X$ is strictly convex, this is the unique minimizer.

The cases $q=1$ and $q=0$ give the same formula, so all cases can be written as
\begin{align}
L_q(R)=\frac{\mathbb E[GR^{q-1}G^\top]}{\Tr(R^q)}.
\end{align}
Exchanging $L$ and $R$ gives
\begin{align}
R_q(L)=\frac{\mathbb E[G^\top L^{q-1}G]}{\Tr(L^q)}.
\end{align}

\subsection{Limited Kronecker Capacity}
\label{app:unavoidable_kron_error}

A single Kronecker product generally cannot represent a gradient second moment exactly.
The factors $L$ and $R$ have $m(m+1)/2+n(n+1)/2-1$ free parameters after accounting for reciprocal rescaling, fewer than the $mn(mn+1)/2$ parameters of a general symmetric matrix when $m,n\geq2$.
Exact matching is possible for second moments with Kronecker structure; otherwise, the choice of divergence determines how approximation errors are allocated.

\addtocontents{toc}{\protect\setcounter{tocdepth}{3}}
\subsection{Spectral Error Decomposition and Kronecker Approximations}
\label{sec:proof_spectral_scale_preference}

\subsubsection{Proof of~\cref{prop:spectral_scale_preference}}

\paragraph{Notation and spectral costs.}
Write $U=[u_1,\ldots,u_d]$, $V=[v_1,\ldots,v_d]$, $\Lambda=\operatorname{diag}(\lambda_1,\ldots,\lambda_d)$, and $\Omega=\operatorname{diag}(\omega_1,\ldots,\omega_d)$, with both spectra ordered decreasingly.
The scalar divergence and cost on eigenspace misalignment are
\begin{align}
d_f(\lambda,\omega)&:=f(\lambda)-f(\omega)-f'(\omega)(\lambda-\omega),\\
\Delta_f(\Lambda,\Omega;P)&:=-\sum_{i,j=1}^d\lambda_i f'(\omega_j)(P_{ij}-\delta_{ij}),\quad P_{ij}=\langle u_i,v_j\rangle^2,
\end{align}
where $\delta_{ij}$ is the Kronecker delta.

\paragraph{Spectral decomposition.}
By the definition of the Bregman matrix divergence,
\begin{align}
\mathcal{B}_\Phi(A,B) = \Phi(A)-\Phi(B) - \Tr \left( \nabla \Phi(B)(A-B) \right).
\label{eq:proof_bregman_definition}
\end{align}
Since $\Phi(M)=\Tr(f(M))$ is a spectral function, its gradient is given by $\nabla \Phi(M)=f'(M)$.
Therefore, for
$B=V\Omega V^\top$,
\begin{align}
\nabla \Phi(B) = Vf'(\Omega)V^\top.
\label{eq:proof_spectral_gradient}
\end{align}

Let $Q=U^\top V$.
Then $Q$ is orthogonal and
$Q_{ij}=\langle u_i,v_j\rangle$, so that
$P_{ij}=Q_{ij}^2$.
Using the cyclic property of the trace,
\begin{align}
\Tr\left(\nabla \Phi(B)A\right) = \Tr \left( Vf'(\Omega)V^\top U\Lambda U^\top \right)= \Tr \left( f'(\Omega)Q^\top\Lambda Q \right)= \sum_{i=1}^{d}\sum_{j=1}^{d} \lambda_i f'(\omega_j)P_{ij}.
\label{eq:proof_cross_trace}
\end{align}
Similarly, using $V^\top V=I$,
\begin{align}
\Tr\left(\nabla \Phi(B)B\right) = \Tr \left( Vf'(\Omega)V^\top V\Omega V^\top \right)=\Tr \left( f'(\Omega)V^\top V\Omega V^\top V\right)=\sum_{j=1}^{d}\omega_j f'(\omega_j).
\label{eq:proof_self_trace}
\end{align}
Substituting \cref{eq:proof_cross_trace,eq:proof_self_trace} into \cref{eq:proof_bregman_definition} gives
\begin{align}
\mathcal{B}_\Phi(A,B) = \sum_{i=1}^{d}f(\lambda_i) - \sum_{j=1}^{d}f(\omega_j) + \sum_{j=1}^{d}\omega_jf'(\omega_j) - \sum_{i=1}^{d}\sum_{j=1}^{d} \lambda_i f'(\omega_j)P_{ij}.
\label{eq:proof_expanded_divergence}
\end{align}
Adding and subtracting
$\sum_{i=1}^{d}\lambda_i f'(\omega_i)$ yields
\begin{align}
\mathcal{B}_\Phi(A,B) &= \sum_{i=1}^{d} \left[ f(\lambda_i)-f(\omega_i) -f'(\omega_i)(\lambda_i-\omega_i) \right]  + \sum_{i=1}^{d}\lambda_i f'(\omega_i) - \sum_{i=1}^{d}\sum_{j=1}^{d} \lambda_i f'(\omega_j)P_{ij}\\
&= \sum_{i=1}^{d}d_f(\lambda_i,\omega_i) + \sum_{i=1}^{d}\sum_{j=1}^{d} -\lambda_i f'(\omega_j) \left(P_{ij}-\delta_{ij}\right)  \\
&= \sum_{i=1}^{d}d_f(\lambda_i,\omega_i) + \Delta_f(\Lambda,\Omega;P).
\label{eq:proof_spectral_decomposition}
\end{align}
The first term measures eigenvalue mismatch for aligned eigenspaces; $\Delta_f$ is the cost of eigenspace misalignment.

\paragraph{Nonnegativity of $\Delta_f(\Lambda,\Omega;P)$.}
Since $Q$ is orthogonal, $P_{ij}=Q_{ij}^2$ is doubly stochastic.
The Birkhoff--von Neumann theorem therefore gives
$P=\sum_k\alpha_k\Pi_k$, where $\Pi_k$ are permutation matrices,
$\alpha_k\geq0$, and $\sum_k\alpha_k=1$.
Both $\lambda_i$ and $f'(\omega_i)$ are decreasing, since the eigenvalues are ordered and $f$ is convex.
The rearrangement inequality says that pairing them in this order maximizes their product sum.
Thus, for each permutation $\sigma_k$ associated with $\Pi_k$,
\begin{align}
\sum_i\lambda_i f'(\omega_{\sigma_k(i)})
\leq \sum_i\lambda_i f'(\omega_i).
\end{align}
Taking the weighted average gives
\begin{align}
\sum_{i,j}\lambda_i f'(\omega_j)P_{ij}
\leq \sum_i\lambda_i f'(\omega_i),
\label{eq:proof_rearrangement_detailed}
\end{align}
which proves $\Delta_f(\Lambda,\Omega;P)\geq0$.

\subsubsection{Divergence-Specific Spectral Penalties}
We overlook the affine terms.
For Frobenius, $f_{\mathrm{F}}(x)=\frac{1}{2}x^2$ and $f_{\mathrm{F}}'(x)=x$, which gives
\begin{align}
d_{\mathrm{F}}(\lambda,\omega) = \frac{1}{2}(\lambda-\omega)^2.
\label{eq:proof_frobenius_terms}
\end{align}
For von Neumann, $f_{\mathrm{VN}}(x)=x\log x-x$ and $f_{\mathrm{VN}}'(x)=\log x$, and hence
\begin{align}
d_{\mathrm{VN}}(\lambda,\omega) = \lambda\log\frac{\lambda}{\omega} -\lambda+\omega.
\label{eq:proof_vn_terms}
\end{align}
For the square-root member, $f_{\mathrm{SQ}}(x)=-4\sqrt{x}$ and $f_{\mathrm{SQ}}'(x)=-2x^{-\nicefrac{1}{2}}$, giving
\begin{align}
d_{\mathrm{SQ}}(\lambda,\omega)=2\left(\sqrt{\omega}+\frac{\lambda}{\sqrt{\omega}}-2\sqrt{\lambda}\right).
\label{eq:proof_sqrt_terms}
\end{align}
For LogDet, $f_{\mathrm{KL}}(x) = -\log x$ and $f_{\mathrm{KL}}'(x)= -\nicefrac{1}{x}$, which yields
\begin{align}
d_{\mathrm{KL}}(\lambda,\omega) = \log\frac{\omega}{\lambda} + \frac{\lambda}{\omega} -1.
\label{eq:proof_logdet_terms}
\end{align}
Setting $\omega=r\lambda$ gives \cref{eq:proof_relative_spectral_penalties}, which establishes the scalar scale dependence in \cref{eq:relative_spectral_penalties}.

By using the exponents $q_{\mathrm D}$ in \cref{prop:trace_power_unification}, Frobenius and KL give
\begin{align}
\Delta_{\mathrm{F}}(r\Lambda,r\Omega;P) &= r^2\Delta_{\mathrm{F}}(\Lambda,\Omega;P), & \Delta_{\mathrm{KL}}(r\Lambda,r\Omega;P) &= \Delta_{\mathrm{KL}}(\Lambda,\Omega;P).
\label{eq:proof_f_ld_scale}
\end{align}
For von Neumann, using
$\sum_{j=1}^{d}(P_{ij}-\delta_{ij})=0$, we obtain
\begin{align}
\Delta_{\mathrm{VN}}(r\Lambda,r\Omega;P) &= -r \sum_{i=1}^{d}\sum_{j=1}^{d} \lambda_i \log(r\omega_j) \left(P_{ij}-\delta_{ij}\right)  \\ &= -r\log r \sum_{i=1}^{d} \lambda_i \sum_{j=1}^{d} \left(P_{ij}-\delta_{ij}\right) + r\Delta_{\mathrm{VN}}(\Lambda,\Omega;P)  \\ &= r\Delta_{\mathrm{VN}}(\Lambda,\Omega;P).
\label{eq:proof_vn_scale}
\end{align}
For SQ, $f'_{\mathrm{SQ}}(\omega)=-2\omega^{-\nicefrac{1}{2}}$ gives $\Delta_{\mathrm{SQ}}(r\Lambda,r\Omega;P)=r^{\nicefrac{1}{2}}\Delta_{\mathrm{SQ}}(\Lambda,\Omega;P)$ directly.
Therefore, for
$\mathrm{D}\in\{\mathrm{F},\mathrm{VN},\mathrm{SQ},\mathrm{KL}\}$,
\begin{align}
\Delta_{\mathrm{D}}(r\Lambda,r\Omega;P) = r^{q_{\mathrm D}} \Delta_{\mathrm{D}}(\Lambda,\Omega;P), \quad r>0.
\label{eq:proof_eigenspace_scale_homogeneity}
\end{align}

\subsubsection{Reciprocal Asymmetry}
Substituting $r$ and $r^{-1}$ into \cref{eq:proof_relative_spectral_penalties} proves \cref{eq:over_under_ordering}.
For F, $(r-1)^2>(r^{-1}-1)^2$ when $r>1$; for SQ, the two penalties are equal.
For VN and KL, the differences are $\lambda h(r)$ and $-h(r)$, respectively, where $h(r):=r-r^{-1}-2\log r$.
Since $h(1)=0$ and $h'(r)=(r-1)^2/r^2>0$ for $r>1$, VN penalizes overestimation more, while KL penalizes underestimation more.

\subsubsection{Kronecker Approximations}
\label{sec:kronecker_spectral_penalties}
Let $C\in\mathbb S_{++}^{mn}$ have eigenpairs $(\lambda_i,u_i)$ ordered by decreasing eigenvalue, and let $L$ and $R$ have eigenpairs $(\ell_a,v_{L,a})$, $a=1,\ldots,m$, and $(\rho_b,v_{R,b})$, $b=1,\ldots,n$, respectively.
Sort the $mn$ eigenvalue products $\ell_a\rho_b$ in decreasing order.
If the $j$-th product comes from the $a$-th eigenvalue of $L$ and the $b$-th eigenvalue of $R$, write $a_j=a$ and $b_j=b$.
The corresponding eigenpair of $S=L\otimes R$ is
\begin{align}
\omega_j=\ell_{a_j}\rho_{b_j},\qquad v_j=v_{L,a_j}\otimes v_{R,b_j},\qquad j=1,\ldots,mn.
\end{align}
Set $\Lambda=\operatorname{diag}(\lambda_i)$, $\Omega=\operatorname{diag}(\omega_j)$, and $P_{ij}=|\langle u_i,v_j\rangle|^2$ as in \cref{prop:spectral_scale_preference}.
We can obtain:
\begin{align}
\mathcal B_{\Phi_q}(C,L\otimes R)=\underbrace{\sum_{i=1}^{mn}d_q(\lambda_i,\ell_{a_i}\rho_{b_i})}_{\text{eigenvalue mismatch}}+\underbrace{\Delta_q(\Lambda,\Omega;P)}_{\text{eigenspace misalignment}},\quad \Delta_q(\Lambda,\Omega;P)\geq0.
\end{align}
For Kronecker approximations, both mismatch terms retain the scaling relations in \cref{eq:relative_spectral_penalties}, and the scalar penalties still satisfy the reciprocal-error comparisons in \cref{eq:over_under_ordering}.

\addtocontents{toc}{\protect\setcounter{tocdepth}{2}}
\subsection{Proof of \texorpdfstring{\cref{prop:finite_sample_factor_stability}}{Proposition \getrefnumber{prop:finite_sample_factor_stability}}}
\label{app:proof_finite_sample}
We only prove the result for the left factor; the right-factor result follows symmetrically.
Let $A:=R_\star^{q_{\mathrm D}}$, $\widetilde L_{\mathrm D} := L_\star^{-\nicefrac{1}{2}}\widehat L_{\mathrm D}L_\star^{-\nicefrac{1}{2}}$.
By~\cref{eq:finite_sample_factor_estimator}, we have
\begin{align}
\widetilde L_{\mathrm D} = \frac{1}{N}\sum_{b=1}^{N} M_b, \quad M_b:= \frac{Z_b A Z_b^\top}{\Tr(A)}.
\end{align}
Let $z_i^\top$ denote the $i$-th row of $Z_b$, so $z_i\sim\mathcal N(0,I_n)$.
Then
\begin{align}
\mathbb{E}\left[(Z_bAZ_b^\top)_{ij}\right] = \mathbb{E}[z_i^\top A z_j] =
\begin{cases}
\Tr(A), & i=j,\\ 0, & i\neq j,
\end{cases}
\end{align}
where we use $\mathbb{E}[z_i z_i^\top]=I$ and the independence of different rows.
Hence, $\mathbb{E}[Z_bAZ_b^\top]=\Tr(A)I$ and $\mathbb{E}[M_b]=I$.
Using $\widetilde L_{\mathrm D}-I=\nicefrac{1}{N}\sum_{b=1}^N(M_b-I)$, we obtain
\begin{align}
\mathbb{E}\left[\left\|\widetilde L_{\mathrm D}-I\right\|_F^2\right] = \frac{1}{N^2} \sum_{b,c=1}^{N} \mathbb{E}\left[ \left\langle M_b-I,M_c-I\right\rangle_{\mathrm{F}} \right].
\end{align}
For $b\neq c$, independence together with $\mathbb{E}[M_b-I]=0$ gives
\begin{align}
\mathbb{E}\left[ \left\langle M_b-I,M_c-I\right\rangle_{\mathrm{F}} \right]=0.
\end{align}
Hence, only the $N$ diagonal terms remain, yielding
\begin{align}
\mathbb{E}\left[\left\|\widetilde L_{\mathrm D}-I\right\|_F^2\right] = \frac{1}{N} \mathbb{E}\left[\left\|M_1-I\right\|_F^2\right].
\label{eq:proof_sample_average_variance}
\end{align}
Let $z_i^\top$ denote the $i$-th row of $Z_1$.
For the diagonal entries, $(M_1)_{ii}=\frac{z_i^\top A z_i}{\Tr(A)}$.
Since $z_i\sim\mathcal N(0,I_n)$ and $A$ is symmetric, the Gaussian quadratic-form identity $\operatorname{Var}(z_i^\top A z_i)=2\Tr(A^2)$ gives
\begin{align}
\mathbb{E}\left[\big((M_1)_{ii}-1\big)^2\right] = \frac{2\Tr(A^2)}{\Tr(A)^2}.
\end{align}
For $i\neq j$, using the independence of $z_i$ and $z_j$ and
$\mathbb{E}[z_jz_j^\top]=I$,
\begin{align}
\mathbb{E}\left[(z_i^\top A z_j)^2\right] = \mathbb{E}\left[z_i^\top A\,\mathbb{E}[z_jz_j^\top]\,A z_i\right] = \Tr(A^2),
\end{align}
and therefore
\begin{align}
\mathbb{E}\left[(M_1)_{ij}^2\right] = \frac{\Tr(A^2)}{\Tr(A)^2}.
\end{align}
Summing over the $m$ diagonal and $m(m-1)$ off-diagonal entries yields
\begin{align}
\mathbb{E}\left[\left\|M_1-I\right\|_F^2\right] = m(m+1) \frac{\Tr(A^2)} {\Tr(A)^2} = \frac{m(m+1)}{r_{\mathrm{eff}}(A)}.
\end{align}
Combining this with~\cref{eq:proof_sample_average_variance} and dividing by $m$ gives
\begin{align}
\mathbb{E}\left[ \frac{1}{m} \left\| L_\star^{-\nicefrac{1}{2}}\widehat L_{\mathrm D}L_\star^{-\nicefrac{1}{2}}-I \right\|_F^2 \right] = \frac{m+1} {N\,r_{\mathrm{eff}}(R_\star^{q_{\mathrm D}})}.
\end{align}
The right-factor identity follows by exchanging $L_\star,m$ with
$R_\star,n$.

It remains to compare the four exponents $q_{\mathrm D}$.
For positive eigenvalues $\{\rho_i\}$, define $h(q):=\log\sum_i\rho_i^q$ and $p_i(q):=\rho_i^q/\sum_j\rho_j^q$.
Then
\begin{align}
h''(q)=\sum_i p_i(q)(\log\rho_i)^2-\left(\sum_i p_i(q)\log\rho_i\right)^2\geq0,
\end{align}
so $h'$ is non-decreasing.
Noting that the log effective rank is $\log r_{\mathrm{eff}}(R_\star^q) = 2h(q) - h(2q)$, we have
\begin{align}
\frac{d}{dq}\log r_{\mathrm{eff}}(R_\star^q)=2h'(q)-2h'(2q)\leq0,\quad q\geq0.
\end{align}
Thus effective rank is decreasing in $q_D$.
Evaluating at $q=q_{\mathrm D}$ gives
\begin{align}
r_{\mathrm{eff}}(R_\star^2)\leq r_{\mathrm{eff}}(R_\star)\leq r_{\mathrm{eff}}(R_\star^{\nicefrac{1}{2}})\leq r_{\mathrm{eff}}(I)=n.
\end{align}
Since the expected estimation error is proportional to the reciprocal of effective rank, we obtain
\begin{align}
\operatorname{Err}_{\mathrm{KL}} \leq \operatorname{Err}_{\mathrm{SQ}} \leq \operatorname{Err}_{\mathrm{VN}} \leq \operatorname{Err}_{\mathrm F}.
\end{align}
The inequalities become equalities for an isotropic opposite-factor spectrum.

\subsection{Local Geometry of SQ and KL}
\label{app:psgd_kl_comparison}
\label{sec:local_divergence_geometry}
We compare the local penalties of SQ and KL through their Hessians.
KL matches the Gaussian Fisher geometry up to a constant factor; SQ does not.

For $S\in\mathbb S_{++}^d$ and symmetric $H$, Taylor expansion gives
\begin{align}
\mathcal B_{\Phi_{q_{\mathrm D}}}(S+\varepsilon H,S)=\frac{\varepsilon^2}{2}\left\langle H,\nabla^2\Phi_{q_{\mathrm D}}(S)[H]\right\rangle_{\mathrm{F}}+o(\varepsilon^2).
\label{eq:trace_power_local_geometry}
\end{align}
Thus the Hessian determines the penalty for small perturbations.
For eigenvalue perturbations, $f_{q_{\mathrm D}}''(s)=s^{q_{\mathrm D}-2}$, giving $s^{-\nicefrac{3}{2}}$ for SQ and $s^{-2}$ for KL.

To compare this geometry with Gaussian Fisher information, consider the auxiliary model $x\sim\mathcal N(0,S)$ with density function $p_S$.
The directional score for a symmetric perturbation $H$ is
\begin{align}
s_H(x):=D_S\log p_S(x)[H]=\frac12\left(x^\top S^{-1}HS^{-1}x-\Tr(S^{-1}H)\right).
\end{align}
Its second moment gives the Fisher quadratic form:
\begin{align}
\mathbb E_{x\sim p_S}[s_H(x)^2]=\frac12\Tr(S^{-1}HS^{-1}H)=\langle H,\mathcal I_S[H]\rangle_{\mathrm{F}},\quad \mathcal I_S[H]:=\frac12S^{-1}HS^{-1}.
\label{eq:gaussian_covariance_fisher}
\end{align}
The KL generator satisfies
\begin{align}
\nabla^2\Phi_0(S)[H]=S^{-1}HS^{-1}=2\mathcal I_S[H].
\label{eq:logdet_fisher_hessian}
\end{align}
Hence its Hessian equals twice the Gaussian Fisher information operator~\citep{lin2026understanding}.
SQ does not share the Gaussian Fisher geometry.
At $S=sI$, its Hessian is $s^{-3/2}H$, whereas the Gaussian Fisher operator is $H/(2s^2)$.
Their ratio, $2\sqrt{s}$, varies with $s$, so they cannot agree.
Among the four divergences, only KL has a Hessian proportional to the Gaussian Fisher operator for every $S\succ0$.

\clearpage

\section{Preconditioner Update Details}
\label{app:shampoo_variant_updates}

We describe the factor updates of original Shampoo, PSGD-Kron, SQ-Shampoo, and KL-Shampoo, then specify the factor statistics, offline fitting procedure, and online EMA scheme used in our experiments.

\subsection{Original Shampoo}
\label{app:original_shampoo_updates}
In the EMA form of Shampoo~\citep{gupta2018shampoo,anil2020scalable}, the marginal statistics satisfy
\begin{align}
L_t=\beta_2L_{t-1}+(1-\beta_2)G_tG_t^\top,\quad R_t=\beta_2R_{t-1}+(1-\beta_2)G_t^\top G_t.
\label{eq:shampoo_factor_updates}
\end{align}
Ignoring momentum, the parameter update is $\Theta_{t+1}=\Theta_t-\eta_tL_t^{-p}G_tR_t^{-p}$.
This applies the inverse square root of $S_t=L_t^{2p}\otimes R_t^{2p}$: $p=\nicefrac{1}{4}$ for original Shampoo, while $p=\nicefrac{1}{2}$ applies the inverse square root of $L_t\otimes R_t$.

\subsection{PSGD-Kron, SQ-Shampoo, and KL-Shampoo}
\label{app:psgd_updates}
\label{app:objective_geometry_updates}

Under $P=S^{-\nicefrac{1}{2}}$, SQ-Shampoo shares PSGD's population gradient-whitening objective (\cref{eq:psgd_sqrt_divergence}), but uses different factor updates.

\paragraph{Factor updates.}
PSGD parameterizes the preconditioner as $P=Q^\top Q$ and optimizes $Q$ on a matrix Lie group~\citep{li2019preconditioner}.
In the Kronecker form,
\begin{align}
Q=Q_L\otimes Q_R,\quad P=P_L\otimes P_R,\quad P_L:=Q_L^\top Q_L,\quad P_R:=Q_R^\top Q_R.
\label{eq:psgd_kron_parameterization}
\end{align}
With row-wise vectorization, the parameter update is $\Theta_{t+1}=\Theta_t-\eta_tP_{L,t}G_tP_{R,t}$.
For the original relative-gradient coordinates $dQ=\mathcal E Q$, the factors follow
\begin{align}
Q_{L,t+1}=(I-\mu_t\mathcal R_{L,t})Q_{L,t},\quad Q_{R,t+1}=(I-\mu_t\mathcal R_{R,t})Q_{R,t},
\label{eq:psgd_kron_relative_gradient}
\end{align}
where $\mathcal R_{L,t},\mathcal R_{R,t}$ are the factor-wise relative gradients of the whitening objective in \cref{eq:psgd_whitening_objective}.
The \href{https://github.com/lixilinx/psgd_torch}{official implementation} also provides other update geometries, including $dQ=Q\mathcal E Q$.

Our SQ- and KL-Shampoo implementations follow the spectral EMA scheme of \citet{lin2026understanding}.
Both average the factor statistics over time, update eigenvalue estimates in the current eigenbases, and periodically refresh bases.
The two methods differ in the statistics supplied to this scheme: SQ uses the square-root divergence statistics, while KL uses the KL statistics in \cref{tab:variant_statistics}.
\cref{alg:online_bregman_shampoo} gives the full update.

\begin{table}[H]
\centering
\caption{Objectives and update schemes discussed here.
Objectives are expressed in $S$, up to positive scale and additive constants; the PSGD row refers to gradient whitening.}
\label{tab:objective_geometry_updates}
\small
\begin{tabularx}{\linewidth}{@{}l l X@{}}
\toprule
\textbf{Method} & \textbf{Population objective} & \textbf{Factor updates discussed here} \\
\midrule
PSGD-Kron & SQ divergence & Original relative-gradient steps; other geometries are available \\
SQ-Shampoo & SQ divergence & Spectral EMA of SQ statistics \\
KL-Shampoo & KL divergence & Spectral EMA of KL statistics \\
\bottomrule
\end{tabularx}
\end{table}

\subsection{Divergence-Based Factor Statistics}
\cref{tab:variant_statistics} gives the factor statistics for F-, VN-, SQ-, and KL-Shampoo, obtained by setting $q=q_{\mathrm D}$ in \cref{eq:trace_power_bilateral_solutions}.
These statistics define both the offline and online updates that follow.
For SQ-Shampoo, $q=\nicefrac{1}{2}$ gives the bilateral fixed-point iteration
\begin{align}
L^{(k+1)}=\frac{\mathbb E[G(R^{(k)})^{-\nicefrac{1}{2}}G^\top]}{\Tr((R^{(k)})^{\nicefrac{1}{2}})},\quad R^{(k+1)}=\frac{\mathbb E[G^\top(L^{(k)})^{-\nicefrac{1}{2}}G]}{\Tr((L^{(k)})^{\nicefrac{1}{2}})}.
\label{eq:sq_shampoo_bilateral_updates}
\end{align}

\begin{table}[H]
\centering
\caption{Instantaneous factor statistics for the Shampoo variants.}
\label{tab:variant_statistics}
\small
\setlength{\tabcolsep}{5pt}
\begin{tabular}{lll}
\toprule
\textbf{Method} & \textbf{\boldmath $\Psi_L(G;L,R)$} & \textbf{\boldmath $\Psi_R(G;L,R)$} \\
\midrule
F-Shampoo & $GRG^\top/\Tr(R^2)$ & $G^\top LG/\Tr(L^2)$ \\
VN-Shampoo & $GG^\top/\Tr(R)$ & $G^\top G/\Tr(L)$ \\
SQ-Shampoo & $GR^{-\nicefrac{1}{2}}G^\top/\Tr(R^{\nicefrac{1}{2}})$ & $G^\top L^{-\nicefrac{1}{2}}G/\Tr(L^{\nicefrac{1}{2}})$ \\
KL-Shampoo & $GR^{-1}G^\top/n$ & $G^\top L^{-1}G/m$ \\
\bottomrule
\end{tabular}
\end{table}

\subsection{Offline Fixed-Point Fits}

Given $N$ gradients, the offline bilateral rule for F-, VN-, SQ-, and KL-Shampoo is
\begin{align}
\widetilde L^{(k+1)}_{\mathrm D}:=\frac{1}{N}\sum_{b=1}^{N}\Psi_{\mathrm D,L}(G_b;L_{\mathrm D}^{(k)},R_{\mathrm D}^{(k)}),\quad \widetilde R^{(k+1)}_{\mathrm D}:=\frac{1}{N}\sum_{b=1}^{N}\Psi_{\mathrm D,R}(G_b;L_{\mathrm D}^{(k)},R_{\mathrm D}^{(k)}),
\label{eq:offline_bilateral_updates}
\end{align}
followed by under-relaxation,
\begin{align}
L_{\mathrm D}^{(k+1)}=(1-\alpha)L_{\mathrm D}^{(k)}+\alpha\widetilde L_{\mathrm D}^{(k+1)},\quad R_{\mathrm D}^{(k+1)}=(1-\alpha)R_{\mathrm D}^{(k)}+\alpha\widetilde R_{\mathrm D}^{(k+1)}.
\label{eq:offline_bilateral_relaxation}
\end{align}
We initialize $L^{(0)}=I_m$ and $R^{(0)}=I_n$, use $\alpha=0.3$, and iterate to numerical convergence.

\paragraph{Remark on few-sample fitting.}
We later consider fits with $N<mn$, where the empirical second moment $\widehat C_N:=N^{-1}\sum_{b=1}^Ng_bg_b^\top$ is necessarily singular.
This does not affect the iterations above: they require only positive-definite factors.
For KL, the updates correspond to the criterion $\Tr(\widehat C_NS^{-1})+\log\det S$, with $S=L\otimes R$, which remains finite for singular $\widehat C_N$.

\subsection{Online EMA Updates}
\label{app:online_bregman_shampoo}
Following the KL-Shampoo implementation of \citep{lin2026understanding}, the optimizer maintains factor accumulators $(A_L,A_R)$ for the eigenbases and separate eigenvalue estimates $(\lambda_L,\lambda_R)$.
Writing $\bar L=Q_L\operatorname{Diag}(\lambda_L)Q_L^\top$ and $\bar R=Q_R\operatorname{Diag}(\lambda_R)Q_R^\top$, \cref{alg:online_bregman_shampoo} applies to $\mathrm{D}\in\{\mathrm{F},\mathrm{VN},\mathrm{SQ},\mathrm{KL}\}$.

\begin{algorithm}[H]
\caption{Online EMA update for a Shampoo variant}
\label{alg:online_bregman_shampoo}
\begin{algorithmic}[1]
\REQUIRE Matrix gradients $G_t$, current momentum $M_t$, method $\mathrm{D}$, second-moment EMA decay $\beta_2$, basis period $T$, and initialized state $(A_L,A_R,Q_L,Q_R,\lambda_L,\lambda_R)$
\FOR{$t=1,2,\ldots$}
\STATE Form $\bar L=Q_L\operatorname{Diag}(\lambda_L)Q_L^\top$ and $\bar R=Q_R\operatorname{Diag}(\lambda_R)Q_R^\top$
\STATE Precondition $M_t$ as $\widetilde M_t\leftarrow Q_L\operatorname{Diag}(\lambda_L^{-\nicefrac{1}{2}})Q_L^\top M_tQ_R\operatorname{Diag}(\lambda_R^{-\nicefrac{1}{2}})Q_R^\top$
\STATE $\Delta_L\leftarrow\Psi_{\mathrm{D},L}(G_t;\bar L,\bar R)$, $\Delta_R\leftarrow\Psi_{\mathrm{D},R}(G_t;\bar L,\bar R)$
\STATE $A_L\leftarrow\beta_2A_L+(1-\beta_2)\Delta_L$, $A_R\leftarrow\beta_2A_R+(1-\beta_2)\Delta_R$
\STATE $\lambda_L\leftarrow\beta_2\lambda_L+(1-\beta_2)\operatorname{diag}(Q_L^\top\Delta_LQ_L)$
\STATE $\lambda_R\leftarrow\beta_2\lambda_R+(1-\beta_2)\operatorname{diag}(Q_R^\top\Delta_RQ_R)$
\IF{$t\bmod T=0$}
\STATE Refresh $Q_L,Q_R$ from $A_L,A_R$ by eigendecomposition/QR, while preserving $\lambda_L,\lambda_R$
\ENDIF
\STATE Return the preconditioned statistic $\widetilde M_t$
\ENDFOR
\end{algorithmic}
\end{algorithm}

The input $M_t=\beta_1M_{t-1}+(1-\beta_1)G_t$ is the current momentum, with $M_t=G_t$ when momentum is disabled.
It is preconditioned before $G_t$ is added into the preconditioner statistics.
The coefficient $\beta_2$ controls the EMA of both the factor accumulators and eigenvalue estimates.
In the EMA diagnostics, $T=5$ for all four methods.

\section{Synthetic Validation}
\label{app:synthetic_details}

We examine spectral error allocation and finite-sample stability using a smooth, long-tailed second moment that is not exactly Kronecker.

\subsection{Experimental Setup}
\label{app:synthetic_common}

For matrix gradients $G_b\in\mathbb{R}^{m\times n}$, the empirical gradient second moment is
\begin{align}
\widehat C:=\frac{1}{N}\sum_{b=1}^{N}g_bg_b^\top,\quad g_b:=\operatorname{vec}_{\mathrm{row}}(G_b).
\label{eq:appendix_empirical_second_moment}
\end{align}
F-, VN-, SQ-, and KL-Shampoo share samples and use the offline updates in \cref{app:shampoo_variant_updates}.
\cref{tab:synthetic_settings} shows the detailed settings.

\begin{table}[htbp]
\centering
\caption{Settings for the synthetic experiments.}
\label{tab:synthetic_settings}
\small
\setlength{\tabcolsep}{6pt}
\begin{tabular}{lccc}
\toprule
\textbf{Setting} & \textbf{\boldmath $m\times n$} & \textbf{\boldmath $N$} & \textbf{Additional parameters} \\
\midrule
Smooth long-tailed & $16\times24$ & $4096$ & scale $=10^{-1}$, $\sigma=0.02$ \\
\bottomrule
\end{tabular}
\end{table}

\paragraph{Smooth long-tailed spectrum.}
\label{app:smooth_synthetic}

We combine three matrix-normal components with different eigenvalue decay rates and orientations:
\begin{align}
G_b:=10^{-1}\left(\sum_{k=1}^{3}w_kL_kZ_{bk}R_k^\top+\sigma E_b\right),
\label{eq:appendix_smooth_generator}
\end{align}
where $Z_{bk}$ and $E_b$ have i.i.d.\ standard Gaussian terms.
The weights $(w_1,w_2,w_3)=(1,0.3,0.15)$ give the components different strengths.
We control their eigenvalue decay through
\begin{align}
L_k:=U_k\operatorname{Diag}\bigl(i^{-\nicefrac{p_k}{2}}\bigr)_{i=1}^{m},\quad R_k:=V_k\operatorname{Diag}\bigl(j^{-\nicefrac{p_k}{2}}\bigr)_{j=1}^{n},
\end{align}
where $U_k,V_k$ are independently sampled orthogonal matrices and $(p_1,p_2,p_3)=(1.8,1.2,0.8)$.
The second-moment factors $L_kL_k^\top$ and $R_kR_k^\top$ have eigenvalues $i^{-p_k}$ and $j^{-p_k}$, respectively, so larger $p_k$ gives faster decay.
The resulting second moment is generally not a single Kronecker product.
We also add isotropic noise with $\sigma=0.02$ and scale the gradients by $10^{-1}$.

\paragraph{Fitting.}
We generate a fixed pool of $4096$ gradients.
All methods start from $L=I_{16}$ and $R=I_{24}$ and use a relaxation coefficient of $0.3$.
We stop when the sum of the two factors' relative Frobenius changes falls below $10^{-10}$, or after $10{,}000$ iterations.

\subsection{Metrics and Results}

For $S:=L\otimes R$, let $\lambda_i(M)$ denote descending eigenvalues and $U_{M,k}$ the leading $k$ eigenvectors.
We measure cumulative eigenvalue error and leading-subspace overlap by
\begin{align}
E(k):=\frac{1}{k}\sum_{i=1}^{k}\left|\log_{10}\frac{\lambda_i(S)}{\lambda_i(\widehat C)}\right|,\quad O(k):=\frac{1}{k}\left\|U_{\widehat C,k}^{\top}U_{S,k}\right\|_F^2.
\label{eq:synthetic_metrics}
\end{align}

\paragraph{Finite-sample stability.}
\label{app:synthetic_finite_sample}
For each divergence $\mathrm D$, let $S_{\mathrm D}^{\mathrm{ref}}$ be its fit to the full $4096$-sample pool and $S_{N,r}^{\mathrm D}$ its fit to $N$ samples in repeat $r$.
We measure the squared relative Frobenius error
\begin{align}
E_{\mathrm{rel},r}^{\mathrm D}(N):=\frac{1}{mn}\left\|(S_{\mathrm D}^{\mathrm{ref}})^{-\nicefrac{1}{2}}S_{N,r}^{\mathrm D}(S_{\mathrm D}^{\mathrm{ref}})^{-\nicefrac{1}{2}}-I_{mn}\right\|_F^2.
\label{eq:synthetic_stability_metric}
\end{align}
This error is invariant to reciprocal factor rescaling.
If $S_{\mathrm D}^{\mathrm{ref}}=L_\star\otimes R_\star$ and $S_{N,r}^{\mathrm D}=\widehat L_{\mathrm D}\otimes R_\star$, it reduces to $\nicefrac{1}{m}\|\widetilde L_{\mathrm D}-I_m\|_F^2$, whose expectation is $\operatorname{Err}_{\mathrm D}^{L}$ in \cref{prop:finite_sample_factor_stability}.
Here both factors are estimated jointly, and the reference is each method's large-sample Kronecker fit.

We use $20$ random permutations of the same pool and fit the first $N\in\{10,20,40,80,160,320,640\}$ gradients.
\cref{fig:effective_rank} (right) plots $\nicefrac{1}{20}\sum_{r=1}^{20}E_{\mathrm{rel},r}^{\mathrm D}(N)$.

\cref{fig:smooth_validation} shows that F prioritizes large eigenvalues but severely underestimates small ones; VN follows more mildly.
SQ and KL have larger fitted tail eigenvalues.
The mean stability errors follow KL $<$ SQ $<$ VN $<$ F at every tested $N$, consistent with the theoretical ordering despite jointly fitting the non-Kronecker second moment.

\begin{figure*}[htbp]
\centering
\includegraphics[width=0.32\textwidth]{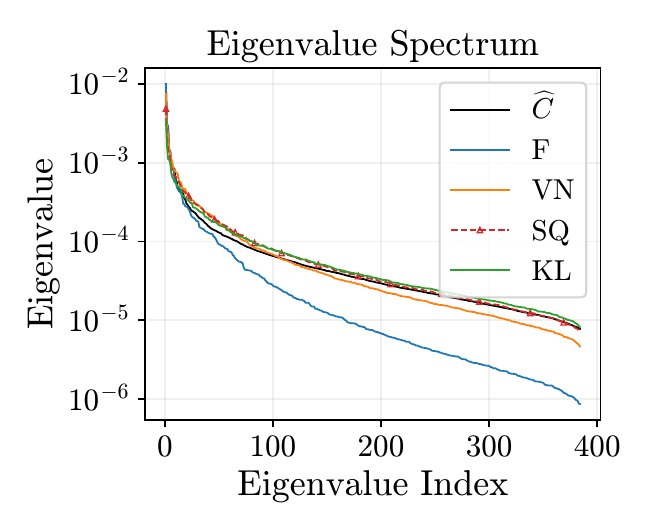}
\hfill
\includegraphics[width=0.32\textwidth]{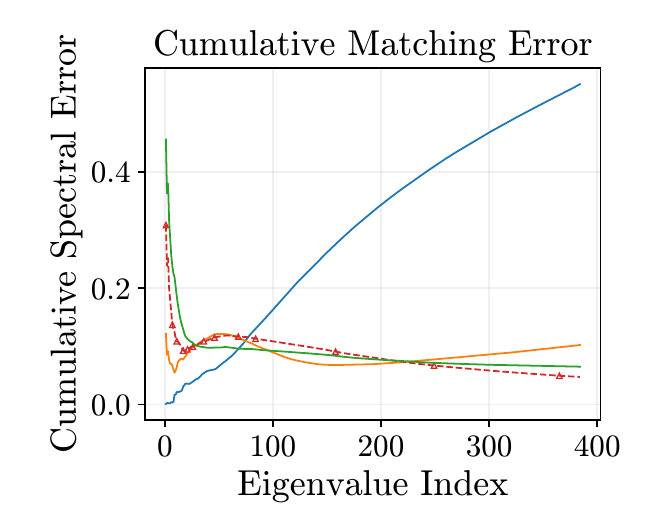}
\hfill
\includegraphics[width=0.32\textwidth]{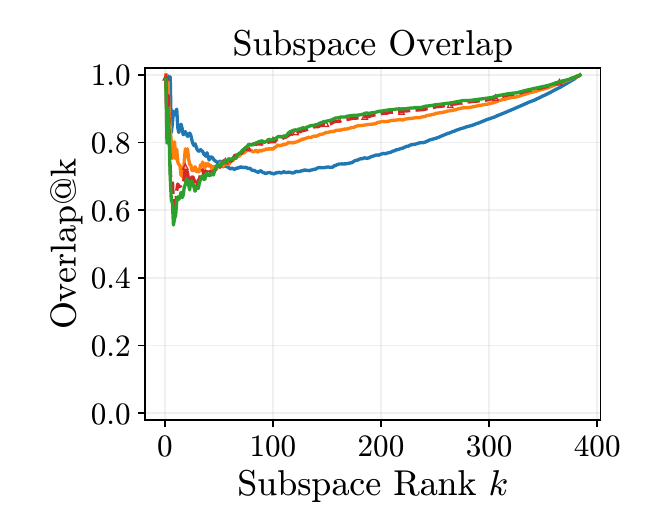}
\caption{Smooth synthetic setting. From left to right: empirical and fitted spectra, cumulative spectral error, and leading-subspace overlap. Finite-sample stability is shown in \cref{fig:effective_rank} (right).}
\label{fig:smooth_validation}
\end{figure*}

\FloatBarrier
\subsection{Directional Gain Interventions on a Quadratic}
\label{app:quadratic_gain_interventions}
\label{sec:quadratic_gain_interventions}
We test whether gain differences explain the methods' training performance by replacing KL's leading or tail gains with those of F or SQ in a stochastic quadratic.

\paragraph{Setup.}
To model stochastic gradient noise, we use linear regression with squared loss $\ell(x;z,y)=\tfrac12(z^\top x-y)^2$, where $z\sim\mathcal N(0,H)$ and $y\sim\mathcal N(0,\nicefrac{1}{6})$ are independent.
Its population loss is $\mathbb E[\ell(x;z,y)]=\tfrac12x^\top Hx+1/12$, so it is equivalent to the quadratic $f(x)=\tfrac12x^\top Hx$. 

We set $d=384$ and $H=dC_\mathrm{true}/\Tr(C_\mathrm{true})$, where $C_\mathrm{true}$ is the population second moment of the generator in \cref{eq:appendix_smooth_generator}.
Each minibatch gradient averages $B=64$ sample gradients.
Its population second moment is $C(x)=\mathbb E[g_B(x)g_B(x)^\top]$, with expectation over minibatch draws:
\begin{align}
g_B(x)=\frac1B\sum_{b=1}^B z_b(z_b^\top x-y_b),\qquad C(x)=\frac{(x^\top Hx+1/6)H}{B}+\left(1+\frac1B\right)(Hx)(Hx)^\top.
\end{align}
At each step, we fit the Kronecker factors to $40$ independent mini-batch gradients reshaped into $16\times24$, then update with a separate mini-batch.
We fit the factors using the offline updates in \cref{app:shampoo_variant_updates}, with no EMA.

We run F, VN, SQ, and KL for $1{,}000$ steps at a shared learning rate $\eta=2\times10^{-3}$ (\cref{fig:quadratic_training}).
Within each seed, all methods share the same initialization and mini-batches.
We initialize $x_0=H^{-\nicefrac{1}{2}}v$, with a Gaussian vector $v$ normalized to norm $2$, giving $f(x_0)=2$.

\textbf{Gain replacement.}
At the current $x$, fit KL and the comparison method $\mathrm D\in\{\mathrm F,\mathrm{SQ}\}$ to the same gradients.
Write $C(x)=U\Lambda U^\top$ in decreasing eigenvalue order and $P_{\mathrm D}=(L_{\mathrm D}\otimes R_{\mathrm D})^{-\nicefrac{1}{2}}$.
For selected directions $\mathcal I$, set
\begin{align}
r_i=\begin{cases}\|P_{\mathrm D}u_i\|/\|P_{\mathrm{KL}}u_i\|,&i\in\mathcal I,\\1,&i\notin\mathcal I,\end{cases}\qquad \widetilde P=P_{\mathrm{KL}}U\operatorname{Diag}(r_i)U^\top.
\end{align}
We replace KL's gains in the first $39$ (top 10\%) or last $96$ (bottom 25\%) directions by rescaling each $P_{\mathrm{KL}}u_i$ to match $\|P_{\mathrm D}u_i\|$.
Starting from the same KL iterate at step $900$, we continue to step $1{,}000$ using the original training minibatches and learning rate, recomputing the factors and population eigenbasis at each step.

\noindent\textbf{Results.}
F reaches the highest loss (\cref{fig:quadratic_gain_interventions}, left).
Averaged over the last $20$ updates and $3$ seeds, KL achieves a loss of $0.00844$.
Replacing KL's leading or tail gains with F's raises the loss to $1.441$ and $0.03333$, respectively (middle and right).
The corresponding SQ replacements yield $0.00857$ and $0.00855$, close to KL.

\begin{figure}[htbp]
\centering
\includegraphics[width=\linewidth]{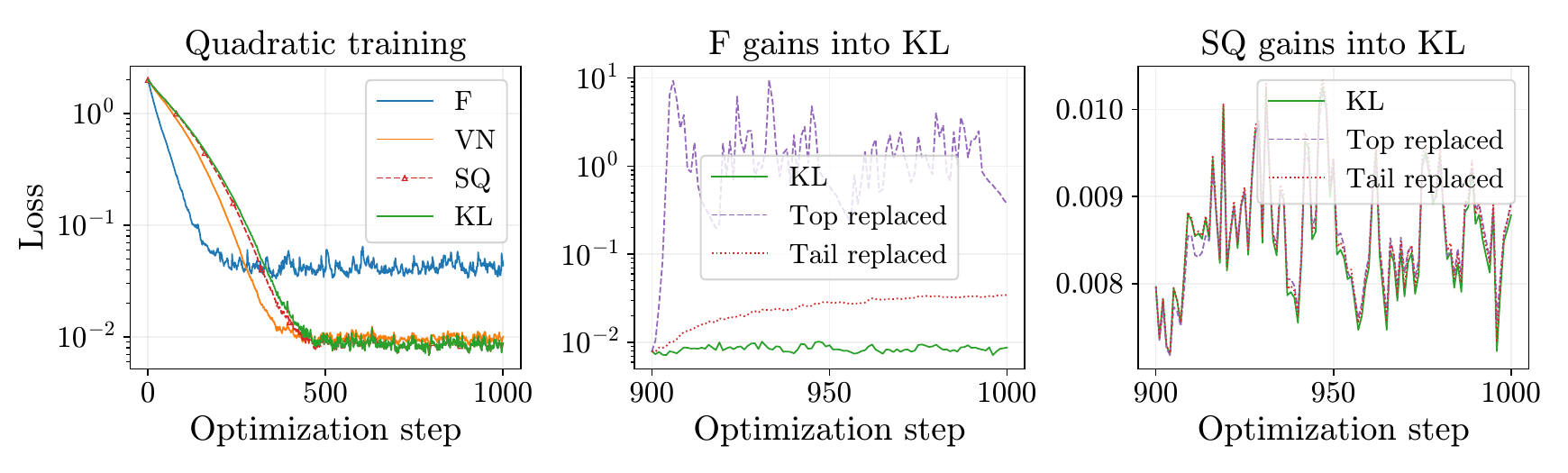}\par
\caption{Quadratic training loss averaged over $3$ seeds. Left: four methods. Middle and right: replacing KL gains from step $900$. F gains raise loss in either region; SQ gains leave it close to KL.}
\label{fig:quadratic_gain_interventions}
\label{fig:quadratic_training}
\end{figure}

\FloatBarrier
\newenvironment{appendixfigure}[3]{%
  \par\begingroup
  \renewcommand{\LTcaptype}{figure}%
  \setlength{\LTcapwidth}{\linewidth}%
  \setlength{\LTpre}{6pt}%
  \setlength{\LTpost}{8pt}%
  \begin{longtable}{@{}p{\linewidth}@{}}
  \caption{#2}\label{#1}\\
  \endfirsthead
  \caption[]{#3 (continued).}\\
  \endhead
}{%
  \end{longtable}
  \endgroup
}

\section{Real-Gradient Experimental Details}
\label{app:real_model_protocol}
We analyze $128\times128$ query, value, and attention-output gradients from block 5 of a 12-layer, four-head GPT-2 model trained with AdamW.
We freeze the model at steps 100, 2500, and 4500, giving nine checkpoint-projection pairs.
\cref{tab:real_gradient_settings} summarizes the shared settings; \cref{app:distributed_shampoo_checkpoints} extends the diagnostics to Distributed Shampoo checkpoints.

\begingroup
\setlength{\intextsep}{7pt}
\begin{table}[H]
\centering
\caption{Shared settings for real-gradient diagnostics.}
\label{tab:real_gradient_settings}
\small
\renewcommand{\arraystretch}{1.12}
\begin{tabularx}{\linewidth}{@{}l>{\raggedright\arraybackslash}X@{}}
\toprule
\textbf{Setting} & \textbf{Specification} \\
\midrule
Gradient pools & Four splits per experiment; $20{,}480$ FP32 mini-batch gradients per split, assigned by round-robin interleaving \\
Gradient collection & Batch size $64$; sequence length $512$  \\
Large-sample fit & All $20{,}480$ gradients in split $0$ \\
Few-sample fit & $40$ gradients from split $0$; four repeats \\
Fixed-point solver & Relaxation $\alpha=0.3$; tolerance $10^{-8}$; at most $1{,}000$ steps \\
Online EMA & $16$ disjoint chronological repeats from split $0$, each with $512$ gradients; $\beta_2=0.95$; eigendecomposition every five steps \\
\bottomrule
\end{tabularx}
\end{table}
\endgroup

All four methods retain their eigenvalue EMAs during each eigenbasis refresh.
Reliability uses all splits. For gain calibration, split $0$ supplies the fits and source eigenvectors; splits $1$, $2$, and $3$ supply the held-out directional scales in \cref{sec:real_inverse_root_action}.

\paragraph{Numerical details.}
Second moments and eigendecompositions use FP64.
We sort eigenvalues in decreasing order and retain those exceeding $d\times2^{-52}$ times the largest eigenvalue, where $d=16{,}384$ and $2^{-52}$ is FP64 machine precision.
For reliability summaries, retained directions are grouped by rank into top $0$--$10\%$, middle $10$--$40\%$, lower $40$--$75\%$, and tail $75$--$100\%$.

\subsection{Empirical Second Moments and Finite-Sample Effects}
\label{app:real_selection}
\addtocontents{toc}{\protect\setcounter{tocdepth}{3}}


\subsubsection{Finite-Sample Tail Selection}
\label{app:finite_sample_tail_selection}
\paragraph{Cross-fit control.}
We take eigenvectors $u_{b,i}$ of $C_b$ from a third split $b$ and compare their scales on splits $s$ and $t$:
\begin{align}
\xi^{\mathrm{cf}}_{b;s,t,i}:=\log_{10}\frac{u_{b,i}^\top C_tu_{b,i}}{u_{b,i}^\top C_su_{b,i}},\quad b\notin\{s,t\}.
\label{eq:crossfit_signed_ratio}
\end{align}
The direction $u_{b,i}$ is therefore selected independently of both second-moment estimates being compared.
We compare the six pairs with $s<t$, using each remaining split to select directions.
This gives $12$ comparisons without reverse pairs that would cancel the signed log-ratios.
Cross-fit scale errors have medians and IQRs near zero across the spectrum (\cref{fig:real_reliability_all}), supporting sampling-induced selection bias rather than distribution shift.

\paragraph{Selection bias.}
Another independent split estimates the population second moment along each source eigenvector.
The identity below shows that underestimating the source eigenvalue raises the expected held-out ratio, with larger relative errors giving larger ratios.
Let $\rho_{s\to t,i}:=10^{\xi_{s\to t,i}}$ denote the ratio before taking the logarithm in \cref{eq:heldout_second_moment_ratio}.

\begin{proposition}[Finite-Sample Tail Selection]
\label[proposition]{prop:heldout_reliability}
Let $C_s=C+E_s$ and $C_t=C+E_t$ be independent empirical gradient second moments drawn from the same distribution, where $E_s$ and $E_t$ denote their zero-mean finite-sample estimation errors.
For a source eigenpair $C_su_{s,i}=\lambda_{s,i}u_{s,i}$ with $\lambda_{s,i}>0$,
\begin{align}
\mathbb{E}\!\left[\rho_{s\rightarrow t,i}\mid C_s\right]=\frac{u_{s,i}^\top C u_{s,i}}{\lambda_{s,i}}=1-\frac{u_{s,i}^\top E_su_{s,i}}{\lambda_{s,i}}.
\label{eq:heldout_selection_bias}
\end{align}
Thus, if the source error along this direction is negative, $u_{s,i}^\top E_su_{s,i}<0$, its conditional expected held-out ratio is larger than one.
\end{proposition}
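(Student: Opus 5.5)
The argument rests on conditioning on $C_s$ and using the independence of the two splits. Fix a source eigenpair $(\lambda_{s,i},u_{s,i})$ of $C_s$ with $\lambda_{s,i}>0$. Both $u_{s,i}$ and $\lambda_{s,i}$ are measurable functions of $C_s$, so once we condition on $C_s$ they act as constants. A tie-breaking convention fixes the eigenvector up to sign, and the sign is irrelevant because $u_{s,i}$ enters quadratically. By \cref{eq:heldout_second_moment_ratio}, the held-out ratio before the logarithm is
\begin{align}
\rho_{s\to t,i}=\frac{u_{s,i}^\top C_t u_{s,i}}{\lambda_{s,i}}.
\end{align}
This ratio is linear in $C_t$ once $C_s$ is fixed.

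First, take the conditional expectation. Since $C_t$ is independent of $C_s$, we have $\mathbb E[C_t\mid C_s]=\mathbb E[C_t]=C+\mathbb E[E_t]=C$, because $E_t$ has zero mean. Linearity of the quadratic form in the matrix argument then gives
\begin{align}
\mathbb E[\rho_{s\to t,i}\mid C_s]=\frac{u_{s,i}^\top C u_{s,i}}{\lambda_{s,i}},
\end{align}
which is the first equality. Integrability holds because the ratio is a fixed quadratic form of $C_t$, and $C_t$ has a finite mean whenever the gradient second moment exists.

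Second, rewrite the population scale through the source error. From $C=C_s-E_s$ and the eigen-relation $u_{s,i}^\top C_s u_{s,i}=\lambda_{s,i}\|u_{s,i}\|^2=\lambda_{s,i}$ for a unit eigenvector, we get $u_{s,i}^\top Cu_{s,i}=\lambda_{s,i}-u_{s,i}^\top E_s u_{s,i}$. Dividing by $\lambda_{s,i}$ gives the second equality. The closing statement then follows directly: if $u_{s,i}^\top E_s u_{s,i}<0$, the subtracted term is negative, so the conditional expectation exceeds one.

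I expect the only delicate point to be justifying that conditioning on $C_s$ freezes the direction $u_{s,i}$, so that independence of $C_t$ can be used even though $u_{s,i}$ is selected from the data. This is resolved by noting that $u_{s,i}$ is $\sigma(C_s)$-measurable. It is exactly this selection that makes $E_s$ projected onto its own eigenvectors biased, while the independent held-out $E_t$ is not.
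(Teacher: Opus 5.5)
Your proposal is correct and follows the paper's proof. It conditions on $C_s$ to freeze $u_{s,i}$ and $\lambda_{s,i}$, then uses independence and $\mathbb E[C_t]=C$ for the first equality, and substitutes $C=C_s-E_s$ with the unit-norm eigen-relation for the second. Your remarks on measurability, sign invariance, and integrability only make explicit what the paper leaves implicit; the tie-breaking remark would also need a measurable selection of the eigenvector when eigenvalues repeat.
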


\begin{proof}
Conditional on $C_s$, the vector $u_{s,i}$ and the denominator $\lambda_{s,i}$ are fixed.
Independence and $\mathbb E[C_t]=C$ give $\mathbb E[\rho_{s\to t,i}\mid C_s]=u_{s,i}^\top Cu_{s,i}/\lambda_{s,i}$.
Substituting $C=C_s-E_s$ yields \cref{eq:heldout_selection_bias}.
\end{proof}

\paragraph{Why ratios rise toward the tail.}
Selecting directions with small sample eigenvalues can favor downward sampling fluctuations, so their scales may increase when evaluated on an independent split.
In \cref{eq:heldout_selection_bias}, the downward estimation error is divided by the source eigenvalue $\lambda_{s,i}$.
Small tail eigenvalues can therefore turn downward errors into large held-out ratios.

\subsubsection{High-Dimensional Sampling and Tail Signals}
\label{app:high_dimensional_sampling}
Finite-sample spectral distortion is well documented in high-dimensional covariance estimation~\citep{ledoit2012nonlinear,bartz2016cross}, and similar selection bias has also been observed in mini-batch Hessian and GGN estimates~\citep{tatzel2025debiasing}.
With $d/N=0.8$, the spectral distortion observed here may partly be explained by Mar\v{c}enko--Pastur-type spreading~\citep{bai2010spectral}. 
However, this does not explain why the leading scales are much more consistent across splits than the tail. 
Prior work has reported alignment between the leading subspaces of the empirical second moment and the Hessian~\citep{zhu2019anisotropic,wu2022alignment,liu2026rethinking}. 
This may help explain the stronger reproducibility of the leading scales, although the connection remains to be established.

\paragraph{Tail directions and learning signals.}
\label{app:weak_signals}
If class $k$ occurs with probability $p_k$ and produces gradient $g=a_ku_k$, with orthonormal $u_k$, then $C=\sum_kp_ka_k^2u_ku_k^\top$.
For comparable $|a_k|$, infrequent classes correspond to directions with small second-moment scales.
This illustrates how tail directions can carry useful signals~\citep{wang2026muon,kim2026sharp}.
However, finite-sample noise can still severely underestimate their scales.

\addtocontents{toc}{\protect\setcounter{tocdepth}{2}}
\subsection{Reliability Across Checkpoints}
\label{app:real_reliability_all}
\cref{fig:real_reliability_all} shows held-out scale error for all $9$ experiments.
\cref{tab:real_reliability_regions_full} reports all four regions, including absolute held-out scale error.
We take the median over all directions in each region and all $12$ comparisons within an experiment, then report the mean and sample standard deviation across the nine checkpoint-projection pairs per training optimizer.

\begin{table}[!htbp]
\centering
\caption{Held-out scale error across all four eigenvalue regions. Zero indicates agreement; positive $\xi$ indicates underestimation, and larger $|\xi|$ indicates greater mismatch. Entries show mean $\pm$ sample SD across nine checkpoint-projection pairs per training optimizer, using each's regional median.}
\label{tab:real_reliability_regions_full}
\small
\setlength{\tabcolsep}{5pt}
\begin{tabular}{lcccc}
\toprule
\multicolumn{5}{c}{\textbf{AdamW checkpoints}} \\
\midrule
\textbf{Metric} & \textbf{Top (0--10\%)} & \textbf{Middle (10--40\%)} & \textbf{Lower (40--75\%)} & \textbf{Tail (75--100\%)} \\
\midrule
$|\xi|$ ($\downarrow$) & $0.027\,\pm\,0.016$ & $0.124\,\pm\,0.037$ & $0.431\,\pm\,0.030$ & $1.009\,\pm\,0.120$ \\
Signed $\xi$ & $0.010\,\pm\,0.030$ & $0.124\,\pm\,0.037$ & $0.431\,\pm\,0.030$ & $1.009\,\pm\,0.120$ \\
Cross-fit $\xi^{\mathrm{cf}}$ & $-0.008\,\pm\,0.004$ & $-0.006\,\pm\,0.004$ & $-0.006\,\pm\,0.005$ & $-0.007\,\pm\,0.006$ \\
\midrule
\multicolumn{5}{c}{\textbf{Distributed Shampoo checkpoints}} \\
\midrule
\textbf{Metric} & \textbf{Top (0--10\%)} & \textbf{Middle (10--40\%)} & \textbf{Lower (40--75\%)} & \textbf{Tail (75--100\%)} \\
\midrule
$|\xi|$ ($\downarrow$) & $0.020\,\pm\,0.009$ & $0.104\,\pm\,0.030$ & $0.429\,\pm\,0.025$ & $1.048\,\pm\,0.017$ \\
Signed $\xi$ & $0.002\,\pm\,0.020$ & $0.104\,\pm\,0.030$ & $0.429\,\pm\,0.025$ & $1.048\,\pm\,0.017$ \\
Cross-fit $\xi^{\mathrm{cf}}$ & $-0.009\,\pm\,0.005$ & $-0.008\,\pm\,0.004$ & $-0.007\,\pm\,0.004$ & $-0.007\,\pm\,0.004$ \\

\bottomrule
\end{tabular}
\end{table}

\begin{appendixfigure}{fig:real_reliability_all}{Held-out scale error.
Rows: steps 100, 2500, 4500.
Bands show split-comparison IQRs.}{Held-out scale error}
\begin{minipage}{\linewidth}
\centering
\textbf{Step 100}\par\vspace{2pt}
\begin{minipage}[t]{.3\linewidth}
\centering
\textbf{Query}\\[2pt]
\includegraphics[width=\linewidth]{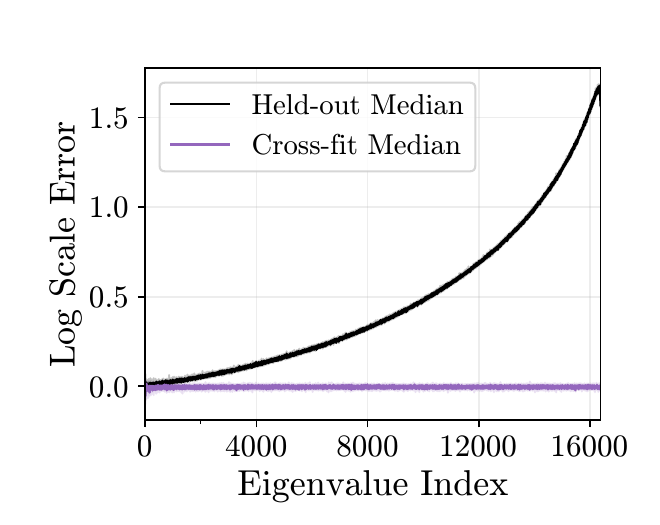}
\end{minipage}\hfill
\begin{minipage}[t]{.3\linewidth}
\centering
\textbf{Value}\\[2pt]
\includegraphics[width=\linewidth]{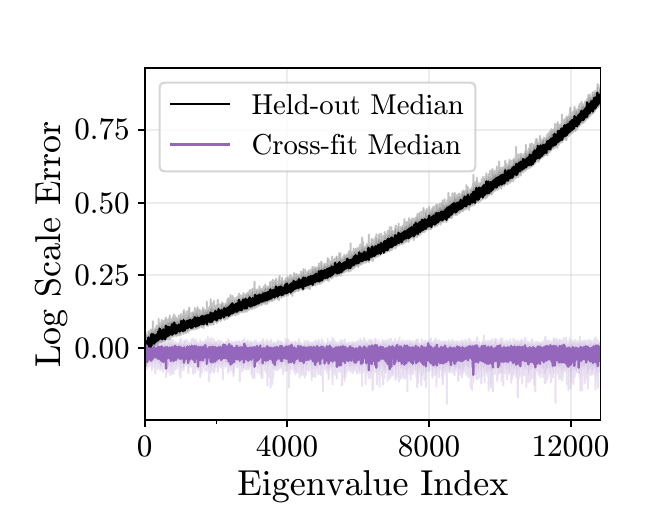}
\end{minipage}\hfill
\begin{minipage}[t]{.3\linewidth}
\centering
\textbf{Output}\\[2pt]
\includegraphics[width=\linewidth]{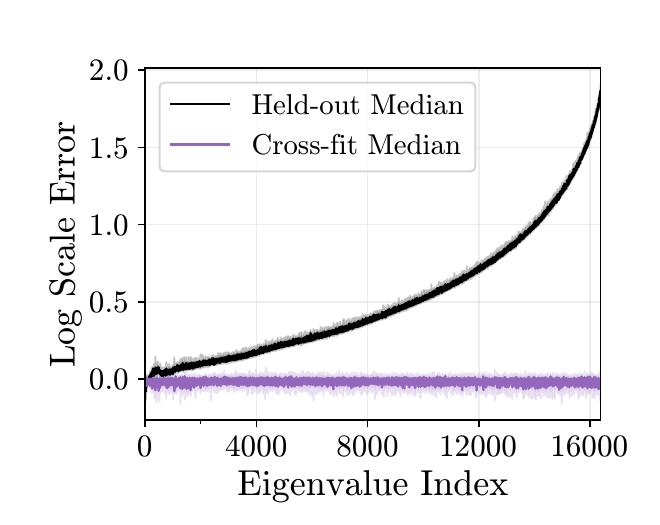}
\end{minipage}
\end{minipage}\\[7pt]
\begin{minipage}{\linewidth}
\centering
\textbf{Step 2500}\par\vspace{2pt}
\begin{minipage}[t]{.3\linewidth}
\centering
\includegraphics[width=\linewidth]{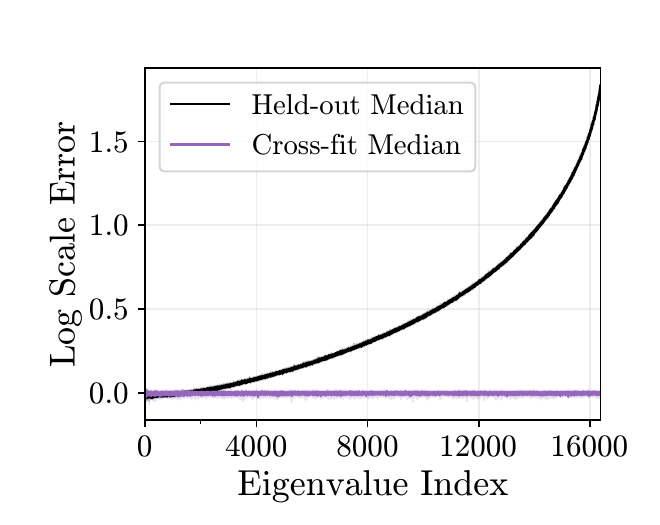}
\end{minipage}\hfill
\begin{minipage}[t]{.3\linewidth}
\centering
\includegraphics[width=\linewidth]{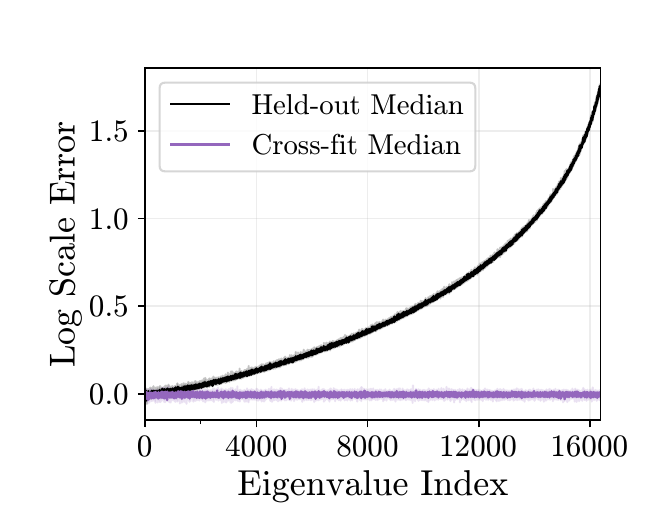}
\end{minipage}\hfill
\begin{minipage}[t]{.3\linewidth}
\centering
\includegraphics[width=\linewidth]{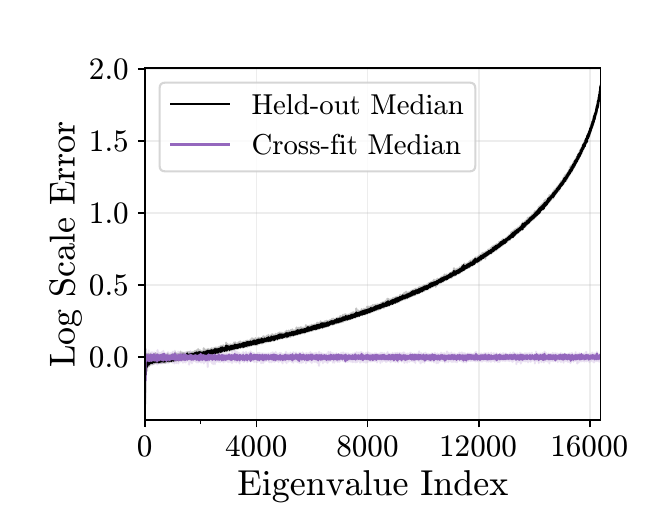}
\end{minipage}
\end{minipage}\\[7pt]
\begin{minipage}{\linewidth}
\centering
\textbf{Step 4500}\par\vspace{2pt}
\begin{minipage}[t]{.3\linewidth}
\centering
\includegraphics[width=\linewidth]{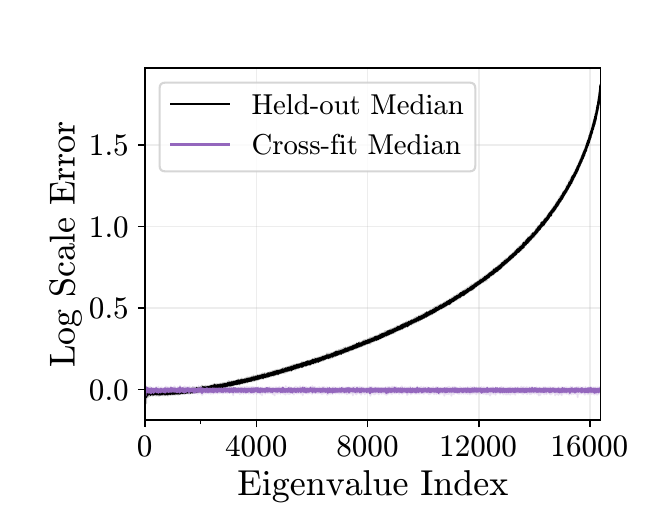}
\end{minipage}\hfill
\begin{minipage}[t]{.3\linewidth}
\centering
\includegraphics[width=\linewidth]{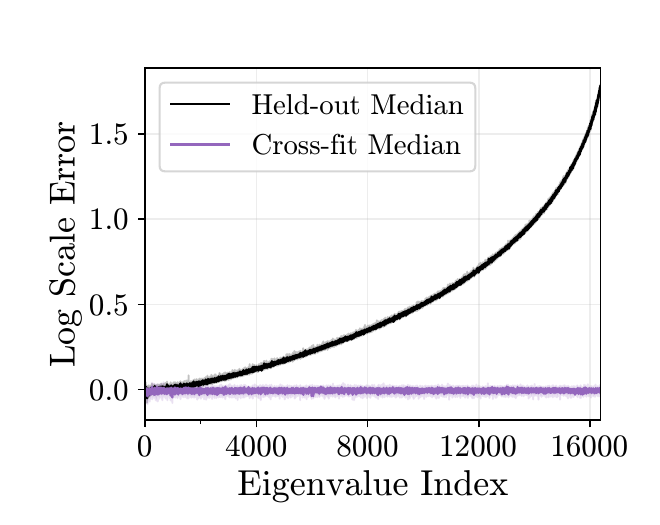}
\end{minipage}\hfill
\begin{minipage}[t]{.3\linewidth}
\centering
\includegraphics[width=\linewidth]{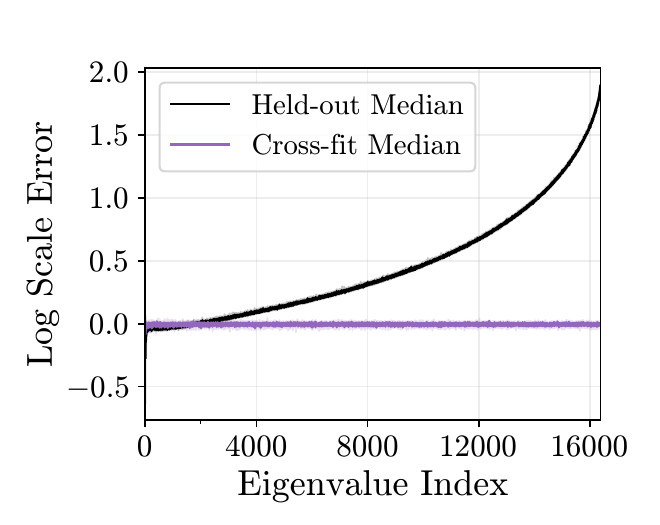}
\end{minipage}
\end{minipage}\\[7pt]
\end{appendixfigure}

\subsection{Inverse-Root Gain Across Checkpoints}
\label{app:real_gain_all}
\crefrange{fig:real_gain_100}{fig:real_gain_4500} show the gains of large-sample fits, few-sample fits, and EMA states at all three checkpoints.
Within each checkpoint-projection pair, all fits use the same split-$0$ directions and the same scales averaged over splits $1$, $2$, and $3$ (\cref{eq:heldout_log_gain}).
Evaluations use the numerically retained directions from split $0$.
The black solid curve shows the source preconditioner $C_0^{-\nicefrac{1}{2}}$, with error $\frac{1}{2}\log_{10}(h_i/\lambda_{0,i})$; the dashed line marks zero.

Large-sample curves use one fit per method.
Few-sample and EMA curves average held-out gain error over $4$ repeats and $16$ repeats, respectively.
For \cref{tab:real_heldout_gain}, we take the median absolute held-out gain error over all retained directions and repeats within each experiment, then report the mean and sample standard deviation across nine experiments.
This is $\operatorname{median}(|e_{D,i}|)$, not $|\operatorname{median}(e_{D,i})|$.

\begin{appendixfigure}{fig:real_gain_100}{Held-out gain error on AdamW checkpoints at step 100.
Rows: large-sample fit, 40-sample fit, EMA with eigendecomposition every five steps.}{Held-out gain error on AdamW checkpoints at step 100}
\begin{minipage}{\linewidth}
\centering
\begin{minipage}[t]{.3\linewidth}
\centering
\textbf{Query}\\[2pt]
\includegraphics[width=\linewidth]{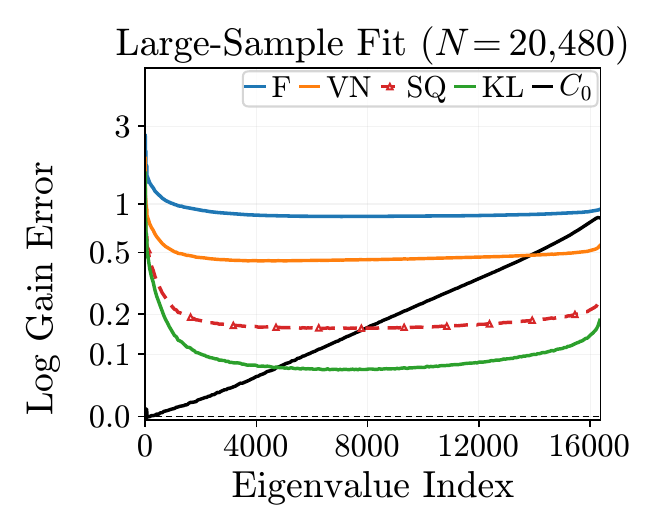}
\end{minipage}\hfill
\begin{minipage}[t]{.3\linewidth}
\centering
\textbf{Value}\\[2pt]
\includegraphics[width=\linewidth]{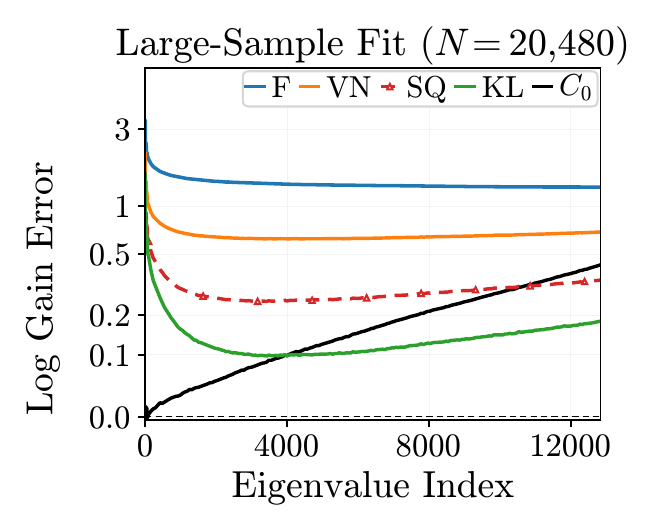}
\end{minipage}\hfill
\begin{minipage}[t]{.3\linewidth}
\centering
\textbf{Output}\\[2pt]
\includegraphics[width=\linewidth]{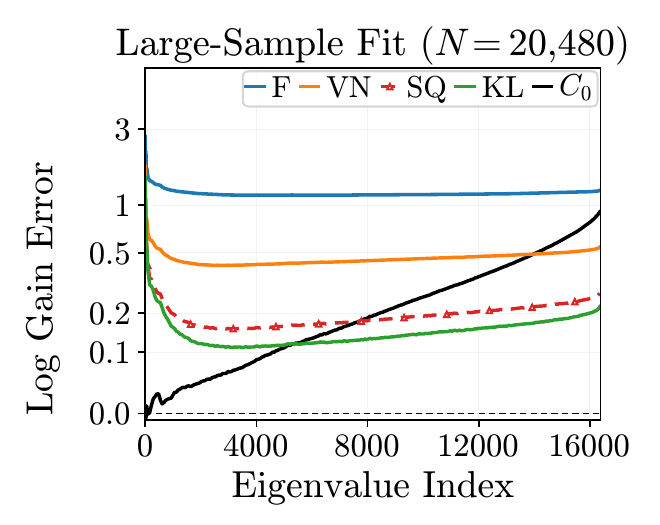}
\end{minipage}
\end{minipage}\\[7pt]
\begin{minipage}{\linewidth}
\centering
\begin{minipage}[t]{.3\linewidth}
\centering
\includegraphics[width=\linewidth]{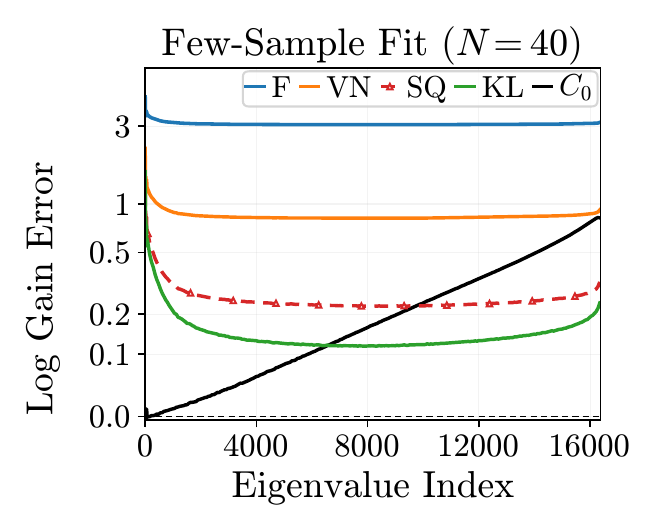}
\end{minipage}\hfill
\begin{minipage}[t]{.3\linewidth}
\centering
\includegraphics[width=\linewidth]{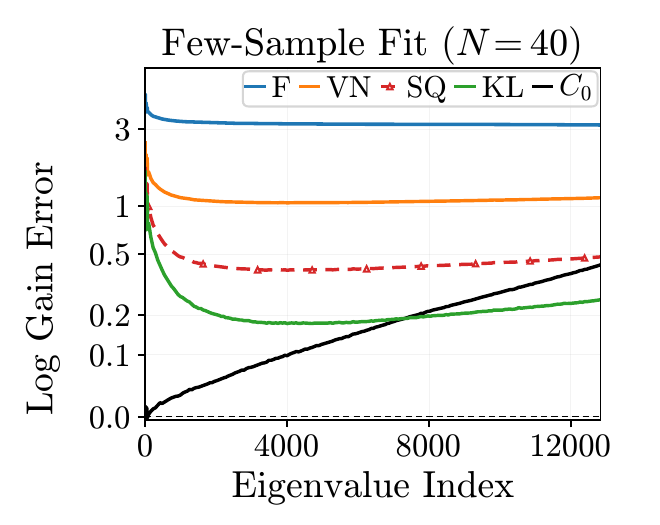}
\end{minipage}\hfill
\begin{minipage}[t]{.3\linewidth}
\centering
\includegraphics[width=\linewidth]{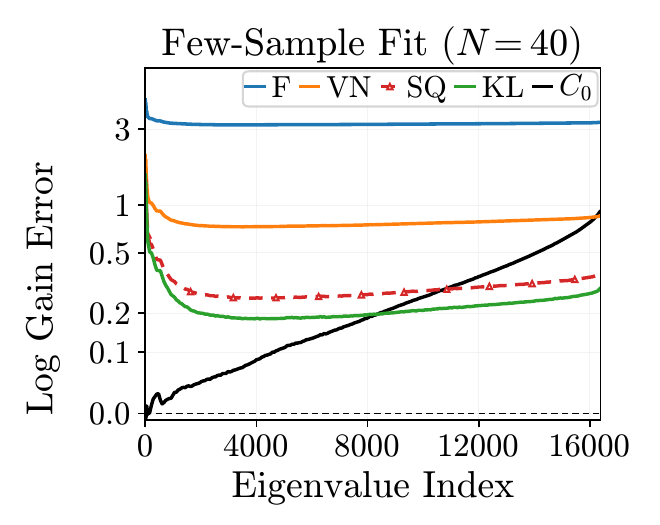}
\end{minipage}
\end{minipage}\\[7pt]
\begin{minipage}{\linewidth}
\centering
\begin{minipage}[t]{.3\linewidth}
\centering
\includegraphics[width=\linewidth]{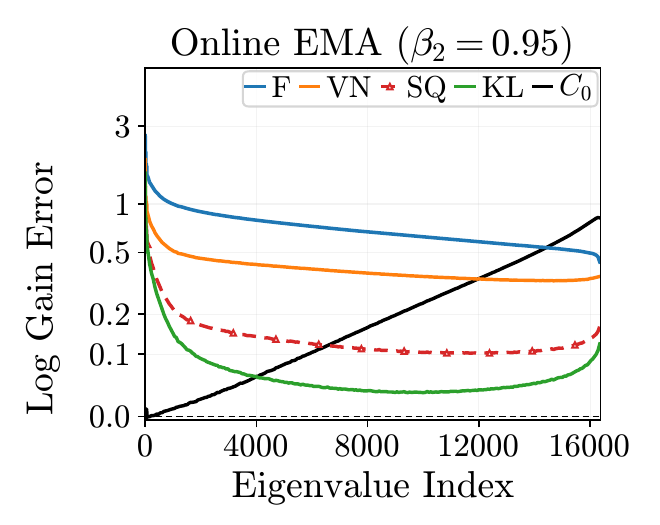}
\end{minipage}\hfill
\begin{minipage}[t]{.3\linewidth}
\centering
\includegraphics[width=\linewidth]{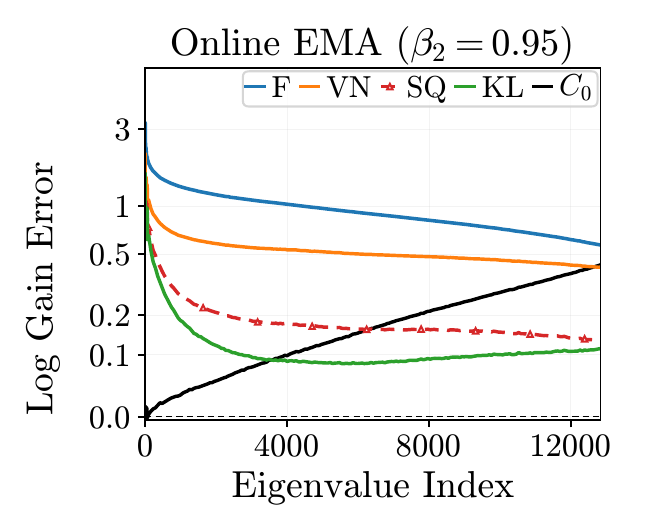}
\end{minipage}\hfill
\begin{minipage}[t]{.3\linewidth}
\centering
\includegraphics[width=\linewidth]{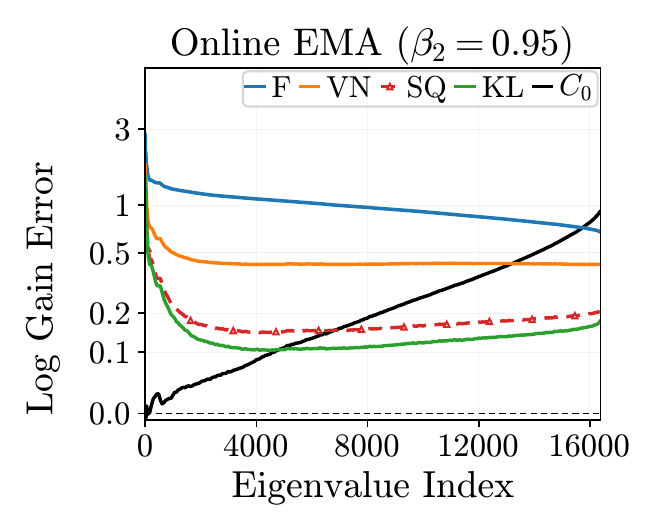}
\end{minipage}
\end{minipage}\\[7pt]
\end{appendixfigure}

\begin{appendixfigure}{fig:real_gain_2500}{Held-out gain error on AdamW checkpoints at step $2500$.
Rows: large-sample fit, 40-sample fit, EMA with eigendecomposition every five steps.}{Held-out gain error on AdamW checkpoints at step $2500$}
\begin{minipage}{\linewidth}
\centering
\begin{minipage}[t]{.3\linewidth}
\centering
\textbf{Query}\\[2pt]
\includegraphics[width=\linewidth]{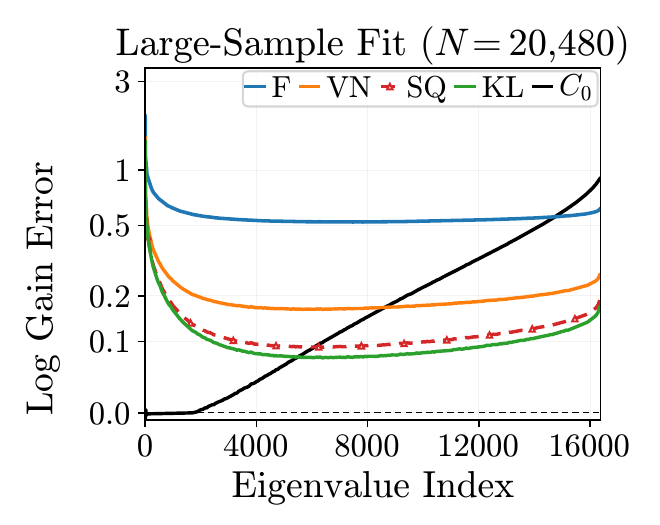}
\end{minipage}\hfill
\begin{minipage}[t]{.3\linewidth}
\centering
\textbf{Value}\\[2pt]
\includegraphics[width=\linewidth]{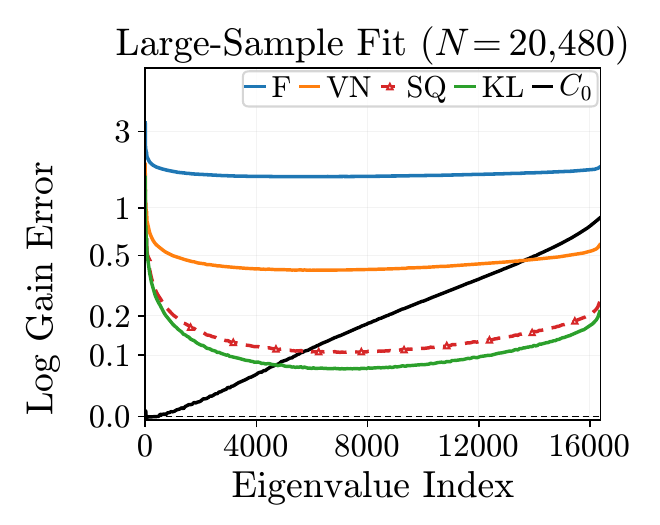}
\end{minipage}\hfill
\begin{minipage}[t]{.3\linewidth}
\centering
\textbf{Output}\\[2pt]
\includegraphics[width=\linewidth]{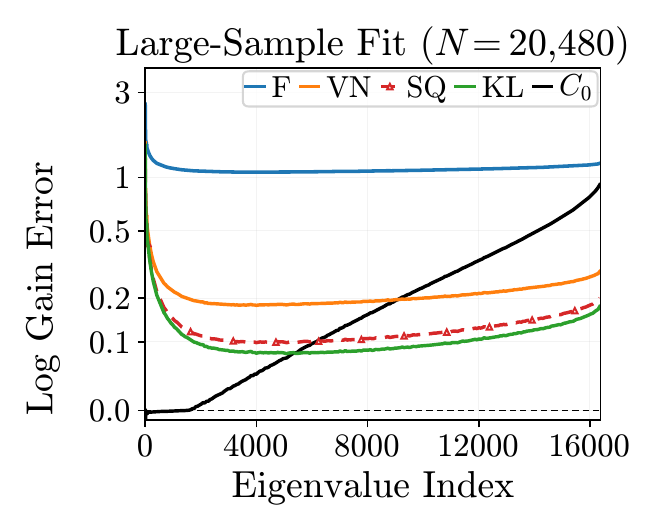}
\end{minipage}
\end{minipage}\\[7pt]
\begin{minipage}{\linewidth}
\centering
\begin{minipage}[t]{.3\linewidth}
\centering
\includegraphics[width=\linewidth]{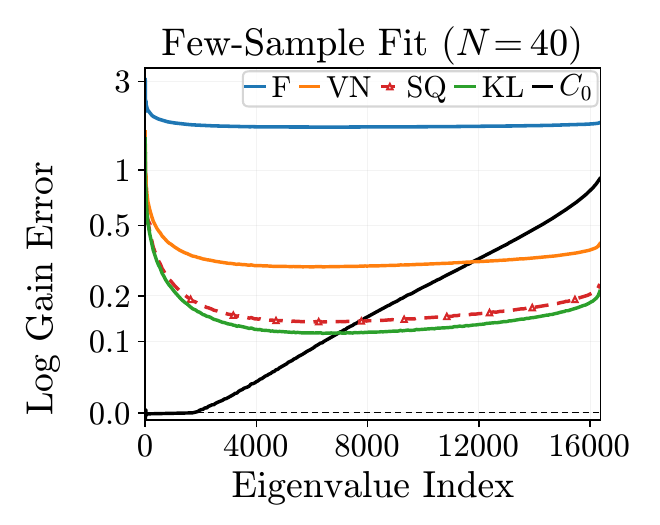}
\end{minipage}\hfill
\begin{minipage}[t]{.3\linewidth}
\centering
\includegraphics[width=\linewidth]{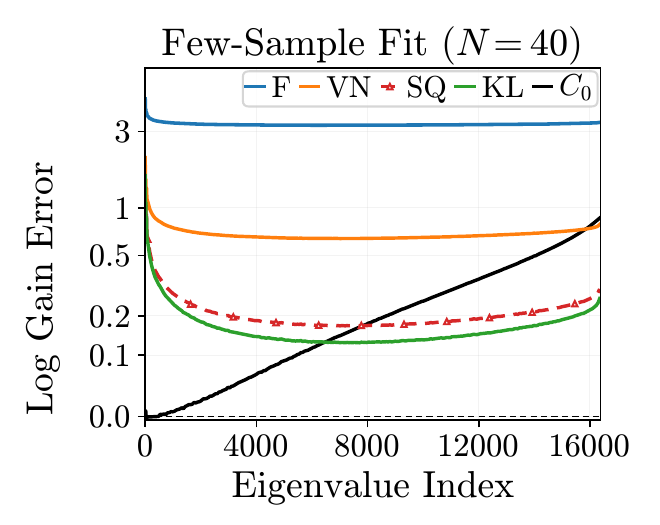}
\end{minipage}\hfill
\begin{minipage}[t]{.3\linewidth}
\centering
\includegraphics[width=\linewidth]{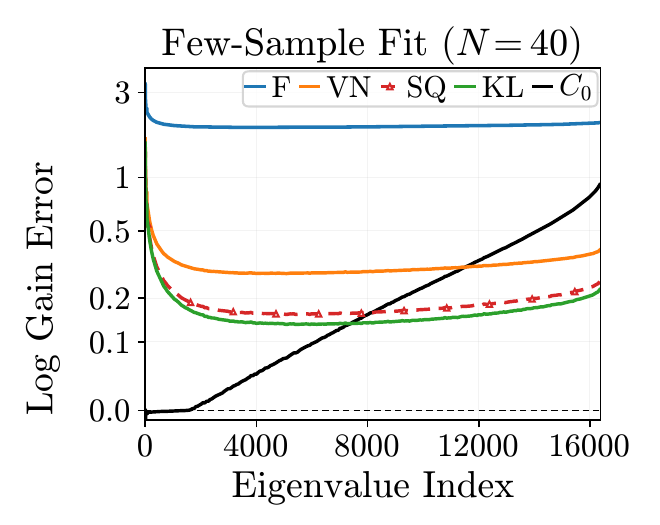}
\end{minipage}
\end{minipage}\\[7pt]
\begin{minipage}{\linewidth}
\centering
\begin{minipage}[t]{.3\linewidth}
\centering
\includegraphics[width=\linewidth]{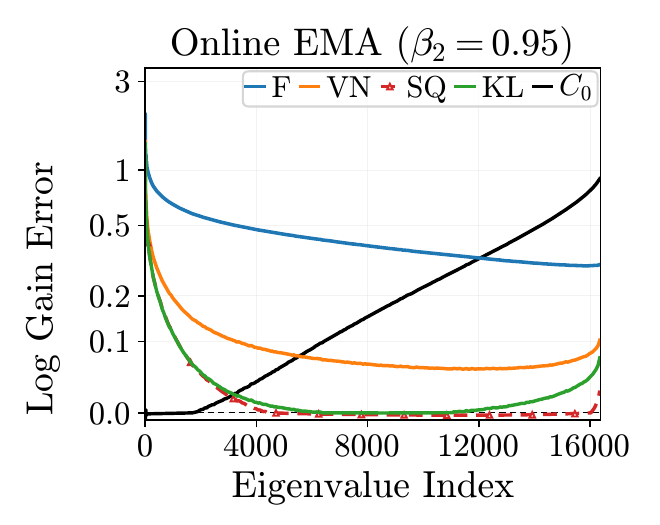}
\end{minipage}\hfill
\begin{minipage}[t]{.3\linewidth}
\centering
\includegraphics[width=\linewidth]{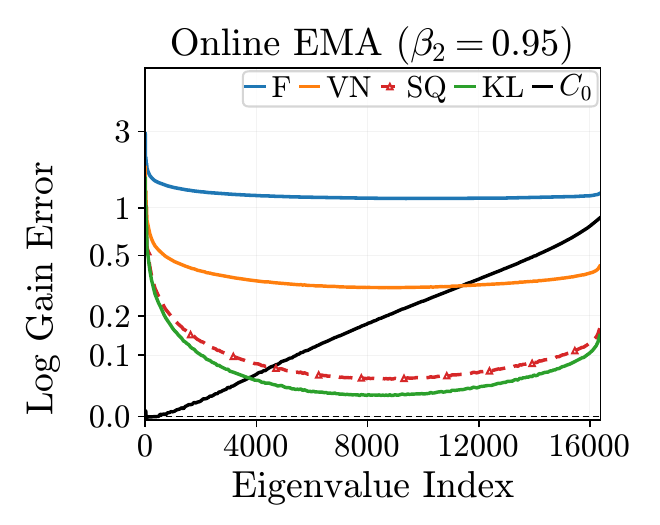}
\end{minipage}\hfill
\begin{minipage}[t]{.3\linewidth}
\centering
\includegraphics[width=\linewidth]{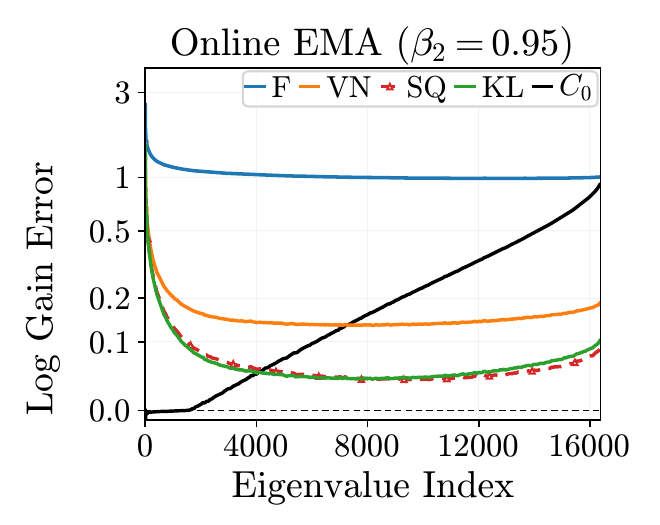}
\end{minipage}
\end{minipage}\\[7pt]
\end{appendixfigure}

\begin{appendixfigure}{fig:real_gain_4500}{Held-out gain error on AdamW checkpoints at step 4500.
Rows: large-sample fit, 40-sample fit, EMA with eigendecomposition every five steps.}{Held-out gain error on AdamW checkpoints at step 4500}
\begin{minipage}{\linewidth}
\centering
\begin{minipage}[t]{.3\linewidth}
\centering
\textbf{Query}\\[2pt]
\includegraphics[width=\linewidth]{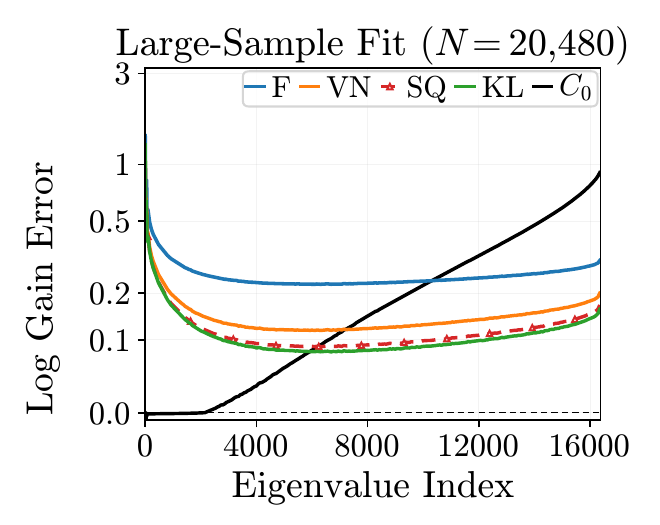}
\end{minipage}\hfill
\begin{minipage}[t]{.3\linewidth}
\centering
\textbf{Value}\\[2pt]
\includegraphics[width=\linewidth]{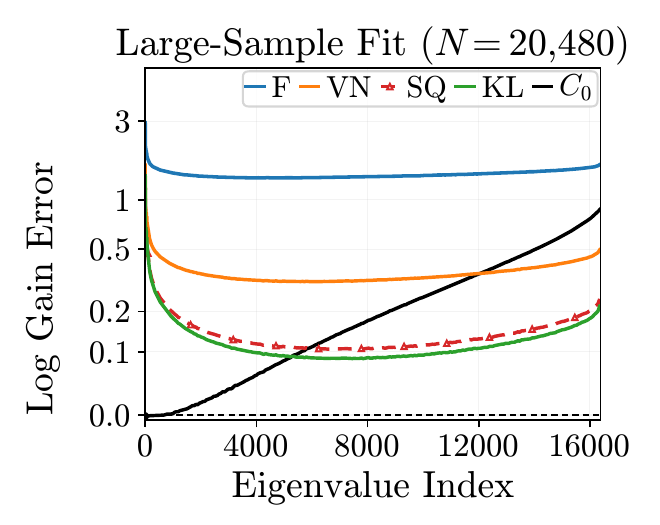}
\end{minipage}\hfill
\begin{minipage}[t]{.3\linewidth}
\centering
\textbf{Output}\\[2pt]
\includegraphics[width=\linewidth]{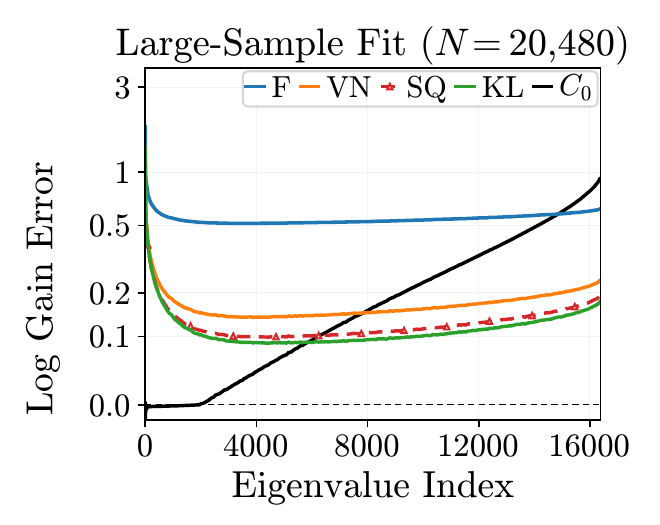}
\end{minipage}
\end{minipage}\\[7pt]
\begin{minipage}{\linewidth}
\centering
\begin{minipage}[t]{.3\linewidth}
\centering
\includegraphics[width=\linewidth]{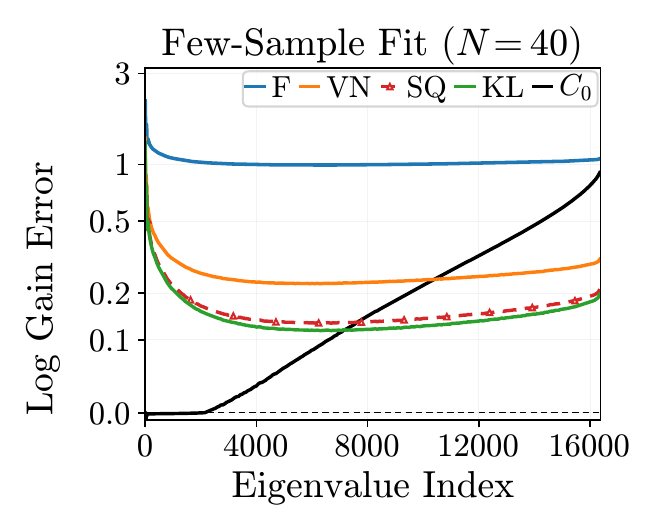}
\end{minipage}\hfill
\begin{minipage}[t]{.3\linewidth}
\centering
\includegraphics[width=\linewidth]{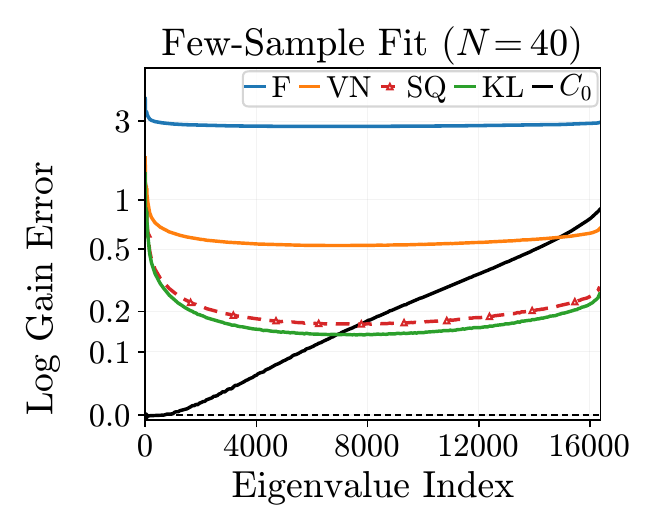}
\end{minipage}\hfill
\begin{minipage}[t]{.3\linewidth}
\centering
\includegraphics[width=\linewidth]{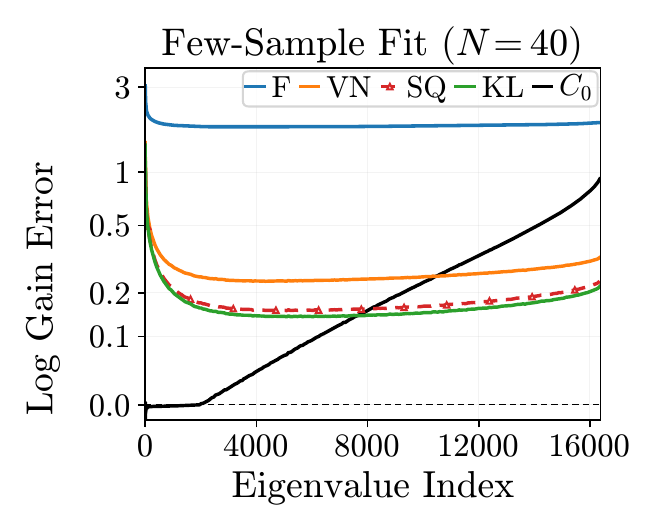}
\end{minipage}
\end{minipage}\\[7pt]
\begin{minipage}{\linewidth}
\centering
\begin{minipage}[t]{.3\linewidth}
\centering
\includegraphics[width=\linewidth]{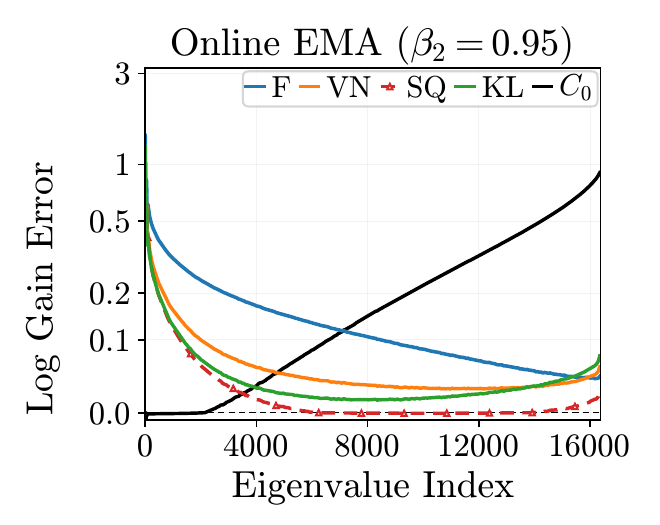}
\end{minipage}\hfill
\begin{minipage}[t]{.3\linewidth}
\centering
\includegraphics[width=\linewidth]{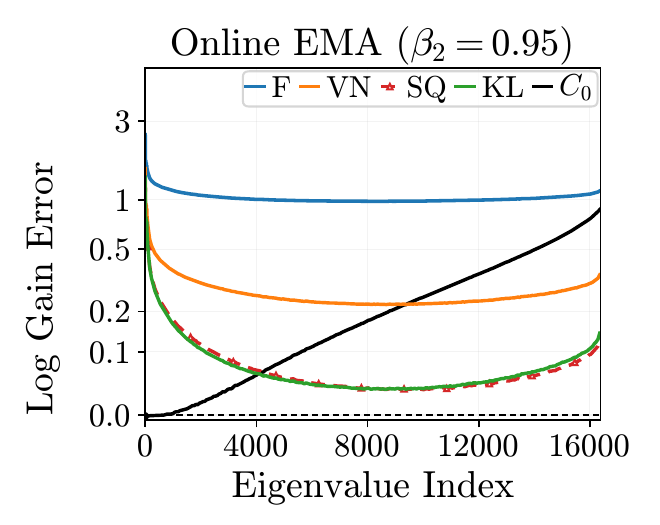}
\end{minipage}\hfill
\begin{minipage}[t]{.3\linewidth}
\centering
\includegraphics[width=\linewidth]{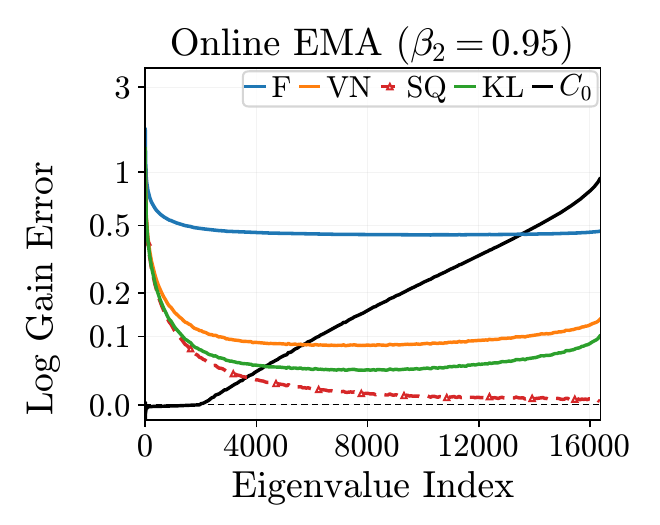}
\end{minipage}
\end{minipage}\\[7pt]
\end{appendixfigure}

\subsection{Kronecker Fits of Different Divergences}
\label{sec:real_divergence_calibration}
We compare fitted spectra across all nine checkpoint-projection pairs and show how fitted eigenvalues and eigenvectors determine gains.
Each fit uses the same split-$0$ gradients as the empirical second moment (\cref{fig:real_divergence_spectrum}).

\paragraph{Spectral preferences.}
F and VN put more weight on large eigenvalues and penalize overestimation more than reciprocal underestimation (\cref{tab:spectral_preferences}).
SQ gives large eigenvalues less weight and penalizes reciprocal errors equally.
KL weights relative errors equally across scales and penalizes underestimation more strongly.
These preferences help explain the larger fitted tail eigenvalues of SQ and KL in \cref{sec:real_kronecker_approximation}.

\paragraph{How small eigenvalues increase gains.}
Let $(\omega_{D,j},v_{D,j})$ be the eigenpairs of $S_D^{\mathrm{ref}}$ in decreasing order, and write $P_{D,ij}:=\langle u_{0,i},v_{D,j}\rangle^2$.
Then
\begin{align}
10^{2e_{D,i}}=h_i\sum_j\frac{P_{D,ij}}{\omega_{D,j}}.
\label{eq:inverse_gain_decomposition}
\end{align}
Each overlap is divided by a fitted eigenvalue, so even a small overlap can matter when that eigenvalue is tiny.
Since $\sum_jP_{D,ij}=1$, we can also write
\begin{align}
10^{2e_{D,i}}=\frac{h_i}{\omega_{D,i}}+h_i\sum_{j\ne i}P_{D,ij}\left(\frac1{\omega_{D,j}}-\frac1{\omega_{D,i}}\right).
\label{eq:inverse_gain_aligned_decomposition}
\end{align}
With aligned eigenvectors, only the first term remains.
The second term shows how gains change when the fitted eigenvectors differ from the source eigenvectors.

Let $\varepsilon_{D,i}$ be the total squared overlap with fitted eigenvectors whose eigenvalues satisfy $\omega_{D,j}\leq\tau$.
Then
\begin{align}
10^{2e_{D,i}}\geq h_i\varepsilon_{D,i}/\tau.
\label{eq:inverse_gain_leakage}
\end{align}
If $h_i\varepsilon_{D,i}>\tau$, the gain exceeds the held-out target, even if the leading eigenvalues are fitted well.

\begin{appendixfigure}{fig:real_divergence_spectrum}{Large-sample spectra.
Rows: steps 100, 2500, 4500.}{Large-sample spectra}
\begin{minipage}{\linewidth}
\centering
\textbf{Step 100}\par\vspace{2pt}
\begin{minipage}[t]{.3\linewidth}
\centering
\textbf{Query}\\[2pt]
\includegraphics[width=\linewidth]{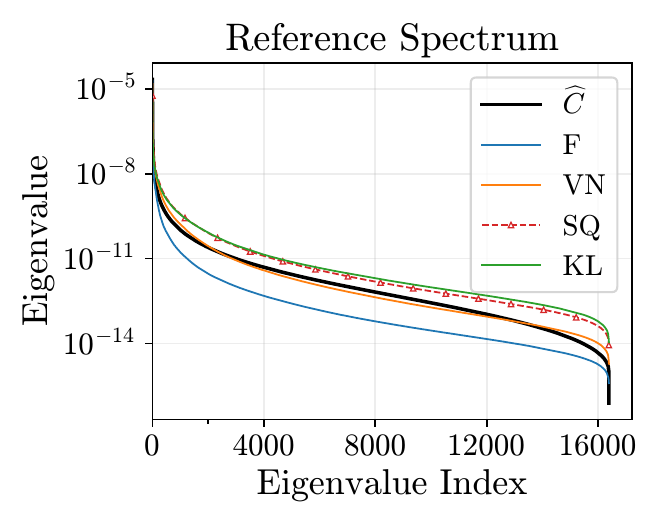}
\end{minipage}\hfill
\begin{minipage}[t]{.3\linewidth}
\centering
\textbf{Value}\\[2pt]
\includegraphics[width=\linewidth]{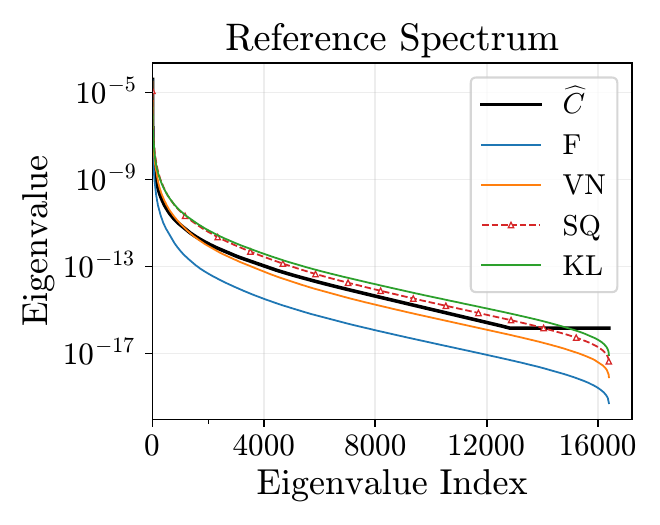}
\end{minipage}\hfill
\begin{minipage}[t]{.3\linewidth}
\centering
\textbf{Output}\\[2pt]
\includegraphics[width=\linewidth]{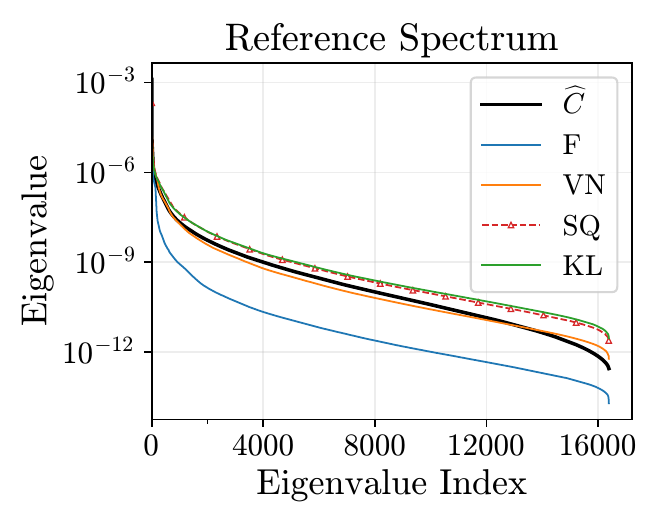}
\end{minipage}
\end{minipage}\\[7pt]
\begin{minipage}{\linewidth}
\centering
\textbf{Step 2500}\par\vspace{2pt}
\begin{minipage}[t]{.3\linewidth}
\centering
\includegraphics[width=\linewidth]{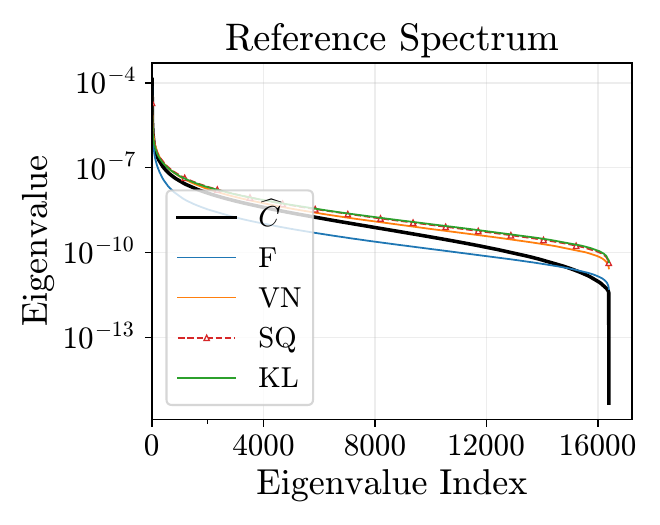}
\end{minipage}\hfill
\begin{minipage}[t]{.3\linewidth}
\centering
\includegraphics[width=\linewidth]{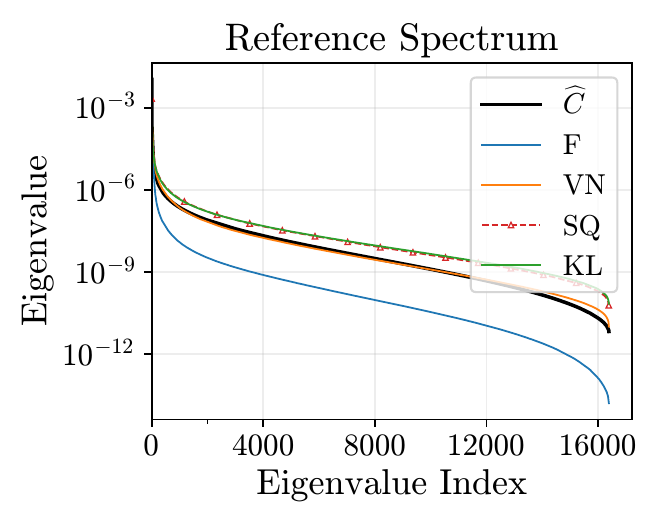}
\end{minipage}\hfill
\begin{minipage}[t]{.3\linewidth}
\centering
\includegraphics[width=\linewidth]{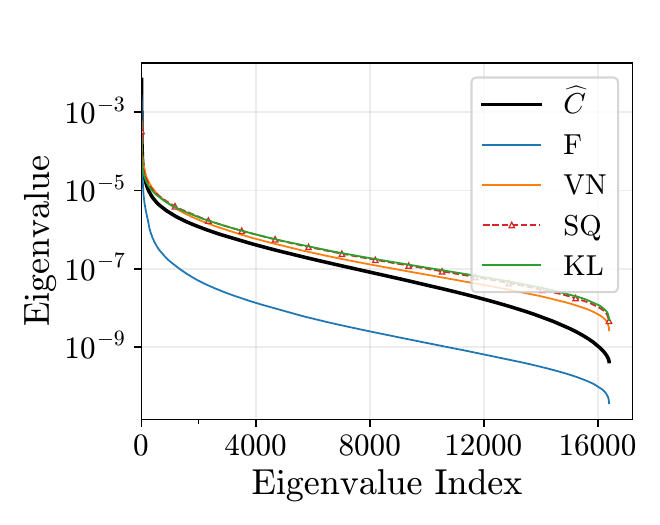}
\end{minipage}
\end{minipage}\\[7pt]
\begin{minipage}{\linewidth}
\centering
\textbf{Step 4500}\par\vspace{2pt}
\begin{minipage}[t]{.3\linewidth}
\centering
\includegraphics[width=\linewidth]{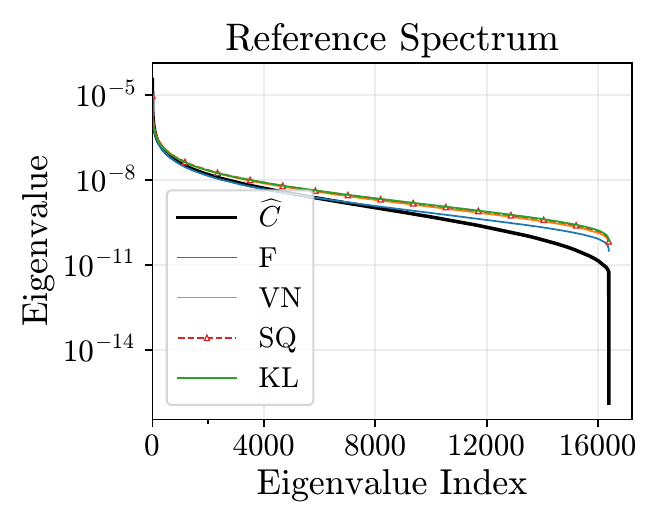}
\end{minipage}\hfill
\begin{minipage}[t]{.3\linewidth}
\centering
\includegraphics[width=\linewidth]{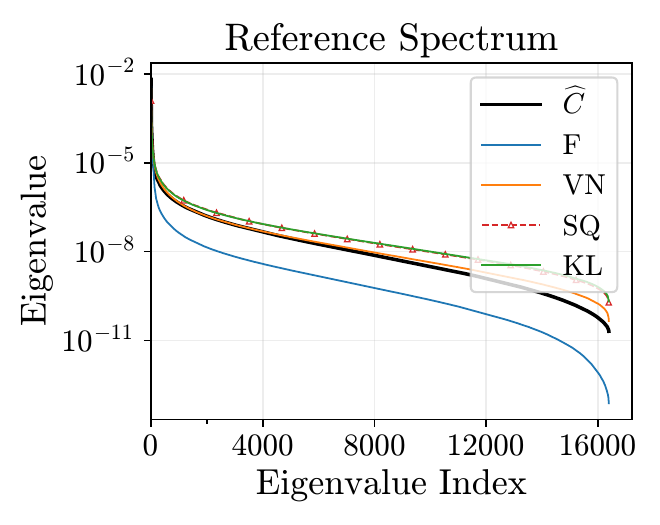}
\end{minipage}\hfill
\begin{minipage}[t]{.3\linewidth}
\centering
\includegraphics[width=\linewidth]{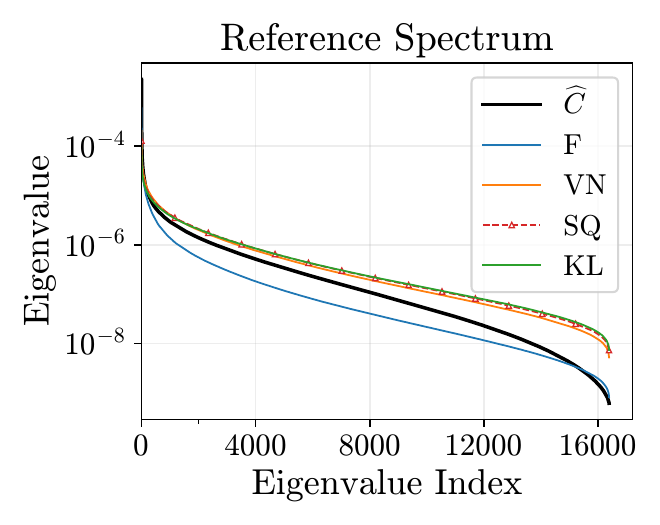}
\end{minipage}
\end{minipage}\\[7pt]
\end{appendixfigure}

\subsection{Gain Recovery with Short Gradient Histories}
\label{app:real_state_details}
Recovery is measured against each method's own large-sample fit over all $16{,}384$ reference directions (\cref{eq:same_divergence_stability_gain}).
For \cref{fig:real_state_2500}, we first average the log gain ratios across $4$ offline or $16$ EMA repeats, and then average every $64$ consecutive directions.
For \cref{tab:real_state_stability}, we compute the median absolute log gain ratio over all directions and repeats without binning, and report the mean and sample SD across nine checkpoint-projection pairs.

\begin{appendixfigure}{fig:real_state_2500}{Large-sample fit recovery on AdamW checkpoints at step $2500$.
Rows: F, VN, SQ, KL.
Solid: 40-sample fits; dashed: EMA with eigendecomposition every five steps.}{Large-sample fit recovery on AdamW checkpoints at step $2500$}
\begin{minipage}{\linewidth}
\centering
\textbf{F}\par
\begin{minipage}[t]{.3\linewidth}
\centering
\textbf{Query}\\[2pt]
\includegraphics[width=\linewidth]{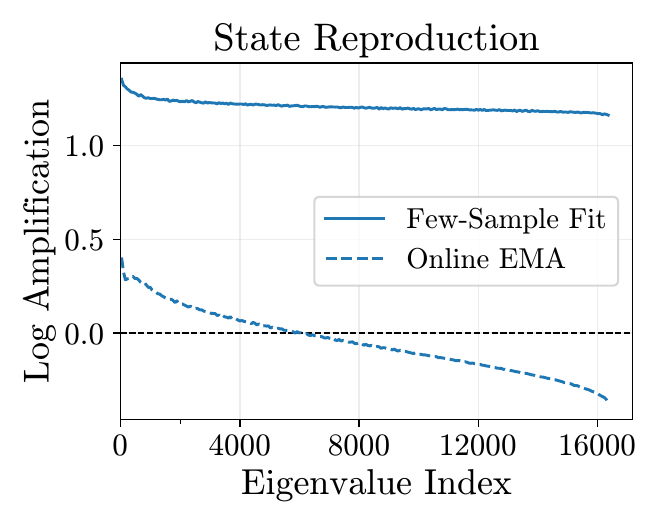}
\end{minipage}\hfill
\begin{minipage}[t]{.3\linewidth}
\centering
\textbf{Value}\\[2pt]
\includegraphics[width=\linewidth]{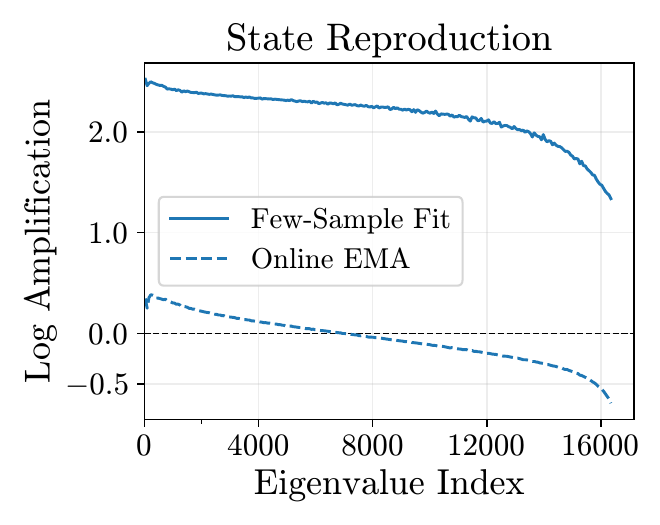}
\end{minipage}\hfill
\begin{minipage}[t]{.3\linewidth}
\centering
\textbf{Output}\\[2pt]
\includegraphics[width=\linewidth]{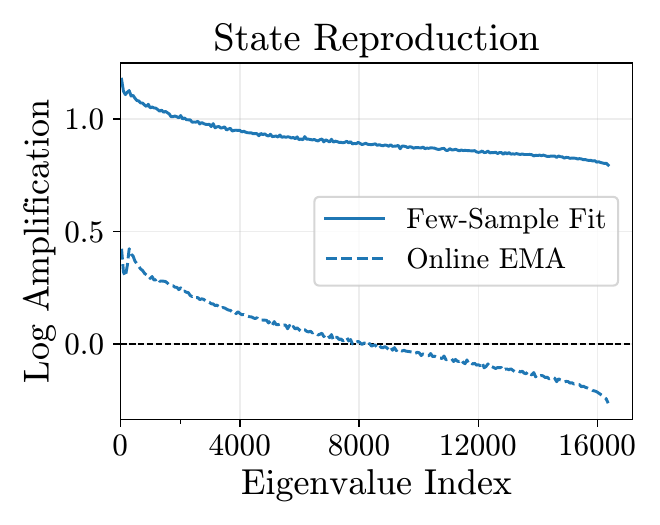}
\end{minipage}
\end{minipage}\\[7pt]
\begin{minipage}{\linewidth}
\centering
\textbf{VN}\par
\begin{minipage}[t]{.3\linewidth}
\centering
\textbf{Query}\\[2pt]
\includegraphics[width=\linewidth]{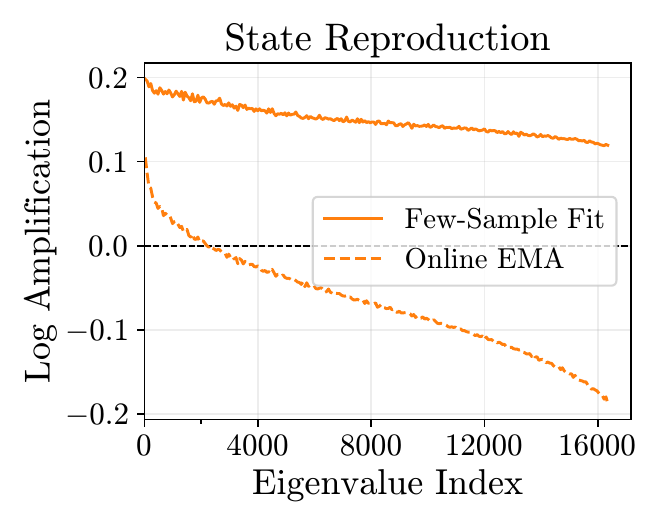}
\end{minipage}\hfill
\begin{minipage}[t]{.3\linewidth}
\centering
\textbf{Value}\\[2pt]
\includegraphics[width=\linewidth]{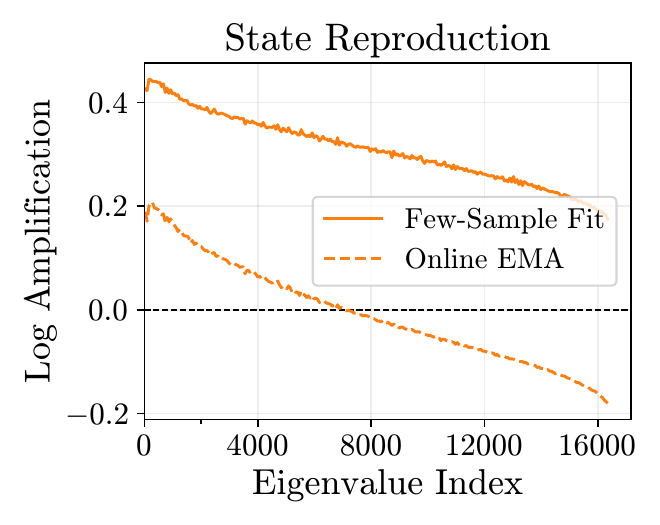}
\end{minipage}\hfill
\begin{minipage}[t]{.3\linewidth}
\centering
\textbf{Output}\\[2pt]
\includegraphics[width=\linewidth]{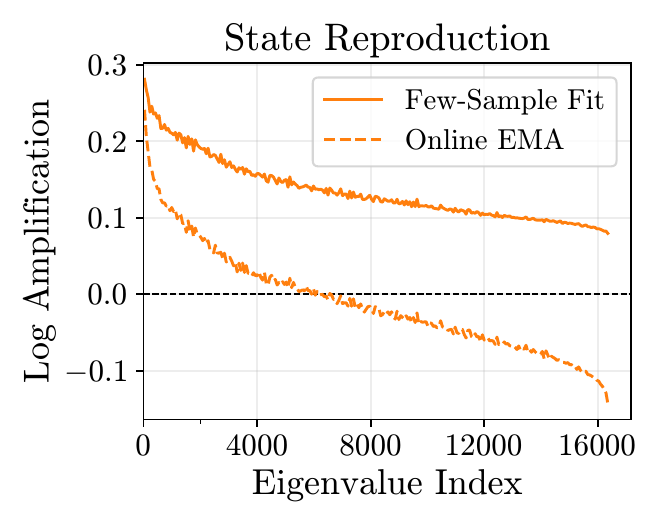}
\end{minipage}
\end{minipage}\\[7pt]
\begin{minipage}{\linewidth}
\centering
\textbf{SQ}\par
\begin{minipage}[t]{.3\linewidth}
\centering
\textbf{Query}\\[2pt]
\includegraphics[width=\linewidth]{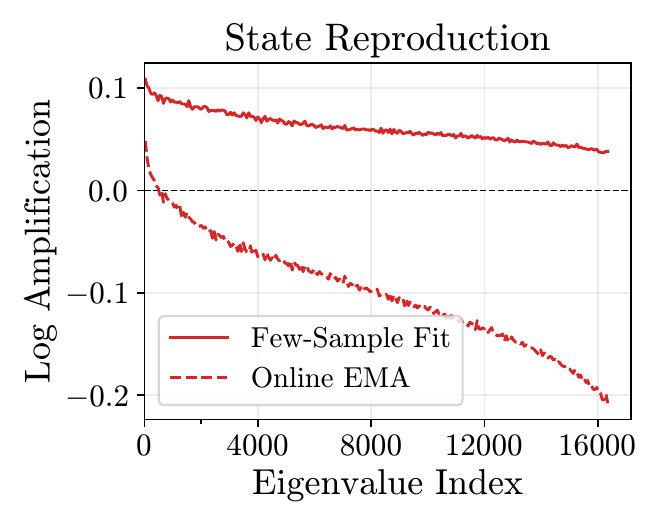}
\end{minipage}\hfill
\begin{minipage}[t]{.3\linewidth}
\centering
\textbf{Value}\\[2pt]
\includegraphics[width=\linewidth]{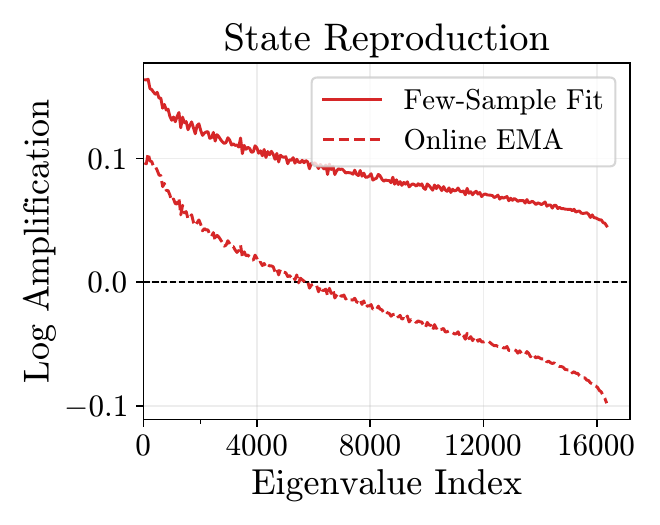}
\end{minipage}\hfill
\begin{minipage}[t]{.3\linewidth}
\centering
\textbf{Output}\\[2pt]
\includegraphics[width=\linewidth]{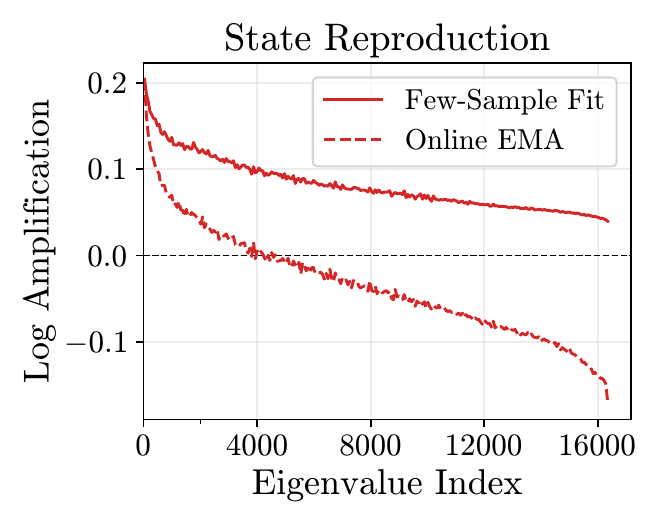}
\end{minipage}
\end{minipage}\\[7pt]
\begin{minipage}{\linewidth}
\centering
\textbf{KL}\par
\begin{minipage}[t]{.3\linewidth}
\centering
\textbf{Query}\\[2pt]
\includegraphics[width=\linewidth]{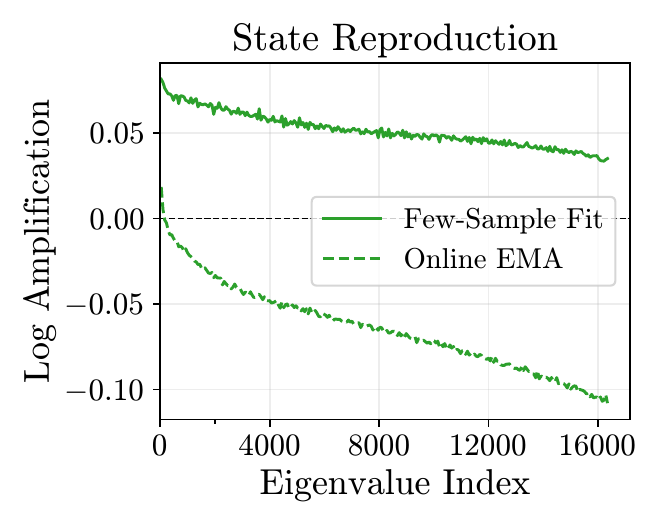}
\end{minipage}\hfill
\begin{minipage}[t]{.3\linewidth}
\centering
\textbf{Value}\\[2pt]
\includegraphics[width=\linewidth]{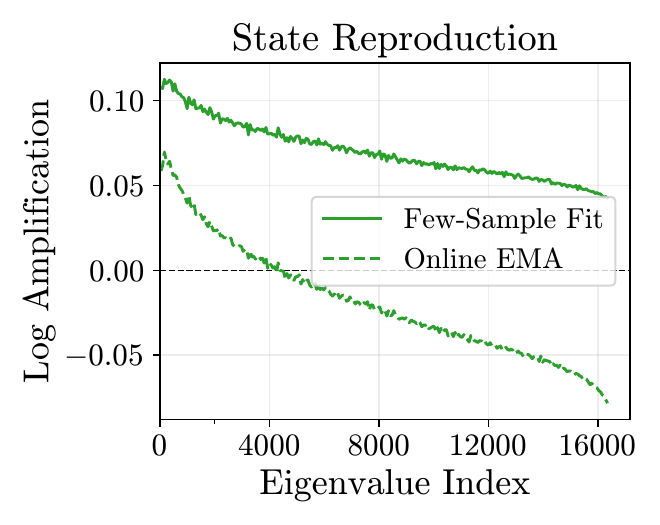}
\end{minipage}\hfill
\begin{minipage}[t]{.3\linewidth}
\centering
\textbf{Output}\\[2pt]
\includegraphics[width=\linewidth]{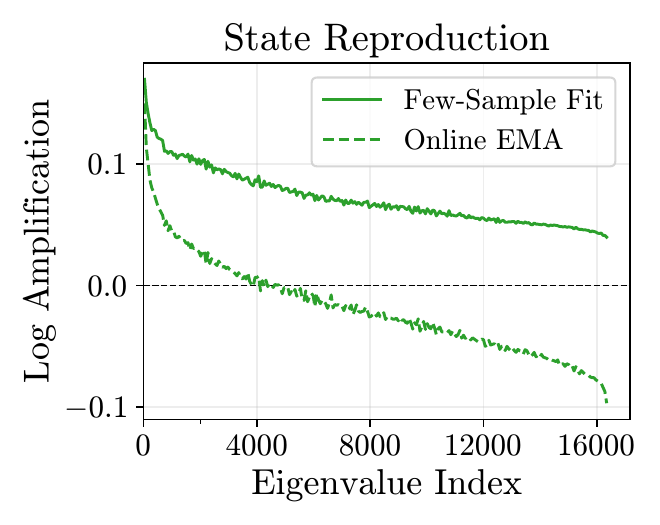}
\end{minipage}
\end{minipage}\\[7pt]
\end{appendixfigure}

\FloatBarrier

\FloatBarrier
\clearpage
\subsection{Diagnostics on Distributed Shampoo Checkpoints}
\label{app:distributed_shampoo_checkpoints}
We train the same GPT-2 model with Distributed Shampoo~\citep{shi2023distributed}, using inverse-square-root factors and Adam grafting.

We use the same checkpoint steps, projections, gradient collection, fitting settings, and aggregation as for AdamW (\cref{app:real_model_protocol}).
All four divergences are fitted to the new gradients.
\cref{fig:distributed_shampoo_diagnostics_2500} shows step $2500$; the main-text tables summarize all nine checkpoint-projection pairs.

\begin{appendixfigure}{fig:distributed_shampoo_diagnostics_2500}{Distributed Shampoo diagnostics at step $2500$.
Top: held-out scale error and cross-fit controls.
Remaining rows: held-out gain error for large-sample fits, $40$-sample fits (four repeats), and EMA ($512$ gradients, $16$ repeats, eigendecomposition every five steps).}{Distributed Shampoo diagnostics at step $2500$}
\begin{minipage}{\linewidth}
\centering
\begin{minipage}[t]{.3\linewidth}
\centering
{\small\textbf{Query}}\\[2pt]
\includegraphics[width=\linewidth]{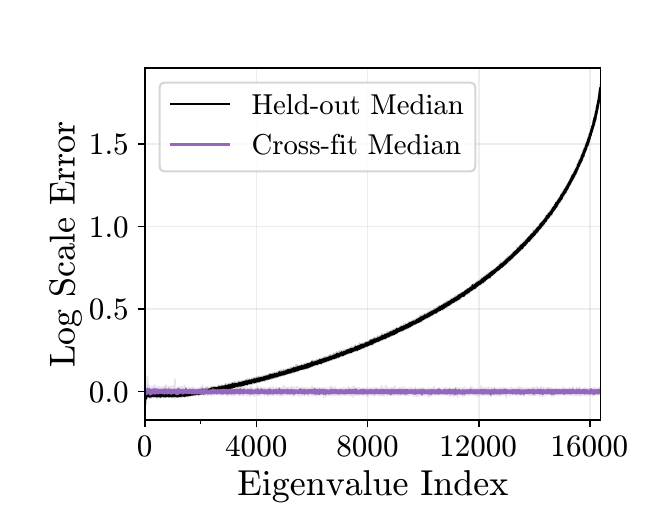}
\end{minipage}\hfill
\begin{minipage}[t]{.3\linewidth}
\centering
{\small\textbf{Value}}\\[2pt]
\includegraphics[width=\linewidth]{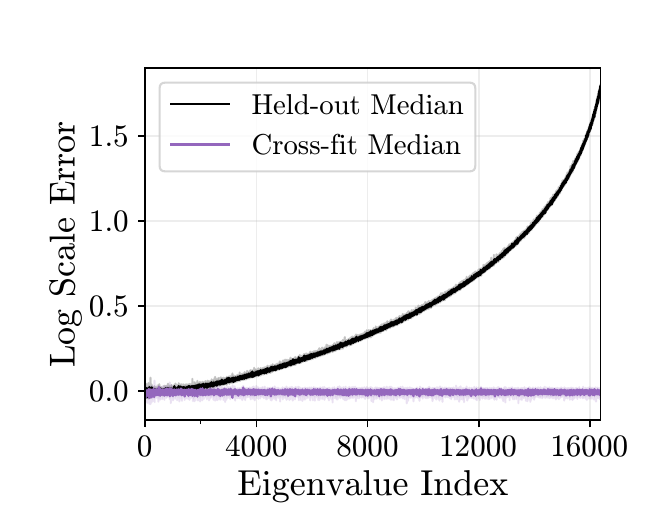}
\end{minipage}\hfill
\begin{minipage}[t]{.3\linewidth}
\centering
{\small\textbf{Output}}\\[2pt]
\includegraphics[width=\linewidth]{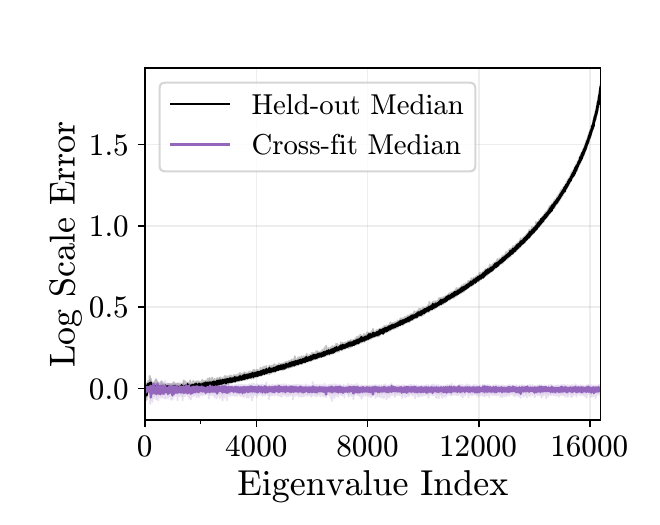}
\end{minipage}
\end{minipage}\\[7pt]
\begin{minipage}{\linewidth}
\centering
\begin{minipage}[t]{.3\linewidth}
\centering
\textbf{Query}\\[2pt]
\includegraphics[width=\linewidth]{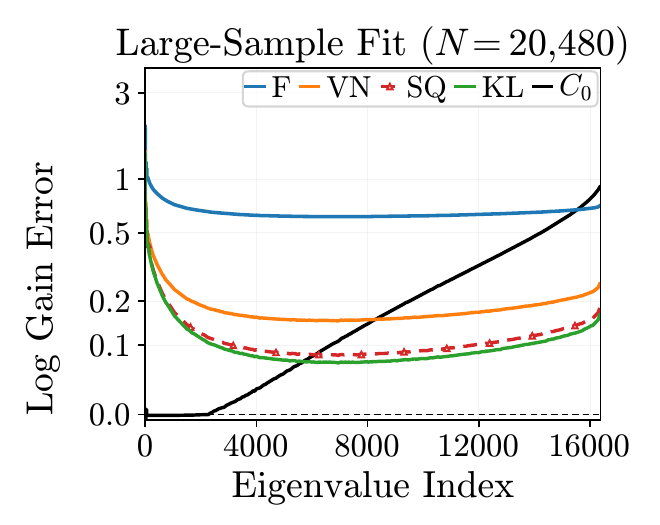}
\end{minipage}\hfill
\begin{minipage}[t]{.3\linewidth}
\centering
\textbf{Value}\\[2pt]
\includegraphics[width=\linewidth]{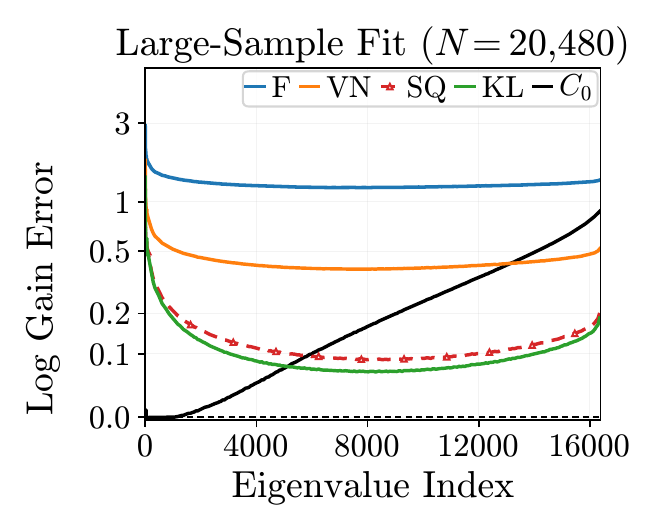}
\end{minipage}\hfill
\begin{minipage}[t]{.3\linewidth}
\centering
\textbf{Output}\\[2pt]
\includegraphics[width=\linewidth]{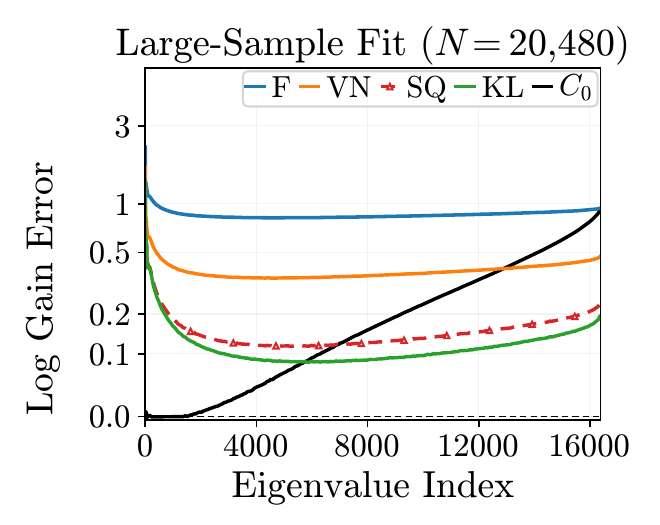}
\end{minipage}
\end{minipage}\\[7pt]
\begin{minipage}{\linewidth}
\centering
\begin{minipage}[t]{.3\linewidth}
\centering
\includegraphics[width=\linewidth]{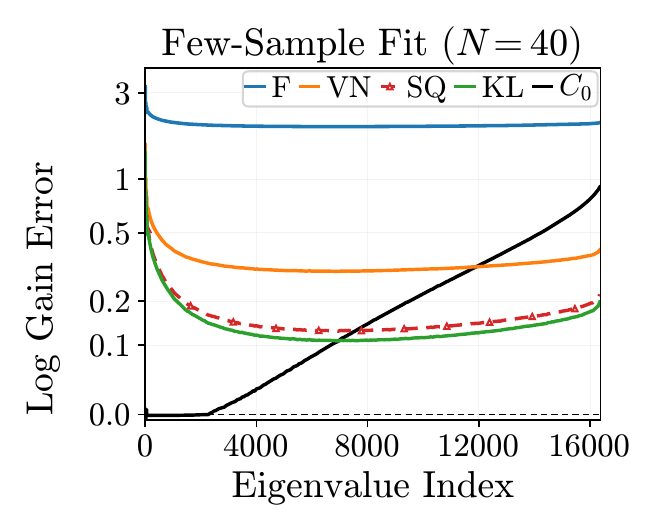}
\end{minipage}\hfill
\begin{minipage}[t]{.3\linewidth}
\centering
\includegraphics[width=\linewidth]{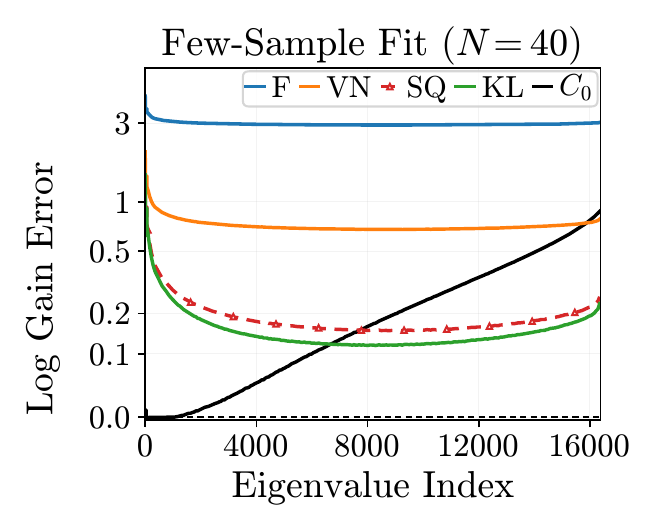}
\end{minipage}\hfill
\begin{minipage}[t]{.3\linewidth}
\centering
\includegraphics[width=\linewidth]{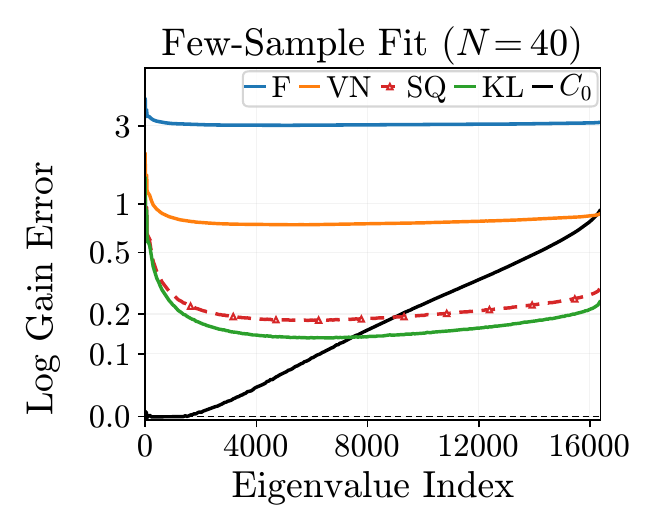}
\end{minipage}
\end{minipage}\\[7pt]
\begin{minipage}{\linewidth}
\centering
\begin{minipage}[t]{.3\linewidth}
\centering
\includegraphics[width=\linewidth]{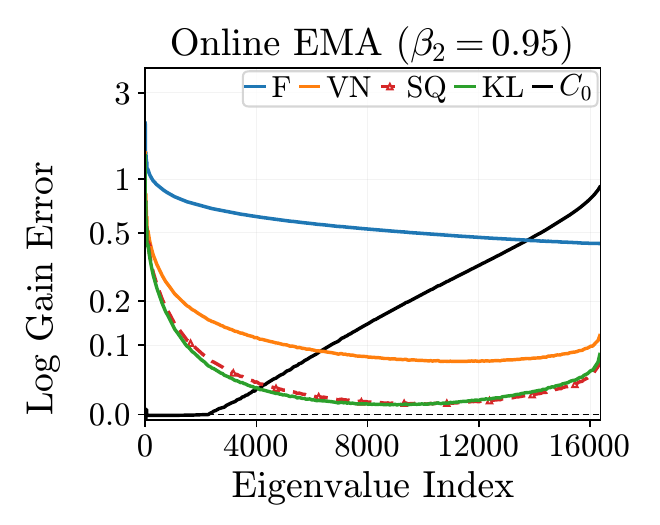}
\end{minipage}\hfill
\begin{minipage}[t]{.3\linewidth}
\centering
\includegraphics[width=\linewidth]{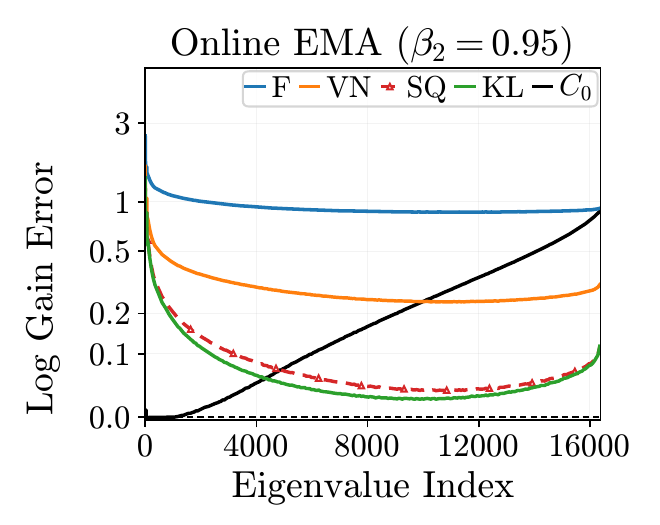}
\end{minipage}\hfill
\begin{minipage}[t]{.3\linewidth}
\centering
\includegraphics[width=\linewidth]{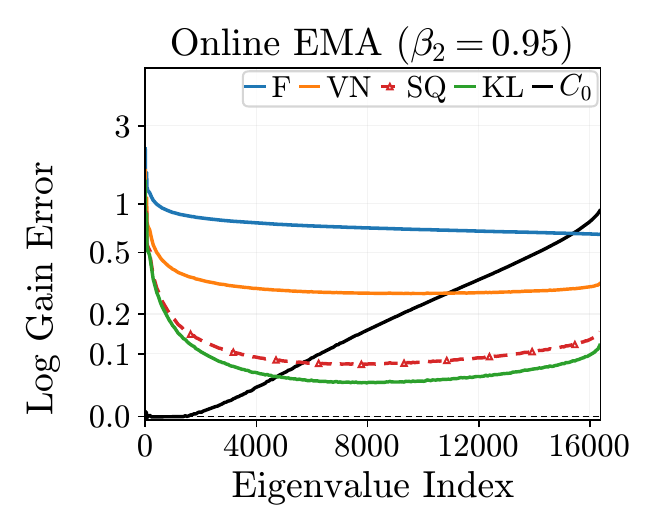}
\end{minipage}
\end{minipage}\\[7pt]
\end{appendixfigure}
\FloatBarrier

\clearpage

\section{GPT-2 Training Details and Full Results}
\label{app:training_protocol}
\begingroup
\setlength{\intextsep}{6pt plus 1pt minus 1pt}
\setlength{\textfloatsep}{8pt plus 2pt minus 2pt}
\setlength{\floatsep}{6pt plus 1pt minus 1pt}
\setlength{\abovecaptionskip}{4pt}
\setlength{\belowcaptionskip}{3pt}

\paragraph{Models and data.}
The $12$-layer models use the architecture from~\citep{modded_nanogpt_2024}.
FineWeb provides 3B training tokens.
Each run processes $2.67$B tokens with sequence length $1024$ and a global batch of $512$ sequences.

\paragraph{Training settings.}
Grid searches at model widths up to $768$ found nearly identical optimal learning rates across the tested optimizers.
We therefore use a shared learning rate across optimizers at each larger width without further searches.
Learning-rate tuning can absorb an overall gain offset, but cannot correct gain errors that vary across directions, layers, or checkpoints.
Within each width, methods share initialization, data order, and learning rate (\cref{tab:training_final_losses}).
We follow \citep{lin2026understanding} for the other hyperparameters.
The schedule uses $250$ warmup steps, a constant-rate phase, and $1450$ steps of linear decay.
The embedding and other non-matrix valued parameters use AdamW with learning rate $0.0036$, $\beta=(0.9,0.95)$, and no weight decay.

\begin{table}[!htbp]
\centering
\caption{Final validation loss after $5,100$ steps; one run per configuration.}
\label{tab:training_final_losses}
\small
\setlength{\tabcolsep}{4pt}
\begin{tabular}{rrrrrrr}
\toprule
\textbf{Width} & \textbf{Params (M)} & \textbf{LR} & \textbf{F} & \textbf{VN} & \textbf{SQ} & \textbf{KL}  \\
\midrule
256 & 22.3 & 0.0024 & 3.8281 & 3.7573 & \textbf{3.7450} & 3.7475 \\
512 & 63.5 & 0.0024 & 3.5082 & 3.4377 & \textbf{3.4173} & 3.4180  \\
768 & 123.6 & 0.0018 & 3.3993 & 3.2876 & 3.2661 & \textbf{3.2660} \\
1024 & 202.5 & 0.0012 & 3.3528 & 3.1979 & 3.1824 & \textbf{3.1769} \\
1536 & 417.0 & 0.0008 & 3.3111 & 3.0972 & 3.0817 & \textbf{3.0758} \\
\bottomrule
\end{tabular}
\end{table}
\FloatBarrier

\paragraph{Training results.}
\cref{tab:training_final_losses} supplements \cref{tab:gpt2_width_sweep} with parameter counts and learning rates.
\cref{fig:training_full_curves} shows the validation-loss trajectories from step 2,000 to the end of training.

\begin{figure}[H]
\centering
\includegraphics[width=\linewidth]{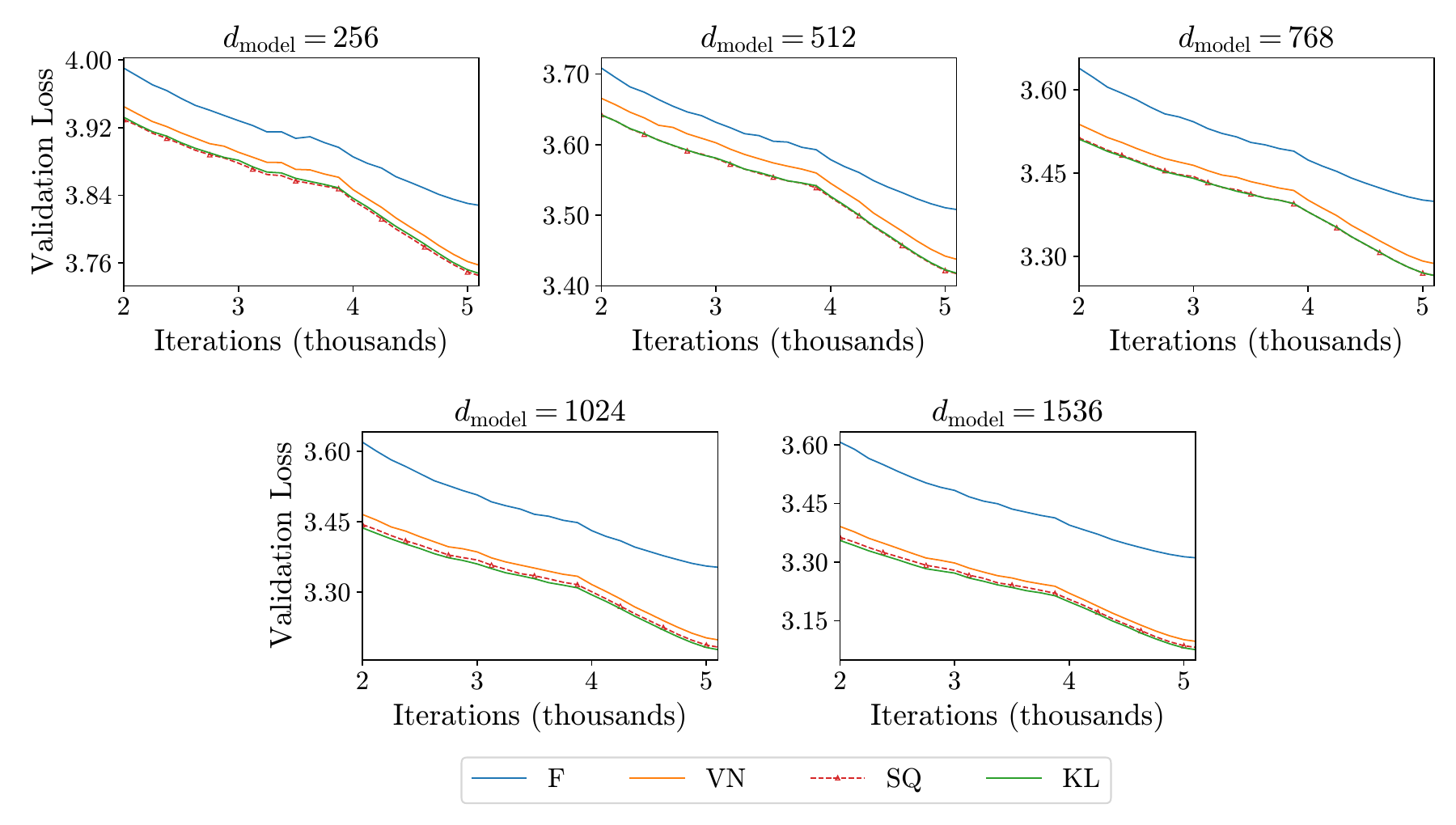}
\caption{Validation loss from step $2,000$ at all five widths.}
\label{fig:training_full_curves}
\end{figure}
\FloatBarrier

\endgroup

\end{document}